\documentclass[11pt,twoside]{article}

\usepackage{fullpage}

\usepackage{epsf}
\usepackage{fancyhdr}
\usepackage{graphics}
\usepackage{graphicx}
\usepackage{psfrag}
\usepackage{comment}

\usepackage[linesnumbered,ruled]{algorithm2e}
\DontPrintSemicolon	

\usepackage{color}

\usepackage[utf8]{inputenc}

\usepackage{amsthm}
\usepackage{amsfonts}
\usepackage{amsmath}
\usepackage{amssymb,bbm}
\bibliographystyle{abbrv}



\setlength{\textwidth}{\paperwidth}
\addtolength{\textwidth}{-6cm}
\setlength{\textheight}{\paperheight}
\addtolength{\textheight}{-4cm}
\addtolength{\textheight}{-1.1\headheight}
\addtolength{\textheight}{-\headsep}
\addtolength{\textheight}{-\footskip}
\setlength{\oddsidemargin}{0.5cm}
\setlength{\evensidemargin}{0.5cm}




\newcommand{\event}{\mathcal{E}}



\newcommand{\calo}{\ensuremath{\mathcal{O}}}



\newcommand{\thetastar}{\ensuremath{\theta^*}}
\newcommand{\sdstar}{\ensuremath{\sigma^*}}




\newcommand{\NORMAL}{\ensuremath{\mathcal{N}}}


\newcommand{\brackets}[1]{\left[ #1 \right]}
\newcommand{\parenth}[1]{\left( #1 \right)}

\newcommand{\abss}[1]{\left| #1 \right |}









\newcommand{\Rspace}{\ensuremath{\mathbb{R}}}

\newcommand{\ball}{\ensuremath{\mathbb{B}}}
\newcommand{\sphere}{\ensuremath{\mathbb{S}}}


\newcommand{\samopmlr}{\ensuremath{M}_{n,\text{mlr}}}
\newcommand{\popopmlr}{\ensuremath{M}_{\text{mlr}}}
\newcommand{\samopind}{\ensuremath{M}_{n,\text{ind}}}
\newcommand{\popopind}{\ensuremath{M}_{\text{ind}}}






\newcommand{\matsnorm}[2]{|\!|\!| #1 | \! | \!|_{{#2}}}
\newcommand{\vecnorm}[2]{\left\| #1\right\|_{#2}}

\newcommand{\enorm}[1]{\ensuremath{\| #1 \|}} 

\newcommand{\opnorm}[1]{\ensuremath{\matsnorm{#1}{\tiny{\mbox{op}}}}}



\newcommand{\Exs}{\ensuremath{{\mathbb{E}}}}
\newcommand{\Prob}{\ensuremath{{\mathbb{P}}}}











\newtheoremstyle{named}{}{}{\itshape}{}{\bfseries}{.}{.5em}{\thmnote{#3's }#1}
\theoremstyle{named}

\theoremstyle{plain}

\newtheorem{theorem}{Theorem}
\newtheorem{remark}{Remark}

\newtheorem{lemma}{Lemma}

\newtheorem{corollary}{Corollary}

\theoremstyle{definition}

\newlength{\widebarargwidth}

\makeatletter
\long\def\@makecaption#1#2{
        \vskip 0.8ex
        \setbox\@tempboxa\hbox{\small {\bf #1:} #2}
        \parindent 1.5em  
        \dimen0=\hsize
        \advance\dimen0 by -3em
        \ifdim \wd\@tempboxa >\dimen0
                \hbox to \hsize{
                        \parindent 0em
                        \hfil
                        \parbox{\dimen0}{\def\baselinestretch{0.96}\small
                                {\bf #1.} #2
                                }
                        \hfil}
        \else \hbox to \hsize{\hfil \box\@tempboxa \hfil}
        \fi
        }
\makeatother


\long\def\comment#1{}
\definecolor{carnelian}{rgb}{0.7, 0.11, 0.11}
\definecolor{battleshipgrey}{rgb}{0.52, 0.52, 0.51}
\definecolor{darkgray}{rgb}{0.66, 0.66, 0.66}
\definecolor{indiagreen}{rgb}{0.07, 0.53, 0.03}
\definecolor{darkgreen}{rgb}{0.0, 0.2, 0.13}
\definecolor{darkspringgreen}{rgb}{0.09, 0.45, 0.27}
\definecolor{dukeblue}{rgb}{0.0, 0.0, 0.61}
\definecolor{olivedrab7}{rgb}{0.24, 0.2, 0.12}
\definecolor{darkblue}{rgb}{0.0, 0.0, 0.55}
\definecolor{darkscarlet}{rgb}{0.34, 0.01, 0.1}
\definecolor{candyapplered}{rgb}{1.0, 0.03, 0.0}
\definecolor{ao(english)}{rgb}{0.0, 0.5, 0.0}
\definecolor{applegreen}{rgb}{0.55, 0.71, 0.0}



\usepackage{url}
\usepackage[colorlinks=True,linkcolor=magenta,citecolor=blue,urlcolor=blue,pagebackref=true,backref=true]
{hyperref}
\renewcommand*{\backref}[1]{\ifx#1\relax \else Page #1 \fi}
\renewcommand*{\backrefalt}[4]{%
    \ifcase #1 \footnotesize{(Not cited.)}%
    \or        \footnotesize{(Cited on page~#2.)}%
    \else      \footnotesize{(Cited on pages~#2.)}%
    \fi}

\usepackage{nicefrac}

\usepackage{chngpage}

 \usepackage{tabularx}%

\usepackage{enumitem}
\usepackage{booktabs}
\usepackage{pbox}

\usepackage{caption,subcaption}

\usepackage{etoc}
\usepackage{mathtools}

\usepackage{fullpage}

\allowdisplaybreaks


\makeatletter
\long\def\@makecaption#1#2{
        \vskip 0.8ex
        \setbox\@tempboxa\hbox{\small {\bf #1:} #2}
        \parindent 1.5em  
        \dimen0=\hsize
        \advance\dimen0 by -3em
        \ifdim \wd\@tempboxa >\dimen0
                \hbox to \hsize{
                        \parindent 0em
                        \hfil 
                        \parbox{\dimen0}{\def\baselinestretch{0.96}\small
                                {\bf #1.} #2
                                } 
                        \hfil}
        \else \hbox to \hsize{\hfil \box\@tempboxa \hfil}
        \fi
        }
\makeatother

\begin{document}

\etocdepthtag.toc{mtchapter}
\etocsettagdepth{mtchapter}{subsection}
\etocsettagdepth{mtappendix}{none}


\begin{center}
{\bf{\Large{On the Minimax Optimality of the EM Algorithm for Learning Two-Component Mixed Linear Regression}}}

\vspace*{.2in}
 {\large{
 \begin{tabular}{ccc}
  Jeong Yeol Kwon$^{\diamond}$ & Nhat Ho$^{\dagger}$ &  Constantine Caramanis$^{\diamond}$ \\
 \end{tabular}

}}

\vspace*{.2in}

 \begin{tabular}{c}
 Department of Electrical and Computer Engineering $^\diamond$\\
 Depatment of Statistics and Data Sciences$^\dagger$ \\
 University of Texas, Austin
 \end{tabular}

\vspace*{.2in}

\today

\vspace*{.2in}

\begin{abstract}
  We study the convergence rates of the EM algorithm for learning two-component mixed linear regression under all regimes of signal-to-noise ratio (SNR). We resolve a long-standing question that many recent results have attempted to tackle: we completely characterize the convergence behavior of EM, and show that the EM algorithm achieves minimax optimal sample complexity under all SNR regimes. In particular, when the SNR is sufficiently large, the EM updates converge to the true parameter $\theta^{*}$ at the standard parametric convergence rate $\calo((d/n)^{1/2})$ after $\calo(\log(n/d))$ iterations. In the regime where the SNR is above $\calo((d/n)^{1/4})$ and below some constant, the EM iterates converge to a $\calo({\rm SNR}^{-1} (d/n)^{1/2})$ neighborhood of the true parameter, when the number of iterations is of the order $\calo({\rm SNR}^{-2} \log(n/d))$. In the low SNR regime where the SNR is below $\calo((d/n)^{1/4})$, we show that EM converges to a $\calo((d/n)^{1/4})$ neighborhood of the true parameters, after $\calo((n/d)^{1/2})$ iterations. Notably, these results are achieved under mild conditions of either random initialization or an efficiently computable local initialization. By providing tight convergence guarantees of the EM algorithm in middle-to-low SNR regimes, we fill the remaining gap in the literature, and significantly, reveal that in low SNR, EM changes rate, matching the $n^{-1/4}$ rate of the MLE, a behavior that previous work had been unable to show.

\end{abstract}
\end{center}

\section{Introduction} 
\label{sec:introduction}
The expectation-maximization (EM) algorithm is a general-purpose heuristic to compute a maximum-likelihood estimator (MLE) for problems with missing information \cite{Rubin-1977, Jeff_Wu-1983, redner1984mixture}. In general, computing the MLE is intractable due to the non-concave nature of log-likelihood functions in the presence of missing data. The EM algorithm iteratively computes a tighter lower bound on log-likelihood functions, with each iteration no more complex than solving a maximum-likelihood (ML) problem without missing data. Due to its simplicity and broad success in practice, EM is one of the most popular methods-of-choice in a variety of applications \cite{Xu_Jordan-1995, Jordan-2000, Chen_2008_biometrika, Chen_2009}. 

Recent years have witnessed remarkable progress in establishing theory describing the non-asymptotic convergence of EM  to the true parameters on canonical examples such as a mixture of Gaussian distributions and mixed linear regression (see Prior Art below). In such models, a key factor in the analysis is the separation between components, or the ``signal strength''. Most prior work has studied strongly separated instances (high SNR) and established linear convergence of the EM algorithm with the standard parametric statistical rate $n^{-1/2}$. In contrast, the understanding of the EM algorithm in the weakly separated settings (low SNR), especially mixed linear regression, remains incomplete.

\textbf{Our contributions:} In this paper, we aim to fill the remaining gap in the literature with the minimax optimal sample complexity of the EM algorithm for learning two-component mixed linear regression in the weakly separated regime. In so doing, we provide a complete picture of the EM algorithm under all signal-to-noise ratio (SNR) regimes for symmetric two-component mixed linear regression, namely, $\frac{1}{2} \NORMAL(-X^\top \thetastar, (\sdstar)^2) + \frac{1}{2} \NORMAL(X^\top \thetastar, (\sdstar)^2)$ where $\sigma^{*} = 1$ is given and $X$ follows the standard multivariate normal distribution in $d$ dimensions. We define SNR as $\eta := \enorm{\theta^*}$ since $\sigma^* = 1$. Notably, our results are obtained under mild conditions of either random initialization or an efficiently computable local initialization. While simplified, the model is complex enough to capture the most interesting behaviors of the EM algorithm for learning a mixed linear regression with two components, and reveals statistical behaviors in the low-to-middle SNR regimes that previous analysis had missed. In summary, our contributions are as follows.
\begin{enumerate}
    \item \textbf{High-to-middle SNR regimes}: when $(d/n)^{1/4} \lesssim \enorm{\theta^*}$ (up to some logarithmic factor), the EM updates converges to $\thetastar$ within a neighborhood of $\calo(\max\{1, \enorm{\theta^*}^{-1}\} (d/n)^{1/2})$ after $\calo(\max \{1, \enorm{\theta^*}^{-2}\} \log(n/d))$ number of iterations. 
    \item \textbf{Low SNR regime}: when $\enorm{\thetastar} \lesssim (d/n)^{1/4}$ (up to some logarithmic factor), the EM algorithm converge to $\thetastar$ within a neighborhood of $\calo((d/n)^{1/4})$ when the number of iterations is of the order of $\calo((n/d)^{1/2})$.
    \item \textbf{Global Convergence}: We demonstrate that EM converges from {\it any} randomly initialized point with high probability. Furthermore, we do not require sample-splitting in our analysis.
\end{enumerate}

While we discuss the tightness of our result in a great detail in Section~\ref{subsection:tightness}, we briefly explain the significance of our results. We focus primarily on two aspects of the EM algorithm: (i) statistical rate, and (ii) computational complexity. In the high SNR regime, we have linear convergence to true parameters within $\sqrt{d/n}$ rate as noted previously in the literature. In contrast, in the low SNR regime when $\enorm{\theta^*} \lesssim (d/n)^{1/4}$, the statistical rate is $(d/n)^{1/4}$. We explain this transition in statistical rate with a convergence property of the population EM in the middle-to-low SNR regimes. The upper bound given by EM matches the known lower bound for this problem in all SNR regimes \cite{chen2014convex}. For the computational complexity, the number of iterations increases quadratically in the inverse of SNR until SNR reaches $(d/n)^{1/4}$. Interestingly, the number of iterations is naturally interpolated at $\text{SNR} = (d/n)^{1/4}$ from $\enorm{\theta^*}^{-2} \log(n/d)$ to $\sqrt{n/d}$. More in-depth discussions on the results (e.g., detailed comparison to previous works, proof techniques we use, etc.) are provided in Section~\ref{subsection:tightness}.


\subsection{Prior Art}
While the classical results on the EM algorithm only guaranteed asymptotic convergence to {\it stationary points} \cite{Jeff_Wu-1983}, the seminal work~\cite{Siva_2017} proposed a general framework to study a non-asymptotic convergence of the EM algorithm to {\it true parameters}. Motivated by this work, there has been a flurry of work studying the convergence of the EM algorithm to the true parameters for various kinds of regular mixture models (see e.g., \cite{yi2014alternating, yi2016solving, Hsu-nips2016, Sarkar_nips2017, Daskalakis_colt2017, kwon2020converges, Raaz_Ho_Koulik_2018_second, kwon2020algorithm}). Most of the work in this line require strong separation compared to the noise level, i.e., considers the high SNR regime. Using this condition, it establishes linear convergence of EM to parameter estimates that lie within $(d/n)^{1/2}$-radius around the true location parameters. In contrast, relatively little understanding is available when different components in a mixture model are weakly separated (i.e., middle-to-low SNR). In particular, even for simple settings of two-component mixed linear regression that we consider in this work, our understanding on the EM algorithm still remains incomplete, for as we show, not only the techniques, but also the conclusions of past analysis no longer hold in the weakly separated regime. 

The first convergence guarantees for EM under mixed linear regression was established in a noise-free setting~\cite{yi2014alternating, yi2016solving}. Subsequent results succeeded in treating the noisy setting (see \cite{Siva_2017}) for a mixture of two linear regressions, when the the signal strength $\enorm{\theta^*}$ is significantly larger than the noise variance $\sigma^*$ (high SNR). Work in~\cite{kwon2020converges} extended the results in~\cite{Siva_2017} and~\cite{yi2016solving} to a more general setting of learning a mixture of $k$-component linear regressions when the SNR is $\Omega(k)$. However, it has not been obvious how to extend any of these results to the weakly separated regimes. 

Recently, \cite{kwon2019global} has established the global convergence of the EM algorithm for learning a mixture of two linear regressions in all SNR regimes. While their result guarantees convergence of EM in all SNR regimes, the characterization of this convergence falls short in two aspects: (i) their analysis relies on the sample-splitting, (ii) their result is sub-optimal in terms of SNR in low SNR regime. In order to elaborate more on the second aspect, the statistical rate in \cite{kwon2019global} is given as $O(\eta^{-6} n^{-1/2})$ given that the sample size $n \gtrsim \eta^{-6}$ is sufficiently large. However, it is known that in the limit setting of $\eta \rightarrow 0$, the rate of MLE slows down to $n^{-1/4}$~\cite{Chen1992, Ho-Nguyen-AOS-17, ho2019rate}. The result in \cite{kwon2019global} fails to capture this important property in relation to EM, and gives little insight on what happens when there is a large overlap between components. Our results tighten the sub-optimal analysis for middle SNR regime in~\cite{kwon2019global} and fill in the remaining gap in the literature by providing a tight convergence guarantee of the EM algorithm in low SNR regime.

In a closely related problem of learning mixtures of two Gaussians, \cite{Raaz_Ho_Koulik_2018, Raaz-misspecified, Raaz_Ho_Koulik_2018_second} recently studied an extreme case of the over-specified mixture models, {\it i.e.,} there is no separation between two components. However, their analysis is restricted to strictly over-specified settings, and it has not been obvious to extend their result to weakly-separated models. In another recent work, \cite{wu2019randomly} has studied the EM algorithm for learning a mixture of two weakly-separated location Gaussians, establishing a minimax rate of the EM algorithm after $O(\sqrt{n/ d})$ iterations in middle-to-low SNR regimes. However, their result requires the initialization to be already within a small Euclidean ball of $(d/n)^{1/4}$-radius, which is very restrictive. Our result does not suffer from small initialization issue as in \cite{wu2019randomly}. Furthermore, our proof strategy can be applied to resolve the open issue with small initialization in~\cite{wu2019randomly}. 



We note in passing that the problem of solving mixed linear regressions is an interesting problem by itself. It arises in a number of applications \cite{de1989mixtures, grun2007applications}, and has been extensively studied with various algorithms proposed (see e.g., \cite{chaganty2013spectral, chen2014convex, sedghi2016provable, yi2016solving, li2018learning, chen2019learning, karmalkar2019list,raghavendra2020list}). The special case of a mixture of two-component linear regressions is by now well understood \cite{yi2014alternating, chen2014convex, kwon2019global, ghosh2020alternating}. In this work, rather than solving a mixed linear regression itself, we focus on the rigorous study of the EM algorithm.

\textbf{Organization:} 
The remainder of our paper is organized as follows. In Section~\ref{sec:independent_setting}, we first present the setup of EM algorithm for learning symmetric two-component mixed linear regression. Then, we present the convergence rates of EM iterates under all regimes of SNR with either random initialization or computable local initialization. Finally, we discuss the tightness of the results. We present the proof sketch of the results in Section~\ref{sec:proof}. We conclude the paper in Section~\ref{sec:conclusion} while deferring the proofs of the main results in the appendices.

\section{Convergence rates of the EM algorithm}
\label{sec:independent_setting}
We first formulate symmetric mixed linear regression with two components and EM updates for this model in Section~\ref{subsec:problem_setup}. Then, we state our main results with the convergence behaviors of EM algorithm under all regimes of SNR in Section~\ref{subsec:main_results}. Finally, we provide a detailed discussion with the tightness of the results in Section~\ref{subsection:tightness}.
\subsection{Problem setup}
\label{subsec:problem_setup}
We assume that the data $(X_{1}, Y_{1}), \ldots, (X_{n}, Y_{n})$ are generated from a symmetric two-component mixed linear regression, whose density function has the following form:
\begin{align}
    \label{eq:mixed_regression_known_noise}
    g_{\text{true}}(x, y) : = \bigr( \frac{1}{2}f(y| - (\theta^{*})^{\top} x, \sigma^{*}) + \frac{1}{2} f(y| (\theta^{*})^{\top} x, \sigma^{*}) \bigr) \bar{f}(x),
\end{align}
where $\sigma^{*} = 1$ is given and $\theta^{*}$ is an unknown parameter. Furthermore, we assume that $\bar{f}(x)$ is the density of standard multivariate Gaussian distribution, i.e., $X \sim \NORMAL(0, I_{d})$. In order to estimate $\theta^{*}$, we fit the data by using symmetric two-component mixed linear regression, which is given by:
\begin{align}
    \label{eq:em_independence_fit_known_noise}
    g_{\text{fit}}(x, y; \theta) : = \bigr( \frac{1}{2} f(y| - \theta^{\top}x, \sigma^{*}) + \frac{1}{2} f(y| \theta^{\top}x, \sigma^{*}) \bigr) \bar{f}(x).
\end{align}
It is clear that $g_{\text{fit}}(x, y; \theta^{*}) = g_{\text{true}}(x, y)$. A common approach to obtain an estimator for $\thetastar$ is by using maximum likelihood esimation (MLE). However, given that the log-likelihood function of symmetric two-component mixed linear regression is highly non-concave, the MLE does not have a closed-form expression. EM is a popular iterative algorithm to approximate the MLE. Given fitted model~\eqref{eq:em_independence_fit_known_noise}, simple algebra shows that the EM update for $\theta$ can be written as follows:
\begin{align}
    \label{eq:EM_updates_mixed_regression_known_noise}
    \theta_{n}^{t + 1} & = \parenth{ \frac{1}{n} \sum_{i = 1}^{n} X_{i} X_{i}^{\top}}^{-1} \parenth{\frac{1}{n} \sum_{i = 1}^{n} \tanh \parenth{\frac{Y_{i} X_i^\top \theta_n^t}{{\sigma^*}^2} } Y_i X_i },
\end{align}
where the hyperbolic function $\text{tanh}(x) : = (\exp(x) - \exp(-x))/(\exp(x) + \exp(-x))$ for all $x \in \Rspace$. In order to facilitate the ensuing argument, let us the denote population and finite-sample EM operators by Eqns. \ref{eq:sample_operator_mlr} and \ref{eq:population_operator_mlr}, respectively, as given below:
\begin{align}
    M_{mlr} (\theta) & : = \Exs[XY\tanh(YX^\top {\theta})], \label{eq:sample_operator_mlr} \\
    M_{n, mlr} (\theta) & : = \parenth{\frac{1}{n} \sum_i X_i X_i^\top}^{-1} \parenth{\frac{1}{n} \sum_i X_i Y_i \tanh(Y_i X_i^\top {\theta})}. \label{eq:population_operator_mlr}
\end{align}
\paragraph{Motivation from experiments:} In Figure~\ref{figure:em_behavior}, we present the statistical rate and optimization complexity of EM algorithm under different regimes of SNR. We set $d=5$ and initialized the estimator in the neighborhood of the true parameters such that $\theta^0 = \theta^* + r u$, where $r = \max \{1, \enorm{\theta^*} \} \cdot 0.1$ and $u$ is a random unit vector. For measuring the statistical rate, the EM algorithm runs with different size of samples $n \in \{128, 180, 256, ...\}$ (approximately $\sqrt{2}$ times increased) and the final error is averaged over $5,000$ independent runs. The stopping criterion is the change in estimators being less than $0.0001$ in $l_2$ norm. In Figure \ref{figure:em_behavior} (a), we observe the standard $n^{-1/2}$ rate in the high SNR regime, and $n^{-1/4}$ rate in the low SNR regime. Interestingly, we can see a clear transition in the statistical rate when SNR = 0.3 as $n$ increases. This explains how the low SNR regime is defined $\enorm{\theta^*} \lesssim (d/n)^{1/4}$: the meaning of low SNR depends on how many samples we have, not on the absolute value that can be computed from a problem instance. 

We also look at the optimization complexity in Figure \ref{figure:em_behavior} (b, c). We run the EM algorithm with fixed sample size $n = 32768$. Estimation error $\enorm{\theta_n^t - \theta^*}$ in all iteration steps are averaged over $5,000$ independent runs. In the high SNR regime, note that the $y$-axis is in log-scale and we can see the linear convergence. In contrast, in the middle-to-low SNR regimes, we can observe that the convergence of the EM algorithm is no longer linear, and significantly slowed down. 
\begin{figure}[t]
    \centering
    \begin{tabular}{ccc}
        \includegraphics[width=48mm]{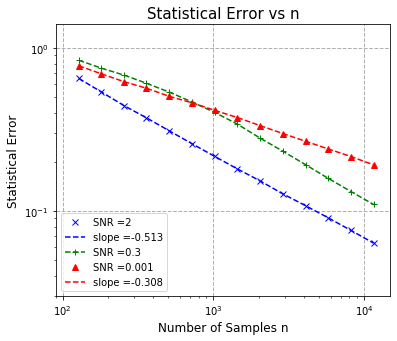} 
        \label{fig:rate_em} &
        \includegraphics[width=48mm]{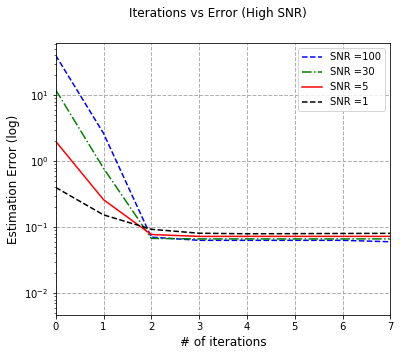}&
        \includegraphics[width=48mm]{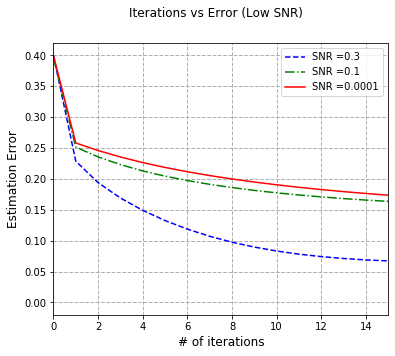} \\
        (a) & (b) & (c)
    \end{tabular}
    \caption{Convergence behavior of the EM algorithm for the fitted model \eqref{eq:mixed_regression_known_noise} when $d = 5$: (a) statistical rate ($\enorm{\theta_n^t - \thetastar}$ at the last iteration) in various SNRs (b) linear convergence in high SNR regime (c) slow convergence in low SNR regime.} 
    \label{figure:em_behavior}
\end{figure}

\subsection{Main results}
\label{subsec:main_results}
In this section, we state our main results with the convergence behaviors of the EM algorithm under different regimes of SNR. Our first result assumes a good initialization and focuses on the statistical optimality of the EM algorithm in the last iterations. We can use the standard spectral method to get such a good initialization (see Appendix \ref{Appendix:2MLR_init_spectral} for guarantees given by the spectral initialization). Then, with a mild condition on SNR and permission to use a simple variant of EM, our second result shows that EM converges globally to the true parameter with the same optimal statistical rates.

Throughout the paper, we assume that $n \ge Cd$ for sufficiently large constant $C>0$. Our analysis is divided into two cases when we are in the middle-high SNR regimes and low SNR regime. We state our first main theorem:
\begin{theorem}
    \label{theorem:EM_mlr_upper_bound}
    (a) (Middle-High SNR regimes) Suppose $\enorm{\theta^*} \ge C_0 (d \log^2(n/\delta) /n)^{1/4}$ for some large universal constant $C_0 > 0$. In this regime, suppose we run the EM algorithm starting from well-initialized $\theta_n^0$ such that $\enorm{\theta_n^0} \ge 0.9 \enorm{\theta^*}$ and $\cos \angle(\theta^*, \theta_n^0) \ge 0.95$. Then, for any $\delta > 0$ there exist universal constants $C_1, C_2 > 0$ such that the EM updates~\eqref{eq:EM_updates_mixed_regression_known_noise} give $\theta_n^t$ for $\theta^*$ which satisfies
    \begin{align*}
        \enorm{\theta_n^t - \theta^*} \le C_1 \max \{1, \enorm{\theta^*}^{-1} \} (d \log^2(n \enorm{\theta^*}/\delta) /n)^{1/2}, 
    \end{align*}
    with probability at least $1 - \delta$ after $t \ge C_2 \max \{1, \enorm{\theta^*}^{-2}\} \log(n\enorm{\theta^*}/d)$ iterations.
    
    \noindent
    (b) (Low SNR regime) When $\enorm{\theta^*} \le C_0 (d \log^2(n/\delta) /n)^{1/4}$, there exist universal constants $C_3, C_4 > 0$ such that the EM updates~\eqref{eq:EM_updates_mixed_regression_known_noise} initialized with $\enorm{\theta_n^0} \le 0.2$ return $\theta_n^t$ which satisfies
    \begin{align*}
        \enorm{\theta_n^t - \theta^*} \le C_3 (d \log^2 (n/\delta) /n)^{1/4}, 
    \end{align*}
    with probability at least $1 - \delta$ after $t \ge C_4 \log(\log(n/d)) \sqrt{n/(d \log^2 (n/\delta))} $ iterations.
\end{theorem}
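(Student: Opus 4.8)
My plan is to analyze the finite-sample EM recursion by splitting each step into the deterministic population operator $M_{mlr}$ and a stochastic perturbation $M_{n,mlr} - M_{mlr}$, in the spirit of the population-to-sample paradigm. The first step is to exploit the rotational symmetry of the model: writing $u = \theta^*/\enorm{\theta^*}$, the population map preserves the plane spanned by $u$ and any perpendicular direction, so the dynamics reduce to tracking the scalar projection $\alpha_t = \langle \theta_n^t, u\rangle$ onto the signal direction together with the perpendicular magnitude $\beta_t = \enorm{\theta_n^t - \alpha_t u}$. Using Gaussian integration and the series expansion of $\tanh$, I would extract from $M_{mlr}$ a linear part and a cubic correction along each coordinate, identify $\theta^*$ as the fixed point, and read off (i) the local contraction rate of $M_{mlr}$ near $\theta^*$ and (ii) a deviation bound $\sup_{\theta}\enorm{M_{n,mlr}(\theta) - M_{mlr}(\theta)}$ over the region visited by the iterates. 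To dispense with sample-splitting, this deviation bound must hold \emph{uniformly} in $\theta$, so it would be proved by an empirical-process argument — covering the relevant region and exploiting the Lipschitz and sub-Gaussian structure of $\theta \mapsto XY\tanh(YX^\top\theta)$ — which is what introduces the $\log^2(n/\delta)$ factors.

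For part (a), I expect the local contraction rate near $\theta^*$ to be $1 - c\min\{1, \enorm{\theta^*}^2\}$: in high SNR it is a genuine contraction bounded away from $1$, whereas for $(d/n)^{1/4} \lesssim \enorm{\theta^*} < 1$ the cubic term supplies contraction only of order $\enorm{\theta^*}^2$. Separately, because $\tanh(YX^\top\theta^*) \approx YX^\top\theta^*$ when the signal is weak, the fluctuation near $\theta^*$ scales like $\min\{1,\enorm{\theta^*}\}\sqrt{d/n}$ — growing with the signal when it is small, and saturating at $\sqrt{d/n}$ in high SNR. Feeding these into the recursion $\enorm{\theta_n^{t+1}-\theta^*} \le \kappa\,\enorm{\theta_n^t - \theta^*} + \epsilon_n$, with $\kappa = 1 - c\min\{1,\enorm{\theta^*}^2\}$ and $\epsilon_n \asymp \min\{1,\enorm{\theta^*}\}\sqrt{d/n}$, yields fixed-point radius $\epsilon_n/(1-\kappa) \asymp \max\{1,\enorm{\theta^*}^{-1}\}\sqrt{d/n}$ and iteration count $\asymp \max\{1,\enorm{\theta^*}^{-2}\}\log(n\enorm{\theta^*}/d)$, matching the statement. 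A preliminary phase is needed to drive $\beta_t$ down and improve the angular alignment from the initialization before this clean local contraction applies.

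For part (b), both the iterate and $\theta^*$ are of size at most $O((d/n)^{1/4})$, so it suffices to show $\enorm{\theta_n^t}$ descends into an $O((d/n)^{1/4})$ ball; the angle is irrelevant since such a ball already contains $\theta^*$. Here the population operator obeys $\enorm{M_{mlr}(\theta)} \le \enorm{\theta}(1 - c\enorm{\theta}^2)$ for $\enorm{\theta} \le 0.2$ (neutral linear part plus cubic shrinkage), while the fluctuation is \emph{multiplicative}, of order $\enorm{\theta}\sqrt{d/n}$. The resulting scalar recursion $a_{t+1} \le a_t(1 - c\,a_t^2 + c'\sqrt{d/n}\,\text{poly-log})$ tracks the continuous flow $\dot a = -c\,a^3$ until the cubic shrinkage $a_t^3$ is balanced by the multiplicative noise $a_t\sqrt{d/n}$, which occurs precisely at $a_t \asymp (d/n)^{1/4}$ — explaining both the statistical floor and why it is $n^{-1/4}$ rather than $n^{-1/2}$. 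Integrating the flow from $a_0 = 0.2$ down to $(d/n)^{1/4}$ gives $\asymp a_T^{-2} = \sqrt{n/d}$ iterations, with the extra $\log\log(n/d)$ factor arising from a dyadic decomposition of the trajectory that controls the noise scale by scale.

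The main obstacle I anticipate is the interaction of no sample-splitting with the sublinear low-SNR dynamics. Since the same $n$ samples drive all $\Theta(\sqrt{n/d})$ iterations, the iterates are statistically dependent on the data and fresh-sample per-step bounds are unavailable; everything must rest on a single uniform deviation bound that is simultaneously (i) tight enough that its multiplicative $\sqrt{d/n}$ form does not destroy the delicate cubic balance at the $(d/n)^{1/4}$ scale, and (ii) valid over the entire very long trajectory. Establishing this uniform, signal-adaptive concentration, and then proving the perturbed cubic recursion neither stalls above the floor nor is driven to grow over the long horizon, is the technical heart of the argument.
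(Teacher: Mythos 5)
Your overall architecture --- population contraction plus a uniform, multiplicative-in-radius deviation bound, combined with a localization/dyadic argument, and the cubic-flow balance $a^3 \asymp a\sqrt{d/n}$ giving the $(d/n)^{1/4}$ floor and the $\sqrt{n/d}\,\log\log(n/d)$ iteration count --- is essentially the paper's proof for the middle and low SNR regimes, including the key observation that in low SNR the algorithm cannot distinguish $\theta^*$ from $0$, so one tracks $\enorm{\theta_n^t}$ rather than $\enorm{\theta_n^t - \theta^*}$.

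The genuine gap is in the high SNR portion of part (a). You assert that the fluctuation near $\theta^*$ ``saturates at $\sqrt{d/n}$'' for large $\enorm{\theta^*}$, but the empirical-process machinery you propose (covering plus Lipschitz plus sub-Gaussian structure) cannot deliver this. The natural Lipschitz bound on the centered process $\theta \mapsto \frac{1}{n}\sum_i X_iY_i\bigl[\tanh(Y_iX_i^\top\theta)-\tanh(Y_iX_i^\top\theta^*)\bigr]$ is $|Y_i|\,|X_i^\top(\theta-\theta^*)|$, and since $\Exs[Y^2XX^\top] = I + 2\theta^*{\theta^*}^\top$ has operator norm $1+2\enorm{\theta^*}^2$, this route yields a deviation of order $\enorm{\theta-\theta^*}\,\enorm{\theta^*}\sqrt{d/n}$ (or, bounding over a ball of radius $r \gtrsim \enorm{\theta^*}$, of order $\enorm{\theta^*}\sqrt{d/n}$) --- inflated by a factor $\enorm{\theta^*}$ relative to what the theorem needs. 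Your recursion then either requires $n \gtrsim \enorm{\theta^*}^2 d$, which is not assumed (the theorem allows arbitrarily large $\enorm{\theta^*}$ with only $\log(n\enorm{\theta^*}/\delta)$ factors and $n \ge Cd$), or produces a statistical error off by $\enorm{\theta^*}$. The paper closes this hole with a per-sample decomposition over the good events $\event_1 = \{2|X^\top(\theta^*-\theta)|\le|X^\top\theta^*|\}$, $\event_2 = \{|X^\top\theta^*|\ge 2\tau\}$, $\event_3=\{|Z|\le\tau\}$: on their intersection the E-step weights are essentially correct hard labels, so $|\tanh(YX^\top\theta)-\tanh(YX^\top\theta^*)|\le e^{-\tau^2}$, and with $\tau = \Theta\bigl(\sqrt{\log(n\enorm{\theta^*}/\delta)}\bigr)$ the contribution of each bad event is controlled separately; this is what decouples the fluctuation from the signal magnitude and yields the bound $(\enorm{\theta-\theta^*}+1)\sqrt{d\log^2(n\enorm{\theta^*}/\delta)/n}$. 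A secondary, fixable omission: in the middle SNR case, plugging the fluctuation ``near $\theta^*$'' into a single geometric recursion presumes the iterates already lie in a ball of radius $O(\enorm{\theta^*})$ around $\theta^*$; since the initialization only lower-bounds $\enorm{\theta_n^0}$, you need the same epoch-wise localization you invoke in part (b) (the paper runs epochs over dyadic levels of $\enorm{\theta-\theta^*}$, applying the radius-dependent deviation bound separately in each epoch) --- otherwise the argument degrades to the suboptimal rate of the earlier sample-splitting analyses.
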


The proof sketch of Theorem~\ref{theorem:EM_mlr_upper_bound} is in Section~\ref{sec:proof} while the full proof is in Appendix~\ref{subsec:proof_main_theorem}. Interestingly, the upper bound given by Theorem \ref{theorem:EM_mlr_upper_bound} matches the known lower bounds given for all SNR regimes in~\cite{chen2014convex}, and explains detailed behavior that interpolates between different separation regimes. Note that, the additional requirement $\enorm{\theta_n^0} \ge 0.9 \enorm{\theta^*}$ under middle-high SNR regimes is to prevent the analysis to become over-complicated (see Appendix~ \ref{subsec:proof:global_stability_convergence} for the arguments for starting from well-aligned small estimators). Furthermore, the initialization condition $\enorm{\theta_{n}^{0}} \leq 0.2$ in the low SNR regime is not restrictive. In Appendix~\ref{subsec:decrease_norm}, we demonstrate that when we initialize with large norm such that $\enorm{\theta_n^0} \ge 0.2$, in a finite number of steps the norm of EM updates becomes smaller than 0.2.

Next, we present our second result that does not rely on the warm start, but requires slightly more involved mechanisms. We call the following variant of EM as ``Easy-EM" operator \cite{kwon2019global}:
\begin{align}
    \label{eq:easy_em}
    M_{easy} (\theta) := \frac{1}{n} \sum_{i = 1}^{n} X_i Y_i \tanh(Y_i X_i^\top \theta).
\end{align}
Note that the only difference is the absence of the inverse of the sample covariance matrix. Our second theorem guarantees the global convergence of the EM algorithm with minimax optimality:
\begin{theorem}
    \label{theorem:EM_mlr_global}
    Given $C > 0$, suppose that $\enorm{\theta^*} \le C$. Let $\theta_n^0$ be a randomly initialized vector in $\mathbb{R}^d$ space such that the direction of $\theta_n^0$ is randomly sampled from a uniform distribution on the unit sphere. The norm of initial estimator can be any non-zero constant such that $\enorm{\theta_n^0} \ge c (d \log^2(n/\delta) / n)^{1/4}$ for some universal constant $c > 0$. 
    
    (a) In the middle-to-high SNR regimes, there exist universal constants $C_1, C_2, C_3 > 0$ such that when $C_1 (d \log^2(n/\delta) / n)^{1/4} \le \enorm{\theta^*} \le C$, with probability at least $1 - \delta$, we have 
    \begin{align*}
        \enorm{\theta_n^t - \theta^*} \le C_2 \max \{1, \enorm{\theta^*}^{-1} \} (d \log^2(n/\delta)/n)^{1/2},
    \end{align*}
    after we first run the Easy-EM algorithm~\eqref{eq:easy_em} for $C_3 \max\{ 1, \enorm{\theta^*}^{-2} \} \log (d)$ iterations, and then run the standard EM algorithm~\eqref{eq:sample_operator_mlr} for $C_3 \max\{ 1, \enorm{\theta^*}^{-2} \} \log (n/d)$ iterations.
    
    (b) In the low SNR regime when $\enorm{\theta^*} \le C_1 (d \log^2(n/\delta) / n)^{1/4}$, there exist universal constants $C_4, C_5 > 0$ such that with probability at least $1 - \delta$, we have
    \begin{align*}
        \enorm{\theta_n^t - \theta^*} \le C_4 (d \log^2(n/\delta)/ n)^{1/4}, 
    \end{align*}
    after we run either Easy-EM or standard EM for $t \ge C_5 \log (\log (n/d)) \sqrt{n/(d \log^2(n/\delta))}$ iterations.
\end{theorem}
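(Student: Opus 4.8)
The plan is to bootstrap Theorem~\ref{theorem:EM_mlr_upper_bound}: once an iterate satisfies the warm-start conditions $\enorm{\theta_n^t} \ge 0.9\,\enorm{\theta^*}$ and $\cos\angle(\theta^*,\theta_n^t) \ge 0.95$ (in the middle-to-high SNR case), the local theorem already delivers the claimed final rate, so the entire content of Theorem~\ref{theorem:EM_mlr_global} is to show that a \emph{randomly} initialized iterate reaches this warm-start region, and that this can be arranged without sample-splitting. The natural coordinates are the decomposition $\theta = a\,\theta^*/\enorm{\theta^*} + b\,v$ with $v \perp \theta^*$ and $\enorm{v}=1$; by the rotational invariance of the model the population operator $M_{mlr}$ preserves the plane $\mathrm{span}(\theta^*,v)$, reducing the population dynamics to a two-dimensional system in $(a,b)$. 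The two quantities I would track are the alignment $b/a$ (equivalently $\cos\angle$) and the parallel magnitude $a$. The norm-floor assumption $\enorm{\theta_n^0}\ge c(d\log^2(n/\delta)/n)^{1/4}$ enters here: it guarantees that the initial signal exceeds the sampling floor, so that the alignment phase can actually make progress.

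First I would analyze the population Easy-EM map. Using $Y = \pm X^\top\theta^* + \varepsilon$ and the oddness of $\tanh$, one obtains closed-form (or tightly two-sided) expressions for the parallel and perpendicular components of $M_{mlr}(\theta)$, from which I extract two facts: (i) a contraction of the alignment, $b_{t+1}/a_{t+1} \le \rho\, b_t/a_t$, where $\rho$ is a constant bounded away from $1$ in the high-SNR regime and degrades to $\rho \approx 1 - c\,\enorm{\theta^*}^2$ as $\enorm{\theta^*}\to 0$; and (ii) that the parallel component $a$ is driven toward $\enorm{\theta^*}$ and remains on the correct scale $\Theta(\max\{1,\enorm{\theta^*}\})$. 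Since a uniformly random direction in $\mathbb{R}^d$ has $b_0/a_0 = \Theta(\sqrt{d})$ and $\cos\angle \approx 1/\sqrt{d}$ with high probability, fact~(i) shows that $\calo(\log d)$ iterations suffice to push $\cos\angle$ above $0.95$ when $\enorm{\theta^*}$ is a constant, while in the low-SNR regime the per-step factor $1-c\enorm{\theta^*}^2$ forces on the order of $\enorm{\theta^*}^{-2}$ iterations, matching the stated $\sqrt{n/d}$ count (up to logarithmic factors) once $\enorm{\theta^*}\asymp (d/n)^{1/4}$.

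Next I would pass from population to sample. Because the iterates depend on the data, I cannot apply fresh concentration at each step; instead I would establish a \emph{uniform} deviation bound $\sup_{\theta \in \mathbb{B}} \enorm{M_{easy}(\theta) - M_{mlr}(\theta)} \lesssim \sqrt{d\log^2(n/\delta)/n}$ over the relevant ball via a covering/empirical-process argument. Here the choice of Easy-EM is essential: its iterate is a single empirical average of the sub-exponential summands $X_i Y_i \tanh(Y_i X_i^\top\theta)$, with no sample-covariance inverse whose deviations would multiply the (growing) iterate norm and spoil the bound. Feeding this per-step error into the population recursion for $(a,b)$, and using that $a$ stays $\Theta(\max\{1,\enorm{\theta^*}\})$, converts the population statements into sample statements: after the first Easy-EM phase the warm-start conditions hold with probability $1-\delta$, at which point part~(a) switches to standard EM and invokes Theorem~\ref{theorem:EM_mlr_upper_bound}, and part~(b) reads off the $(d\log^2(n/\delta)/n)^{1/4}$ radius directly.

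The main obstacle is the interaction between slow dynamics and sample error in the low-SNR regime: one needs $\calo(\sqrt{n/d})$ iterations while the per-step sample error is $\sqrt{d/n}$, so a naive linear accumulation would sum to $\Theta(1)$ and destroy the guarantee. The crux of the argument is therefore to show that the errors do \emph{not} accumulate linearly: because the alignment recursion is contractive and the parallel component is confined to an $\calo((d/n)^{1/4})$-scale stable region, the sampling perturbations enter a geometric series (or are absorbed by the stability of the fixed point) rather than a running sum, leaving the final error at the statistical floor $(d/n)^{1/4}$ rather than at $\sqrt{n/d}\cdot\sqrt{d/n}$. Making this non-accumulation rigorous, uniformly over the entire trajectory and without sample-splitting, is where the real work lies.
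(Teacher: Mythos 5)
Your high-level plan (reduce to Theorem~\ref{theorem:EM_mlr_upper_bound}, run an Easy-EM alignment phase from $\cos\angle(\theta_n^0,\theta^*)\approx 1/\sqrt{d}$, and replace sample-splitting with a uniform deviation bound) is the same as the paper's, but the quantitative core of the alignment phase is missing, and with the bound you propose the argument fails at the stated SNR threshold. You feed the full $\ell_2$ uniform deviation $\sup_{\enorm{\theta}\le r}\enorm{M_{easy}(\theta)-\popopmlr(\theta)}\lesssim \sqrt{d\log^2(n/\delta)/n}$ into the $(a,b)$ recursion. But the population gain in the cosine per step is only $c\enorm{\theta^*}^2\cos\alpha_t \approx c\enorm{\theta^*}^2/\sqrt{d}$ at initialization, while an $\ell_2$-level perturbation allows the component along $\theta^*$ to move by the full $\sqrt{d\log^2(n/\delta)/n}$; requiring the gain to dominate the noise then forces $\enorm{\theta^*}^2 \gtrsim d\sqrt{\log^2(n/\delta)/n}$, which is a factor $\sqrt{d}$ stronger than the claimed threshold $\enorm{\theta^*}\ge C_1(d\log^2(n/\delta)/n)^{1/4}$. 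The paper's proof hinges on a \emph{dimension-free} deviation bound in the single fixed direction $u=\theta^*/\enorm{\theta^*}$, namely \eqref{eq:uniform_concentration_one_direction}: $\sup_{\enorm{\theta}\le r}\bigl|\frac{1}{n}\sum_i (X_i^\top u)Y_i\tanh(Y_iX_i^\top\theta)-\popopmlr(\theta)^\top u\bigr|\lesssim r\sqrt{\log^2(n/\delta)/n}$, so the per-step cosine error is $\epsilon_f/\sqrt{d}$ and the growth condition matches the $(d/n)^{1/4}$ threshold exactly. This is also the real reason Easy-EM is needed in the first phase (the inverse sample covariance mixes coordinates and destroys the one-direction bound — not primarily the "growing iterate norm" issue you cite), the reason the theorem assumes $\enorm{\theta^*}\le C$ (see Remark~\ref{Remark:global_angle_boundedness}), and it must be paired with a lower bound on $\enorm{\popopmlr(\theta)}$ (Lemma~\ref{lemma:norm_lower_bound}) to control the cosine denominator, which you assert ("$a$ stays on the correct scale") but never establish.

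Your treatment of part (b) is also routed through the wrong mechanism. You claim the alignment recursion with rate $1-c\enorm{\theta^*}^2$ yields the $\sqrt{n/d}$ iteration count in the low SNR regime, but there $\enorm{\theta^*}^2\lesssim\sqrt{d\log^2(n/\delta)/n}$, so the per-step population gain in the cosine is at most the same order as the statistical fluctuation: no alignment guarantee is possible, and none is needed, because the target radius $(d/n)^{1/4}$ already exceeds $\enorm{\theta^*}$. The paper instead shows (Appendix~\ref{subsec:decrease_norm}) that from any constant-norm initialization the iterate norm drops below $0.2$ in $O(1)$ steps, and then invokes Theorem~\ref{theorem:EM_mlr_upper_bound}(b); the $\log(\log(n/d))\sqrt{n/d}$ iteration count comes from the norm-shrinking localization epochs, not from angle dynamics. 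Relatedly, your resolution of the error-accumulation obstacle ("geometric series / stability of the fixed point") misses the essential structural feature that makes the localization work: the deviation bound must be \emph{proportional to the current scale} $r$ (Lemma~\ref{lemma:deviation_bound_mlr}). With the scale-independent bound $\sqrt{d\log^2(n/\delta)/n}$ you state, the norm recursion $\enorm{\theta_{t+1}}\le\enorm{\theta_t}-\enorm{\theta_t}^3+\epsilon_n$ stalls at $\enorm{\theta}\approx\epsilon_n^{1/3}\asymp(d/n)^{1/6}$, not at the claimed $(d/n)^{1/4}$.
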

The proof sketch of Theorem~\ref{theorem:EM_mlr_global} is in Section~\ref{sec:proof} while the full proof is in Appendix~\ref{Appendix:2MLR_init_global}. A few comments are in order. First, comparing to Theorem~\ref{theorem:EM_mlr_upper_bound}, we have an additional assumption for $\enorm{\theta^*}$ being bounded. This is required for a technical reason that arises from giving an uniform control on the deviation of Easy-EM operator in one direction when $\enorm{\theta^*}$ can be arbitrarily large (see Remark \ref{Remark:global_angle_boundedness} in Appendix \ref{subsec:proof:global_angle} for details). Second, in order to correctly estimate how many iterations we must run Easy-EM, we can check the value of $\frac{1}{n} \sum_{i = 1}^{n} Y_i^2 - 1$, since the expectation of this value is $\enorm{\theta^*}^2$. 
We note that Easy-EM is only introduced for a theoretical justification, and in practice we can just run the EM algorithm from a randomly initialized point. Finally, our condition on the norm of initial estimator is to ensure that the initial point is sufficiently far from zero. In practice, we use any constant $\Omega(1)$ for the norm of initial estimator. This is in stark contrast to the initialization of \cite{wu2019randomly} in which only very small initialization of order $\Theta((d/n)^{1/4})$ is allowed, which goes to 0 as $n \rightarrow \infty$.

\subsection{Tightness of the results}
\label{subsection:tightness}
In this section, we discuss in detail the tightness of our results in Theorem~\ref{theorem:EM_mlr_upper_bound} and Theorem~\ref{theorem:EM_mlr_global}.
\paragraph{Tightness of the result in the high SNR regime:}
In the high SNR regime, a minimax rate should guarantee exact recovery when the noise variance goes to zero. Our results obtain a statistical rate of $\sqrt{d \log^2 (n \enorm{\theta^*}/\delta)/ n}$. Note that, since we have rescaled to $\sigma^* = 1$, we should interpret the statistical rate of EM algorithm in the original scale where it is translated to $(\sigma^* \log(1/\sigma^*)) \sqrt{d \log^2 (n\enorm{\theta^*}/\delta)/ n}$. Therefore, we still guarantee the exact recovery as $\sigma^* \rightarrow 0$. We conjecture that a more careful and thorough analysis can also resolve even the logarithmic dependency on $\enorm{\theta^*}$, and leave it as future work. As mentioned earlier, there has been much recent interest in establishing the linear convergence and tight finite-sample error in high SNR regime \cite{yi2014alternating, yi2016solving, kwon2019global, kwon2020converges}. While all previous results are also minimax optimal in all parameters (up to logarithmic factors), as an artifact of their analysis, their results rely on sample-splitting, and thus do not in fact analyze the algorithm that is used in practice. Our results remove this artifact.


A very recent work in~\cite{ghosh2020alternating} has established a super-linear convergence of the EM algorithm in the noiseless setting (a.k.a.\ Alternating Minimization). In fact, we conjecture that their result can be extended to the noisy setting when SNR is high enough ({\it i.e.,} $\enorm{\theta^*} \gg 1$). The following lemma on the population EM operator~\eqref{eq:population_operator_mlr} gives a hint for a super-linear convergence in the high SNR regime:
\begin{lemma}
\label{lemma:super_linear_EM}
    If $C \sqrt{\log \enorm{\theta^*}} \le \enorm{\theta - \theta^*} \le \enorm{\theta^*} / 10$ for sufficiently large constant $C > 0$, then there exists a constant $c < 10$ such that
    \begin{align*}
        \enorm{\popopmlr(\theta) - \theta^*} \le c \enorm{\theta - \theta^*}^2/\enorm{\theta^*}.
    \end{align*}
\end{lemma}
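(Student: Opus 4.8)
The plan is to exploit that $\thetastar$ is a fixed point of the population operator, $\popopmlr(\thetastar)=\thetastar$, and to control the Jacobian of $\popopmlr$ along the segment from $\thetastar$ to $\theta$. Using the sign symmetry of the symmetric model, the expectation in \eqref{eq:sample_operator_mlr} reduces to a single-component form with $Y=X^\top\thetastar+W$, $W\sim\NORMAL(0,1)$ independent of $X\sim\NORMAL(0,I_d)$. Differentiating then gives the (PSD) Jacobian
\[
\nabla\popopmlr(\theta)=\Exs\!\left[Y^2\,\sech^2(YX^\top\theta)\,XX^\top\right],
\]
so that, writing $v:=\theta-\thetastar$ and $\theta_t:=\thetastar+tv$, the fundamental theorem of calculus yields $\popopmlr(\theta)-\thetastar=\bigl(\int_0^1\nabla\popopmlr(\theta_t)\,dt\bigr)v$. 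Hence it suffices to show $\opnorm{\nabla\popopmlr(\theta_t)}\le C\sqrt{\enorm{\theta_t-\thetastar}^2+1}/\enorm{\thetastar}$ for every $t\in[0,1]$: both the radial and the tangential super-linear contraction then follow at once, without having to treat the delicate radial cancellation separately.

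First I would reduce the operator norm to a scalar. Since $X$ is rotationally invariant and the integrand depends on $X$ only through $(X^\top\thetastar,X^\top\theta_t)$, the matrix $\nabla\popopmlr(\theta_t)$ acts as $\beta_t\,I$ on the $(d-2)$-dimensional subspace orthogonal to $\mathrm{span}\{\thetastar,\theta_t\}$, where
\[
\beta_t:=\Exs\!\left[Y^2\,\sech^2(YX^\top\theta_t)\right],
\]
and as a $2\times2$ matrix on $\mathrm{span}\{\thetastar,\theta_t\}$ whose entries I would bound by $O(\beta_t)$ using the same Gaussian reductions (the $\thetastar$-diagonal entry is in fact of smaller order $\enorm{v}^3/\enorm{\thetastar}^3$). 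This gives $\opnorm{\nabla\popopmlr(\theta_t)}\le C\beta_t$, collapsing the problem to estimating the single scalar $\beta_t$.

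The crux is the estimate $\beta_t\le C\sqrt{\enorm{\theta_t-\thetastar}^2+1}/\enorm{\thetastar}$. I would pass to the two-dimensional integral in the jointly Gaussian pair $(u,s):=(X^\top\thetastar,X^\top\theta_t)$ and the noise $W$, and exploit that $\sech^2(z)\le 4e^{-2|z|}$. The hypothesis $\enorm{v}\le\enorm{\thetastar}/10$ forces $u$ and $s$ to be strongly aligned, so $Ys$ is typically of order $\enorm{\thetastar}^2$ and $\sech^2(Ys)$ is negligible there; the only non-negligible contribution is the rare near-crossing region $\{|Ys|=O(1)\}$, of probability $\asymp\sqrt{\enorm{v}^2+1}/\enorm{\thetastar}$. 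Integrating $s$ out first turns the heavy weight $Y^2$ into $2|Y|$ tested against a conditional Gaussian density that confines $Y$ to scale $\sqrt{\enorm{\theta_t-\thetastar}^2+1}$ (not to scale $\enorm{\thetastar}$), which produces the claimed bound. The main obstacle lives here: one must rigorously control the competition between the quadratically growing factor $Y^2$ and the exponential localization of $\sech^2$, which requires truncating the tail of $Y$ at scale $\sqrt{(\enorm{v}^2+1)\log\enorm{\thetastar}}$ and showing the discarded mass is only polynomially large in $\enorm{\thetastar}$.

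Finally I would integrate over $t\in[0,1]$ with $\enorm{\theta_t-\thetastar}=t\enorm{v}$ to obtain
\[
\enorm{\popopmlr(\theta)-\thetastar}\le C\,\frac{\sqrt{\enorm{v}^2+1}}{\enorm{\thetastar}}\,\enorm{v}+(\text{tail-truncation corrections}).
\]
The lower bound $\enorm{v}\ge C_0\sqrt{\log\enorm{\thetastar}}$ is exactly what makes the quadratic term dominate: it guarantees $\sqrt{\enorm{v}^2+1}\le\sqrt{2}\,\enorm{v}$ (so the leading term is $\le C\sqrt2\,\enorm{v}^2/\enorm{\thetastar}$) and that the $\mathrm{poly}(\enorm{\thetastar})$-suppressed residual corrections are absorbed into $\enorm{v}^2/\enorm{\thetastar}$, after which every constant collapses into a single $c<10$. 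I expect the essential phenomenon $\opnorm{\nabla\popopmlr}\asymp\sqrt{\enorm{v}^2+1}/\enorm{\thetastar}$ to be the decisive step, with the $\sqrt{\log\enorm{\thetastar}}$ threshold serving as a convenient sufficient condition rather than a fundamental barrier.
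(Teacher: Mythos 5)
Your plan is correct in outline, but it is a genuinely different argument from the paper's. The paper never differentiates the operator: it uses the fixed-point identity to write $\popopmlr(\theta)-\thetastar=\Exs[XY(\tanh(YX^\top\theta)-\tanh(YX^\top\thetastar))]$, splits $Y$ into its noise and signal parts, applies Cauchy--Schwarz to reduce everything to the three scalar moments $\Exs[(X^\top u)^2|\Delta|]$, $\Exs[Z^2|\Delta|]$, $\Exs[(X^\top\thetastar)^2|\Delta|]$, and bounds each by a hard good/bad-event decomposition at threshold $\tau=\Theta(\sqrt{\log\enorm{\thetastar}})$: on the good event the $\tanh$ difference is at most $e^{-\tau^2}$, while the bad events have probability $\lesssim\enorm{\theta-\thetastar}/\enorm{\thetastar}$, $\tau/\enorm{\thetastar}$ and $e^{-\tau^2/2}$ respectively (the first via Lemma 1 of \cite{yi2014alternating}); multiplying the resulting moment bounds gives the constant $c=6$. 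Your route --- mean value theorem from the fixed point, $\popopmlr(\theta)-\thetastar=\bigl(\int_0^1\nabla\popopmlr(\theta_t)\,dt\bigr)(\theta-\thetastar)$ with $\nabla\popopmlr(\theta)=\Exs[Y^2\sech^2(YX^\top\theta)XX^\top]$, reduced by rotational invariance to the scalar estimate $\beta_t\lesssim\sqrt{1+\enorm{\theta_t-\thetastar}^2}/\enorm{\thetastar}$ --- is sound: the Jacobian formula, the fixed-point property, the block structure of the Jacobian, and the mechanism of integrating out $X^\top\theta_t$ first (so $Y^2\sech^2$ becomes $2|Y|$ times the Gaussian density at the crossing, with $Y$ conditionally of scale $\sqrt{1+\enorm{\theta_t-\thetastar}^2}$ rather than $\enorm{\thetastar}$) are all correct, and integrating in $t$ gives $O(\enorm{\theta-\thetastar}^2/\enorm{\thetastar})$. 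What each buys: the paper's proof requires no new Gaussian integral estimates and recycles existing lemmas, but its additive $\tau$-terms are exactly what force the hypothesis $\enorm{\theta-\thetastar}\gtrsim\sqrt{\log\enorm{\thetastar}}$; your proof concentrates all the difficulty in the single estimate on $\beta_t$ (the real labor, which you only sketch --- it plays the role of the paper's Lemma~\ref{lemma:simple_lemma_for_superlinear}), but it is structurally cleaner, handles the radial and tangential errors simultaneously through one PSD matrix, and would in fact prove a strictly stronger statement in which the $\sqrt{\log\enorm{\thetastar}}$ threshold is relaxed to a constant --- confirming your closing remark that the threshold is an artifact of the paper's decomposition rather than a real barrier.

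Two caveats. First, your parenthetical that the near-crossing region $\{|YX^\top\theta_t|=O(1)\}$ has probability $\asymp\sqrt{\enorm{\theta-\thetastar}^2+1}/\enorm{\thetastar}$ is incorrect as stated: the raw probability is of order $\log\enorm{\thetastar}\big/\bigl(\enorm{\thetastar}\sqrt{1+\enorm{\theta_t-\thetastar}^2}\bigr)$; what has the order you quote is the $Y^2$-weighted contribution, which is what your subsequent ``integrate $s$ out first'' computation actually produces, so nothing downstream breaks --- but the sentence should be fixed. Second, the lemma demands an explicit constant $c<10$, so you cannot simply let ``every constant collapse'': you must track the absolute constants in the $\beta_t$ bound, in the $2\times2$ in-span block (its largest entry is roughly $\sqrt{3\pi/2}\,\beta_t$, not $\beta_t$), and in $\int_0^1\sqrt{1+t^2\enorm{\theta-\thetastar}^2}\,dt\le 1+\enorm{\theta-\thetastar}/2$; the leading constant works out to roughly $2$--$3$, so there is ample room, but this bookkeeping is part of the work your sketch defers.
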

The proof of Lemma~\ref{lemma:super_linear_EM} is in Appendix~\ref{subsec:super_linear_EM}. This lemma implies that until $\enorm{\theta - \theta^*}$ drops from $O(\enorm{\theta^*})$ to $O(\sqrt{\log \enorm{\theta^*}})$, the population EM updates converge in a super-linear rate. While the super-linear convergence behavior is a very interesting phenomenon and deserves further exploration, it is beyond the scope of this paper. 
\paragraph{Tightness of the result in the middle-low SNR regimes:}
As discussed in the introduction, \cite{kwon2019global} has recently established a convergence of the EM algorithm in SNR regimes for model \eqref{eq:em_independence_fit_known_noise}. In particular, according to the result in \cite{kwon2019global}, the EM algorithm can achieve arbitrary $\epsilon$ accuracy if the sample size $n$ is large enough to compensate a low SNR $\eta := \enorm{\theta^*}/ \sigma^{*}$, {\it i.e.,} $\eta^{-6} / \epsilon^2 \lesssim n$. This sub-optimal result is an artifact of the technical approach used to relate the population and finite-sample EM operators. Specifically, the convergence rate of the population EM operator is given by $1 - \eta^2$. The finite-sample analysis then follows by analyzing the uniform deviation of finite-sample operators from population operators, which is in order of magnitude $\sqrt{d/n}$. In order to guarantee the progress toward $\theta^*$ in each step as well as to control the accumulation of statistical errors in all iterations, \cite{kwon2019global} required $n \gtrsim \eta^{-6}$ {\it per} iteration. The sample-splitting results in even worse total $n \gtrsim \eta^{-8}$ sample complexity in terms of SNR. Furthermore, nothing can be explained when the sample size is less than the threshold $\eta^{-8}$. This calls for a more refined and tighter analysis of the EM algorithm in middle-to-low SNR regimes. 

We adopt the localization argument used in \cite{Raaz_Ho_Koulik_2018, Raaz_Ho_Koulik_2018_second} where they establish the convergence behaviors of the EM algorithm under over-specified Gaussian mixtures, namely, no separation of the parameters. Unlike these previous studies, our analysis is not restricted to strictly over-specified instances, but spans all possible configuration of parameters. The core of the analysis has three parts: (i) refined convergence rate of the population EM operator $1 - \max \{ \enorm{\theta}^2 - \eta^2, \eta^2\}$, (ii) multi-level application of uniform deviation of finite-sample operators that is proportional to $\enorm{\theta} \sqrt{d/n}$, and (iii) localization arguments applied to different levels of $\enorm{\theta}$. The threshold that separates middle-SNR and low-SNR regimes is naturally found at $\eta^2 = \sqrt{d/n}$.


\paragraph{Global Convergence of (Easy) EM:} 
Global convergence of the EM algorithm for model~\eqref{eq:mixed_regression_known_noise} has been established in \cite{kwon2019global} using the idea of two-phase analysis where EM first converges in angle, and then converges in $l_2$ norm. In the initial stage of the EM iterations with a random initialization, \cite{kwon2019global} proposed a simple variant of the EM update~\eqref{eq:easy_em} to encourage the boosting of angle from $\cos \angle(\theta_n^0, \theta^*) = O(1/\sqrt{d})$. 
Our result removes the usage of sample-splitting in \cite{kwon2019global} and tightens the sub-optimal statistical rate in middle-to-low SNR regimes as in Theorem \ref{theorem:EM_mlr_upper_bound}.

In \cite{wu2019randomly}, the authors employed a similar idea of analyzing the growth of the signal strength in the $\theta^*$ direction for learning a two symmetric mixture of Gaussian distributions. However, in general the value itself in $\theta^*$ direction can indeed decrease if EM starts from large initialization. Therefore, they restricted the initialization to be within a very small radius of $\enorm{\theta_n^0} \approx (d/n)^{1/4}$ in all SNR (separation) regimes. While it does not degrade the overall computational complexity of the finite-sample EM algorithm, the convergence guarantee with such small initialization is not global in a true sense since if $n$ grows to infinity ({\it i.e.,} approach to the population setting), the initialization should be at 0, which is a saddle point of the log-likelihood. Theorem \ref{theorem:EM_mlr_global} resolves the open issue of small initialization in \cite{wu2019randomly} by analyzing the convergence in angle.


\subsection{Towards unknown variance and weight}
\label{subsection:unknown_extension}
In this section, we discuss the statistical behavior of the EM algorithm when either the variance $\sigma^{*} = 1$ or the mixing weight of the true density $g_{true}$ is unknown.
\paragraph{Unknown noise variance:} We first discuss the case when the variance $\sigma^{*}$ of regression noise is unknown. In this case, the EM updates for $\theta$ and $\sigma$ are as follows:
\begin{align}
    \bar{\theta}_{n}^{t + 1} & = \parenth{ \frac{1}{n} \sum_{i = 1}^{n} X_{i} X_{i}^{\top}}^{-1} \times \parenth{\frac{1}{n} \sum_{i = 1}^{n} \tanh \parenth{\frac{Y_{i} X_i^\top \theta_n^t}{(\bar{\sigma}_n^t)^2} } Y_i X_i }, \nonumber \\
    (\bar{\sigma}_n^{t+1})^2 & = \frac{1}{n} \sum_{i = 1}^{n} Y_{i}^2 - \bigr((\bar{\theta}_{n}^{t + 1})^{\top} X_{i} \bigr)^2. \label{eq:EM_updates_mixed_regression_unknown_noise}
\end{align}
The EM update $\bar{\theta}_{n}^{t + 1}$ in the unknown variance case depends on $\sigma_{n}^{t}$, which is updated at each iteration of the EM algorithm. It is different from the update for $\theta$ in the known variance case in equation~\eqref{eq:EM_updates_mixed_regression_known_noise}. Therefore, the overall analysis of the EM algorithm in the unknown variance case should be re-derived in the population level to get right contraction coefficients of the population EM updates. 

We would like to remark that the challenge with more unknowns arises from the convergence analysis in the population level, and the sample complexity analysis is irrelevant to whether we have more unknowns or not. In this section, we provide the statistical behavior of the EM algorithm under the low SNR regime and leave the complete analysis of the EM algorithm for future work. The localization technique used in the low SNR regime of known variance setting remains to be useful for obtaining the convergence and statistical rates of EM in the low SNR regime of unknown variance setting. It leads to the following result with the EM iterates in the low SNR regime.
\begin{theorem} \label{theorem:algebraic_independent}
(Low SNR regime of unknown variance case) There exist universal constants $C_0, C_1, C_2, C_3 > 0$ such that when $\enorm{\thetastar} \le C_0 (d \log^2(n/\delta)/n)^{1/4}$, starting from $\|\bar{\theta}_n^0\| \le 0.2$ and $|(\bar{\sigma}_n^0)^2 - 1| \le 0.04$, the EM updates \eqref{eq:EM_updates_mixed_regression_unknown_noise} return $(\bar{\theta}_{n}^{t}, \bar{\sigma}_{n}^{t})$ which satisfies
\begin{align*}
    \|\bar{\theta}_{n}^{t} - \theta^{*}\| \le C_1(d \log^2(n/\delta) /n)^{1/ 4}, \\
    | (\bar{\sigma}_{n}^{t})^2 - (\sigma^{*})^2 | \le C_2 (d \log^2(n/\delta)/n)^{1/2},
\end{align*}
with probability at least $1 - \delta$ after $t \ge C_3 \log (\log(n/d)) \sqrt{n/ (d \log^2(n/\delta))}$ iterations.
\end{theorem}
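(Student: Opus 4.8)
The plan is to adapt the localization-based analysis used for the low-SNR regime of the known-variance case (Theorem~\ref{theorem:EM_mlr_upper_bound}(b)) to the coupled location--scale dynamics of~\eqref{eq:EM_updates_mixed_regression_unknown_noise}. As stressed in Section~\ref{subsection:unknown_extension}, the genuinely new difficulty sits at the population level, since the finite-sample deviations are insensitive to whether $\sigma^*$ is known. The guiding observation is that in the low-SNR regime the scale update pins $(\bar\sigma_n^t)^2$ inside a narrow window around $(\sigma^*)^2 = 1$ for every $t$, so the location map is only a mild perturbation of the known-variance operator $\popopmlr$, and the cubic contraction driving the known-variance analysis survives.

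First I would carry out the population analysis of the coupled map. The location step is $\theta^{t+1} = \Exs[XY\tanh(YX^\top\theta^t/(\sigma^t)^2)]$, while for the scale, using $X \sim \NORMAL(0, I_d)$ together with $\Exs[Y^2] = \enorm{\thetastar}^2 + (\sigma^*)^2 = 1 + \eta^2$,
\begin{align*}
    (\sigma^{t+1})^2 = \Exs[Y^2] - \Exs\bigl[((\theta^{t+1})^\top X)^2\bigr] = 1 + \eta^2 - \enorm{\theta^{t+1}}^2.
\end{align*}
The initial conditions $\enorm{\bar\theta_n^0} \le 0.2$ and $|(\bar\sigma_n^0)^2 - 1| \le 0.04$, combined with $\eta^2 = \enorm{\thetastar}^2 \lesssim \sqrt{d/n}$, force $(\sigma^{t+1})^2$ into a window $[1-c, 1+c]$ with small $c$. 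I would then show by a coupled induction that this window is preserved: the norm bound $\enorm{\theta^t}\le 0.2$ persists because $\sigma^2\approx 1$ rescales the $\tanh$ argument by only a bounded factor, and the scale identity above keeps $(\sigma^{t+1})^2$ near $1$.

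With the scale pinned near $1$, the second step recovers the refined contraction for the location iterate. Taylor-expanding $\tanh(z) = z - z^3/3 + \dots$ at the small argument $YX^\top\theta/\sigma^2$ and taking expectations, the leading behaviour reproduces the cubic dynamics underlying the rate $1 - \max\{\enorm{\theta}^2 - \eta^2, \eta^2\}$ from the tightness discussion, with $1/\sigma^2 = 1 + \calo(c)$ contributing only a multiplicative perturbation that does not change the order of the contraction. The third step transfers this to finite samples via the localization argument: the uniform deviation of $\samopmlr$ from $\popopmlr$ scales like $\enorm{\theta}\sqrt{d/n}$ for the location and like $\sqrt{d/n}$ for the scale, so as $\enorm{\theta^t}$ shrinks across successive scales the statistical error shrinks with it, and the iterates can be tracked into a ball of radius $\calo((d/n)^{1/4})$. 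The extremely weak contraction $1 - \calo(\max\{\eta^2, \enorm{\theta}^2\}) = 1 - \calo(\sqrt{d/n})$ yields the $\sqrt{n/(d\log^2(n/\delta))}$ iteration count, with the extra $\log\log(n/d)$ factor accounting for the geometric passage through the localization levels. The scale bound then follows because $(\bar\sigma_n^t)^2 - (\sigma^*)^2$ reduces to the difference of squared norms $\enorm{\theta^t}^2 - \enorm{\thetastar}^2$, which is $\calo(\sqrt{d/n})$ once the location lies in the $(d/n)^{1/4}$ ball, plus the $\sqrt{d/n}$ deviation of the empirical second moments.

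The main obstacle is the coupled induction controlling the back-reaction of the scale error onto the location dynamics. Because the effective $\tanh$ argument is rescaled by $1/(\bar\sigma_n^t)^2$, an error in the variance estimate acts as a perturbation of the contraction coefficient, and one must rule out a feedback loop in which a drifting $\bar\sigma^2$ slows or reverses the location progress, which in turn feeds back into the scale update through $\enorm{\theta^{t+1}}$. The delicate point is to prove that both invariants---$\enorm{\bar\theta_n^t}\le 0.2$ and $(\bar\sigma_n^t)^2$ near $1$---hold \emph{simultaneously} at every iteration with high probability, uniformly over the $\calo(\sqrt{n/d})$ steps, which is exactly where the re-derivation of the population contraction coefficients flagged in Section~\ref{subsection:unknown_extension} becomes essential.
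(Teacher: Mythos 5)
Your proposal follows essentially the same route as the paper: exploit the identity $(\sigma^{t+1})^2 = 1 + \enorm{\thetastar}^2 - \enorm{\theta^{t+1}}^2$ (up to sampling error), view the location map as the known-variance operator with its $\tanh$ argument rescaled by $1/(1+\Delta)$, recover the cubic contraction by Taylor expansion, and finish with the low-SNR localization argument plus the $\enorm{\theta}\sqrt{d/n}$-type uniform deviation bounds. The ``coupled induction'' you flag as the main obstacle is dissolved in the paper's formulation: since the variance is an explicit function of the current location iterate, the paper substitutes it directly to define a one-variable operator $\popopind(\theta) = \Exs[XY\tanh(YX^\top\theta/(1+\enorm{\thetastar}^2-\enorm{\theta}^2))]$, so there is no independent scale state and no feedback loop to control --- only a bounded mismatch term between $\bar\sigma_n^2$ and $1+\Delta$, which is handled by the Lipschitz property of $\tanh$ and concentration of $\frac{1}{n}\sum_i Y_i^2 X_i X_i^\top$.
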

Unlike in the case of Gaussian mixtures with unknown variances~\cite{Raaz_Ho_Koulik_2018_second}, the statistical rate of EM updates for $\theta$ is $(d/n)^{1/4}$ for all $d \geq 1$. When $\theta^{*} = 0$, this coincides with the previous result on the rate of maximum likelihood estimation for over-specified Gaussian mixture of experts~\cite{ho2019rate}, where it is shown that the MLE rate of estimating $\theta^{*}$ is $n^{-1/4}$ as long as the link functions are algebraically independent, which is the case for the unknown variance setting, and the number of components of Gaussian mixtures of experts is over-specified. The proof of Theorem~\ref{theorem:algebraic_independent} is given in Appendix~\ref{appendix:proof:contraction_ind}.

\paragraph{Unknown mixing weights:} The extension to the unknown mixing weight can be more challenging, since the unbalanced mixing weight induces asymmetry in the landscape of the log-likelihood function. The asymmetry completely changes the population landscape of two-component mixed linear regression (e.g., there is a local maxima in the population log-likelihood for a mixture of two Gaussian distributions, which is absent in the symmetric setting~\cite{xu2018benefits}). It makes the analysis of the EM algorithm challenging even in the two-component settings of mixed linear regression. In high SNR regimes, we can avoid direct analysis of the optimization landscape and still can show the linear convergence of the EM iterates toward true parameters~\cite{kwon2020converges}. However, in middle-to-low SNR regimes, we cannot avoid the analysis of complicated landscape. The extension to unknown mixing weights is an interesting future direction.

\section{Overview of Techniques in Main Theorems}
\label{sec:proof}
\subsection{Proof Sketch of Theorem \ref{theorem:EM_mlr_upper_bound}}
In this section, we give a proof sketch of Theorem \ref{theorem:EM_mlr_upper_bound}. The full proof of Theorem~\ref{theorem:EM_mlr_upper_bound} is in Appendix~\ref{subsec:proof_main_theorem}. We need the following uniform deviation bound between sample and population EM operators:
    \begin{lemma} \label{lemma:deviation_bound_mlr}
        Given the population and finite-sample EM operators $\popopmlr$, $\samopmlr$ in equations~\eqref{eq:population_operator_mlr} and~\eqref{eq:sample_operator_mlr}, for any given $r > 0$, there exists a universal constant $c> 0$ such that we have
        \begin{align}
            \label{eq:uniform_deviation_mlr}
            \Prob \parenth{\sup \limits_{\enorm{\theta} \le r} \enorm{\samopmlr(\theta) - \popopmlr(\theta)} \le c r \sqrt{d \log^2 (n/\delta)/n}} \ge 1 - \delta.
        \end{align}
    \end{lemma}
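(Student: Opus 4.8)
The plan is to separate the deviation into a part governed by the sample covariance and a part governed by the unnormalized score, and to spend the bulk of the effort on a uniform empirical-process bound for the latter. Writing $\widehat\Sigma_n := \frac{1}{n}\sum_i X_iX_i^\top$, $g_n(\theta) := \frac{1}{n}\sum_i X_iY_i\tanh(Y_iX_i^\top\theta)$ and $g(\theta) := \Exs[XY\tanh(YX^\top\theta)] = \popopmlr(\theta)$, I would first record the algebraic identity
\begin{align*}
\samopmlr(\theta) - \popopmlr(\theta) = \widehat\Sigma_n^{-1}\bigl(g_n(\theta) - g(\theta)\bigr) + \bigl(\widehat\Sigma_n^{-1} - I_d\bigr)g(\theta),
\end{align*}
which isolates the two sources of fluctuation. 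The second term is controlled by the deviation of $\widehat\Sigma_n$ together with a crude magnitude bound on the population operator, while the first term carries the genuinely new work.

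For the covariance factor, standard concentration for sample covariances of isotropic Gaussians gives $\opnorm{\widehat\Sigma_n - I_d} \lesssim \sqrt{d/n} + \sqrt{\log(1/\delta)/n}$ on an event of probability at least $1-\delta/2$; since $n \ge Cd$, this is at most a small constant, so on the same event $\opnorm{\widehat\Sigma_n^{-1}} \le 2$ and $\opnorm{\widehat\Sigma_n^{-1} - I_d} \lesssim \sqrt{d\log(1/\delta)/n}$. To handle $g(\theta)$, I use that $\popopmlr(0) = 0$ (by oddness of $\tanh$) together with the one-Lipschitzness of $\tanh$, which gives $\enorm{g(\theta_1) - g(\theta_2)} \le \Exs[\enorm{X}^2 Y^2]\,\enorm{\theta_1 - \theta_2}$ and hence $\enorm{g(\theta)} \lesssim \enorm{\theta} \le r$ uniformly on the ball. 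Combining, the second term is at most $\order{r\sqrt{d\log(1/\delta)/n}}$ uniformly in $\theta$.

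The heart of the argument is the uniform bound $\sup_{\enorm{\theta}\le r}\enorm{g_n(\theta)-g(\theta)} \lesssim r\sqrt{d\log^2(n/\delta)/n}$. The linear-in-$r$ scaling, rather than the trivial $\sqrt{d/n}$ obtained from $|\tanh|\le 1$, is exactly what the localization argument in the low-SNR regime requires, and it comes from exploiting both faces of $\tanh$: the bound $|\tanh(u)|\le\min(1,|u|)$ makes each summand of magnitude $\lesssim \enorm{X_i}^2Y_i^2\,\enorm{\theta}$ and, in any fixed direction $v$, of conditional variance $\lesssim \enorm{\theta}^2 \lesssim r^2$, so the per-coordinate concentration scale is $r\sqrt{d/n}$. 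Concretely, I would (i) truncate $\enorm{X_i}\lesssim \sqrt{d}+\sqrt{\log(n/\delta)}$ and $|Y_i|\lesssim \sqrt{\log(n/\delta)}$, which holds simultaneously for all $n$ samples with probability $1-\delta/4$ and is the origin of the $\log^2$ factor; (ii) pass to the variational form $\sup_{\enorm{\theta}\le r,\,\enorm{v}\le 1}\langle g_n(\theta)-g(\theta),v\rangle$ and build an $\epsilon$-net over the product of the two balls, whose log-cardinality is $\lesssim d\log(r/\epsilon)$; (iii) apply a Bernstein-type bound to each fixed $(\theta,v)$ using the truncated boundedness and the $\order{r^2}$ variance proxy, and union bound over the net; and (iv) control the off-net remainder by the Lipschitz-in-$\theta$ property above, whose random Lipschitz constant $\frac{1}{n}\sum_i\enorm{X_i}^2Y_i^2$ concentrates near $\Exs[\enorm{X}^2Y^2]$. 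Multiplying this bound by $\opnorm{\widehat\Sigma_n^{-1}}\le 2$ and adding the second-term bound yields the claim after adjusting constants.

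I expect the main obstacle to be obtaining the linear-in-$r$ scaling together with a genuinely universal constant while the integrand has only sub-exponential (indeed sub-Weibull, from the products of Gaussians and of $Y$ with $\tanh$) tails. The delicate point is that both the magnitude and the variance of each summand must be traded off correctly between the two regimes of $\tanh$ — using $|\tanh(u)|\le|u|$ to extract the factor $\enorm{\theta}$ when the argument is small and $|\tanh|\le 1$ to avoid blow-up when it is large — and that the truncation thresholds, the net resolution $\epsilon$, and the Bernstein tail must be balanced so that the accumulated logarithmic factors collapse into the single $\log^2(n/\delta)$ of the statement. Care is also needed because $Y$ depends on $\theta^*$, so the moments feeding the variance proxy and the truncation level implicitly involve $\enorm{\theta^*}$; these contributions must be shown to be absorbed into the logarithmic terms (or into the regime assumptions) rather than to surface as a multiplicative prefactor.
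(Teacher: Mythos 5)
Your decomposition is exactly the paper's: the same splitting of $\samopmlr(\theta)-\popopmlr(\theta)$ into $\widehat\Sigma_n^{-1}\bigl(g_n(\theta)-g(\theta)\bigr)$ and $\bigl(\widehat\Sigma_n^{-1}-I\bigr)g(\theta)$, the same covariance concentration, and the same target uniform bound $\sup_{\enorm{\theta}\le r}\enorm{g_n(\theta)-g(\theta)}\lesssim r\sqrt{d\log^2(n/\delta)/n}$ for the unnormalized (Easy-EM) operator. The genuine gap is in how you prove that uniform bound. You build an $\epsilon$-net over the product of the $\theta$-ball and the direction sphere (log-cardinality $\Theta(d\log(r/\epsilon))$) and apply Bernstein at each net point after truncating $\enorm{X_i}\lesssim\sqrt d+\sqrt{\log(n/\delta)}$. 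This combination fails quantitatively in the regime the lemma is stated for ($n\ge Cd$): after your truncation each summand has magnitude $M\approx r\,\enorm{X_i}^2Y_i^2\approx r\,d\log(n/\delta)$, so the Bernstein linear term, multiplied by the net log-cardinality, is of order $r\,d^2\log^2(n/\delta)/n$. For this to sit below the target $r\sqrt{d\log^2(n/\delta)/n}$ you need $n\gtrsim d^3$ up to logarithms, not $n\gtrsim d$. The obstruction is not the balancing of logarithmic factors you anticipate at the end of your proposal; it is polynomial in $d$, and it does not disappear by shrinking $\epsilon$, nor by replacing truncation-plus-Bernstein with a heavy-tail bound (without truncation the summand is only $\psi_{1/2}$, and its tail term union-bounded over $e^{\Theta(d)}$ net points is again $\Theta(rd^2/n)$).

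The paper avoids a net over $\theta$ altogether (Lemma~\ref{lemma:uniform_concentration_mlr}): it symmetrizes, truncates only scalar projections ($|Y_i|\le\tau$, $|X_i^\top\theta^*|\le C\tau$ with $\tau=\Theta(\sqrt{\log(n/\delta)})$), and applies the Ledoux--Talagrand contraction principle inside the exponential moment, using that $s\mapsto\tanh(|Y_i|s)$ is $|Y_i|$-Lipschitz. This converts the supremum over $\{\enorm{\theta}\le r\}$ into the linear functional $\frac{1}{n}\sum_i\varepsilon_iY_i^2(X_i^\top v)(X_i^\top u_j)$, whose conditional $\psi_1$-norm is $O(\tau^2)=O(\log(n/\delta))$ --- crucially dimension-free --- so only constant-resolution nets over output directions (cardinality $e^{O(d)}$) are ever union-bounded, against an MGF bound valid for $|\lambda|\lesssim n/(\tau^2 r)$, which is compatible with $n\ge Cd$. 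A repair of your route would require projection-level truncation per net point with the truncation violations controlled in expectation (in the spirit of the paper's per-sample event decomposition in the high-SNR analysis), which is a substantially different argument from the one you outline. Separately, your bound for the second term is also off as written: the Lipschitz constant $\Exs[\enorm{X}^2Y^2]$ is of order $d$, so it does not yield $\enorm{g(\theta)}\lesssim\enorm{\theta}$; you need the direction-wise bound $\enorm{g(\theta)}\le\enorm{\theta}\,\opnorm{\Exs[Y^2XX^\top]}$ with $\Exs[Y^2XX^\top]=I+2\theta^*{\theta^*}^\top$, and its $O(1)$ operator norm comes from the standing assumption $\enorm{\theta^*}\le C$ --- which is also how the $\enorm{\theta^*}$-dependence you worry about at the end is actually resolved, rather than by absorption into logarithmic factors.
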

    While the lemma is a straight-forward consequence of Lemma \ref{lemma:uniform_concentration_mlr} given in Appendix \ref{sec:concentration_sample_EM}, this is the first key result to get a tight statistical rate. The proof of Lemma \ref{lemma:deviation_bound_mlr} can be found in Appendix \ref{subsec:proof:lemma:deviation_bound_mlr}. 
    
    \paragraph{High SNR regime: \texorpdfstring{$\enorm{\theta^*} \ge C$}{Lg}.} 
    The high-level proof in the high SNR regime follows a specialized proof strategy exploited in \cite{kwon2020converges}. The core idea is that for high SNR, most ``good" samples are assigned correct (soft but almost hard) labels in E-step, and the portion of ``bad" samples is negligibly small. Such an argument first appeared informally in \cite{Siva_2017}, and then was formally organized in \cite{kwon2020converges, kwon2020algorithm} to establish a linear convergence and tight statistical rate. The full proof for the high SNR regime is given in Appendix~\ref{subsec:full_proof_high_SNR}.

    \paragraph{Middle SNR regime: \texorpdfstring{$C_{0}(d \log^2(n/\delta) /n)^{1/4} \leq \enorm{\theta^*} \le C$}{Lg}.} 
    We consider two cases, when $\enorm{\theta^*} \ge 1$ and $\enorm{\theta^*} \le 1$. 
    
    {\bf Case (i) $1 \le \enorm{\theta^*} \le C$:} Given the initialization conditions in Theorem \ref{theorem:EM_mlr_upper_bound}, we can show that $\enorm{\popopmlr (\theta) - \theta^*} < 0.9 \enorm{\theta - \theta^*}$. Furthermore, from the uniform concentration Lemma \ref{lemma:uniform_concentration_mlr} in Appendix~\ref{sec:concentration_sample_EM}, we have $\enorm{M_{n, mlr}(\theta) - M_{mlr}(\theta)} \leq \sqrt{d \log^2 (n/\delta)/n}$ with probability at least $1 - \delta$. From here, we can check that
    \begin{align*}
        \enorm{\theta_n^t - \theta^*} \lesssim \parenth{0.9}^t \enorm{\theta - \theta^*} + {\sqrt{d \log^2(n/\delta) /n}}.
    \end{align*}
    
    {\bf Case (ii) $C_{0}(d \log^2(n/\delta) /n)^{1/4} \leq \enorm{\theta^*} \le 1$:} In this case, the result of Lemma~\ref{lemma:theorem4_global} in Appendix~\ref{subsec:proof_main_theorem} shows that
    \begin{align}
        \label{eq:pop_rate_mlr_middle_snr}
        \enorm{M_{mlr}(\theta) - \theta^*} &\le \parenth{1 - O( \enorm{\theta^*}^2)} \enorm{\theta - \theta^*}.
    \end{align}
    As Lemma \ref{lemma:deviation_bound_mlr} and Corollary \ref{corollary:middle_SNR_contraction} in the Appendix make precise, we can infer that in order for the EM algorithm to make progress toward $\theta^*$, we need $\enorm{\theta^*}^2 \enorm{\theta - \theta^*} \gtrsim \enorm{\theta} \sqrt{d/n}$. Intuitively, EM converges to $\theta^*$ as long as such a relation holds, and until $\theta$ gets close enough to $\theta^*$ such that the above equation does not hold. In other words, in the last iterations when $\enorm{\theta} \approx \enorm{\theta^*}$, we have
    \begin{align*}
        \enorm{\theta^*}^2 \enorm{\theta - \theta^*} \approx \enorm{\theta^*} \sqrt{d/n}, 
    \end{align*}
    which implies the statistical rate should be on the order of $\enorm{\theta^*}^{-1} \sqrt{d/n}$. The full proof is given in Appendix~\ref{subsec:full_proof_middle_SNR}.

\paragraph{Low SNR Regime: \texorpdfstring{$\enorm{\theta^*} \leq C_{0} (d \log^2(n/\delta) /n)^{1/4}$}{Lg}.} 
In this case, even the standard spectral methods would not give a good initialization since the eigenspace is perturbed too much to be aligned with $\theta^*$ (see Lemma~\ref{lemma:spectral_init} in Appendix~\ref{Appendix:2MLR_init_spectral} for the guarantees given by spectral methods). Instead, we assume the initial estimator to be $\enorm{\theta_n^0} \le 0.2$. 

The core of idea of the low SNR regime is that EM essentially cannot distinguish the cases between $\theta^* = 0$ and $\theta^* \neq 0$. Therefore, we aim to investigate $\enorm{\theta}$ instead of the estimation error $\enorm{\theta - \theta^*}$. If we can show that $\enorm{\theta_n^t} \leq c_{1} \cdot (d/n)^{1/4}$, then given the condition of low SNR regime, we have $\enorm{\theta_n^t - \theta^*} \leq c_{2} \cdot (d/n)^{1/4}$ where $c_{1}, c_{2}$ are some positive constants.

In the low SNR regime, there exist universal constants $c_l, c_u > 0$ such that for $\enorm{\theta} \le 0.2$, we have
\begin{align*}
    \enorm{\theta} (1 - 4 \enorm{\theta}^2 - c_l \enorm{\theta^*}^2) \le \enorm{\popopmlr(\theta)} \le \enorm{\theta} (1 - \enorm{\theta}^2 + c_u \enorm{\theta^*}^2).
\end{align*}
The statistical fluctuation of the finite-sample EM operator given in Lemma \ref{lemma:deviation_bound_mlr} shows that $\enorm{\samopmlr(\theta ) - \popopmlr(\theta)} \leq c \cdot \enorm{\theta} \sqrt{d \log^2(n/\delta)/n}$,
for some universal constant $c$. It is now more clear to see that since $\enorm{\theta^*}^2 \lesssim \sqrt{d/n}$, the above statistical error will subsume an extra $O(\enorm{\theta^*}^2)$ term in the contraction rate of the population EM operator. Therefore, the convergence behaviors of the finite-sample EM operator are essentially the same when $\theta^* = 0$ and $\theta^* \neq 0$.

The EM iterations stop improving the estimator when the statistical error becomes larger than the amount that the population EM can proceed:
\begin{align*}
    \enorm{\theta}^2 \approx \sqrt{d \log^2(n/\delta) /n}.
\end{align*}
Therefore, the statistical rate of the EM algorithm is achieved at $\enorm{\theta} \lesssim (d/n)^{1/4}$. The rest of the proof in the low SNR regime is a reminiscent of the localization arguments used in \cite{Raaz-misspecified,Raaz_Ho_Koulik_2018_second}, and can be found in Appendix \ref{subsec:full_proof_low_SNR}.

\subsection{Proof Sketch of Theorem \ref{theorem:EM_mlr_global}}
The global convergence statement is subsumed into Theorem \ref{theorem:EM_mlr_upper_bound} when the estimator $\theta$ enters in the initialization region that Theorem \ref{theorem:EM_mlr_upper_bound} requires. Therefore we can focus on the iterations that $\theta$ stays outside of the initialization region. The key idea is to adopt the angle convergence argument presented in \cite{kwon2019global}. Note that in low SNR regime, we do not need such an involved argument since the initialization only requires $\enorm{\theta_n^0} \le 0.2$ (see Appendix~\ref{subsec:decrease_norm} for an argument why this initialization is easy to satisfied). In middle SNR regime where $(d/n)^{1/4} \lesssim \enorm{\theta^*} \le 1$, the key property is that
$$\cos \angle(\popopmlr(\theta), \theta^*) \ge (1 + c \enorm{\theta^*}^2) \cos \angle(\theta, \theta^*),$$
for some universal constant $c > 0$. We again see that the increase rate is $1 + O(\enorm{\theta^*}^2)$; however, the cosine value is very small $\Theta(1/\sqrt{d})$ at the initial stage. Then, the second key step is to show that $$\cos \angle (M_{easy}(\theta) - M_{mlr} (\theta), \theta^*) \le \epsilon_f / \sqrt{d},$$ for sufficiently small $\epsilon_f \lesssim \sqrt{d/n}$. At a high level, if it holds that $c\enorm{\theta^*}^2 \cos \angle(\theta, \theta^*) \ge 2 \epsilon_f / \sqrt{d}$, then we can guarantee that $\cos \angle (M_{easy}(\theta), \theta^*) \ge (1 + c\enorm{\theta^*}^2 / 2) \cos \angle (\theta, \theta^*)$. We can conclude that this is true in the middle-SNR regime since $\enorm{\theta^*}^2 \gtrsim (d/n)^{1/2}$. The argument in high-SNR regime is similar to middle-SNR regime. The formal proof is a bit more involved since we need to ensure that the statistical error in orthogonal directions does not dominate the angle (see Appendix \ref{subsec:proof:global_angle} for more detail).

\section{Conclusion}
\label{sec:conclusion}
In the paper, we completely characterize the convergence behavior of EM under all SNR regimes of symmetric two-component mixed linear regression. We view our results for this model as the first step towards a comprehensive understanding of the EM algorithm for learning weakly separated latent variable models. We now discuss a few future directions naturally arise from our work. First, in more general settings of weakly separated mixture models with $k$ components, it is known that the rate of MLE can be $n^{-O(1/k)}$ in the worst case~\cite{Jonas-2016}. Furthermore, EM is known to suffer from very slow convergence in practice for instances with large overlaps. It is an important future direction to characterize the convergence behavior of the EM algorithm in such settings. Second, our results demonstrate that the EM algorithm has sub-linear convergence to $\theta^{*}$ under middle and low SNR regimes. It respectively leads to $\enorm{\theta^{*}}^{-2} \log(n/ d)$ and $\sqrt{n/d}$ number of iterations under middle-to-low SNR regimes, which result in high computational complexity. An important direction is to develop an alternative to EM algorithm that can achieve much cheaper computational complexity and also obtain minimax optimal sample complexity under all SNR regimes of mixed linear regression. Finally, while we prove that the EM algorithm achieves minimax optimal statistical convergence rates for learning two-component mixed linear regression, it is important to further develop uncertainty quantification for the EM iteratates, such as confidence intervals. It necessitates the future study on the central limit theorem of the EM algorithm under all regimes of SNR, which has remained a major open problem in the literature.

\bibliography{Arxiv_main}

\begin{thebibliography}{10}

\bibitem{Siva_2017}
S.~Balakrishnan, M.~J. Wainwright, and B.~Yu.
\newblock Statistical guarantees for the {EM} algorithm: From population to
  sample-based analysis.
\newblock {\em Annals of Statistics}, 45:77--120, 2017.

\bibitem{chaganty2013spectral}
A.~T. Chaganty and P.~Liang.
\newblock Spectral experts for estimating mixtures of linear regressions.
\newblock In {\em International Conference on Machine Learning}, pages
  1040--1048, 2013.

\bibitem{Chen1992}
J.~Chen.
\newblock Optimal rate of convergence for finite mixture models.
\newblock {\em Annals of Statistics}, 23(1):221--233, 1995.

\bibitem{Chen_2009}
J.~Chen and P.~Li.
\newblock Hypothesis test for normal mixture models: The {EM} approach.
\newblock {\em Annals of Statistics}, 37:2523--2542, 2009.

\bibitem{chen2019learning}
S.~Chen, J.~Li, and Z.~Song.
\newblock Learning mixtures of linear regressions in subexponential time via
  fourier moments.
\newblock {\em arXiv preprint arXiv:1912.07629}, 2019.

\bibitem{chen2014convex}
Y.~Chen, X.~Yi, and C.~Caramanis.
\newblock A convex formulation for mixed regression with two components:
  Minimax optimal rates.
\newblock In {\em Conference on Learning Theory}, pages 560--604, 2014.

\bibitem{Daskalakis_colt2017}
C.~Daskalakis, C.~Tzamos, and M.~Zampetakis.
\newblock Ten steps of {EM} suffice for mixtures of two {G}aussians.
\newblock In {\em Proceedings of the 2017 Conference on Learning Theory}, 2017.

\bibitem{de1989mixtures}
R.~D. De~Veaux.
\newblock Mixtures of linear regressions.
\newblock {\em Computational Statistics \& Data Analysis}, 8(3):227--245, 1989.

\bibitem{Rubin-1977}
A.~P. Dempster, N.~M. Laird, and D.~B. Rubin.
\newblock Maximum likelihood from incomplete data via the {EM} algorithm.
\newblock {\em Journal of the Royal Statistical Society: Series B (Statistical
  Methodology)}, 39:1--38, 1997.

\bibitem{Raaz-misspecified}
R.~Dwivedi, N.~Ho, K.~Khamaru, M.~J. Wainwright, and M.~I. Jordan.
\newblock Theoretical guarantees for {EM} under misspecified {G}aussian mixture
  models.
\newblock In {\em NeurIPS 31}, 2018.

\bibitem{Raaz_Ho_Koulik_2018_second}
R.~Dwivedi, N.~Ho, K.~Khamaru, M.~J. Wainwright, M.~I. Jordan, and B.~Yu.
\newblock Sharp analysis of expectation-maximization for weakly identifiable
  models.
\newblock {\em AISTATS}, 2020.

\bibitem{Raaz_Ho_Koulik_2018}
R.~Dwivedi, N.~Ho, K.~Khamaru, M.~J. Wainwright, M.~I. Jordan, and B.~Yu.
\newblock Singularity, misspecification, and the convergence rate of {EM}.
\newblock {\em Annals of Statistics}, 48:3161--3182, 2020.

\bibitem{ghosh2020alternating}
A.~Ghosh and K.~Ramchandran.
\newblock Alternating minimization converges super-linearly for mixed linear
  regression.
\newblock {\em arXiv preprint arXiv:2004.10914}, 2020.

\bibitem{grun2007applications}
B.~Gr{\"u}n, F.~Leisch, et~al.
\newblock Applications of finite mixtures of regression models.
\newblock 2007.

\bibitem{Jonas-2016}
P.~Heinrich and J.~Kahn.
\newblock Strong identifiability and optimal minimax rates for finite mixture
  estimation.
\newblock {\em Annals of Statistics}, 46:2844--2870, 2018.

\bibitem{Ho-Nguyen-AOS-17}
N.~Ho and X.~Nguyen.
\newblock Convergence rates of parameter estimation for some weakly
  identifiable finite mixtures.
\newblock {\em Annals of Statistics}, 44:2726--2755, 2016.

\bibitem{ho2019rate}
N.~Ho, C.-Y. Yang, and M.~I. Jordan.
\newblock Convergence rates for {G}aussian mixtures of experts.
\newblock {\em arXiv preprint arXiv:1907.04377}, 2019.

\bibitem{Xu_Jordan-1995}
M.~I. Jordan and L.~Xu.
\newblock Convergence results for the {EM} approach to mixtures of experts
  architectures.
\newblock {\em Neural Networks}, 8, 1995.

\bibitem{karmalkar2019list}
S.~Karmalkar, A.~Klivans, and P.~Kothari.
\newblock List-decodable linear regression.
\newblock In {\em Advances in Neural Information Processing Systems}, pages
  7423--7432, 2019.

\bibitem{kwon2020algorithm}
J.~Kwon and C.~Caramanis.
\newblock The {EM} algorithm gives sample-optimality for learning mixtures of
  well-separated gaussians.
\newblock In {\em Conference on Learning Theory}, pages 2425--2487. PMLR, 2020.

\bibitem{kwon2020converges}
J.~Kwon and C.~Caramanis.
\newblock {EM} converges for a mixture of many linear regressions.
\newblock In {\em International Conference on Artificial Intelligence and
  Statistics}, pages 1727--1736, 2020.

\bibitem{kwon2019global}
J.~Kwon, W.~Qian, C.~Caramanis, Y.~Chen, and D.~Davis.
\newblock Global convergence of the {EM} algorithm for mixtures of two
  component linear regression.
\newblock In {\em Conference on Learning Theory}, pages 2055--2110, 2019.

\bibitem{Ledoux_Talagrand_1991}
M.~Ledoux and M.~Talagrand.
\newblock {\em Probability in {B}anach {S}paces: {I}soperimetry and
  {P}rocesses}.
\newblock Springer-Verlag, New York, NY, 1991.

\bibitem{Chen_2008_biometrika}
P.~Li, J.~Chen, and P.~Marriott.
\newblock Non-finite {F}isher information and homogeneity: an {EM} approach.
\newblock {\em Biometrika}, 96:411--426, 2009.

\bibitem{li2018learning}
Y.~Li and Y.~Liang.
\newblock Learning mixtures of linear regressions with nearly optimal
  complexity.
\newblock In {\em Conference On Learning Theory}, pages 1125--1144, 2018.

\bibitem{Jordan-2000}
J.~Ma, L.~Xu, and M.~I. Jordan.
\newblock Asymptotic convergence rate of the {EM} algorithm for {G}aussian
  mixtures.
\newblock {\em Neural Computation}, 12:2881--2907, 2000.

\bibitem{raghavendra2020list}
P.~Raghavendra and M.~Yau.
\newblock List decodable learning via sum of squares.
\newblock In {\em Proceedings of the Fourteenth Annual ACM-SIAM Symposium on
  Discrete Algorithms}, pages 161--180. SIAM, 2020.

\bibitem{redner1984mixture}
R.~A. Redner and H.~F. Walker.
\newblock Mixture densities, maximum likelihood and the {EM} algorithm.
\newblock {\em SIAM review}, 26(2):195--239, 1984.

\bibitem{sedghi2016provable}
H.~Sedghi, M.~Janzamin, and A.~Anandkumar.
\newblock Provable tensor methods for learning mixtures of generalized linear
  models.
\newblock In {\em Artificial Intelligence and Statistics}, pages 1223--1231,
  2016.

\bibitem{vanderVaart-Wellner-96}
A.~W. van~der Vaart and J.~Wellner.
\newblock {\em Weak Convergence and Empirical Processes}.
\newblock Springer-Verlag, New York, NY, 1996.

\bibitem{Vershynin_2011}
R.~Vershynin.
\newblock Introduction to the non-asymptotic analysis of random matrices.
\newblock {\em arXiv:1011.3027v7}, 2010.

\bibitem{wainwright2019high}
M.~J. Wainwright.
\newblock {\em High-dimensional statistics: A non-asymptotic viewpoint},
  volume~48.
\newblock Cambridge University Press, 2019.

\bibitem{Jeff_Wu-1983}
C.~F.~J. Wu.
\newblock On the convergence properties of the {EM} algorithm.
\newblock {\em Annals of Statistics}, 11:95--103, 1983.

\bibitem{wu2019randomly}
Y.~Wu and H.~H. Zhou.
\newblock Randomly initialized {EM} algorithm for two-component {G}aussian
  mixture achieves near optimality in ${O}(\sqrt{n})$ iterations.
\newblock {\em arXiv preprint arXiv:1908.10935}, 2019.

\bibitem{Hsu-nips2016}
J.~Xu, D.~Hsu, and A.~Maleki.
\newblock Global analysis of expectation maximization for mixtures of two
  {G}aussians.
\newblock In {\em Advances in Neural Information Processing Systems 29}, 2016.

\bibitem{xu2018benefits}
J.~Xu, D.~J. Hsu, and A.~Maleki.
\newblock Benefits of over-parameterization with {EM}.
\newblock In {\em Advances in Neural Information Processing Systems}, pages
  10662--10672, 2018.

\bibitem{Sarkar_nips2017}
B.~Yan, M.~Yin, and P.~Sarkar.
\newblock Convergence of gradient {EM} on multi-component mixture of
  {G}aussians.
\newblock In {\em Advances in Neural Information Processing Systems 30}, 2017.

\bibitem{yi2014alternating}
X.~Yi, C.~Caramanis, and S.~Sanghavi.
\newblock Alternating minimization for mixed linear regression.
\newblock In {\em International Conference on Machine Learning}, pages
  613--621, 2014.

\bibitem{yi2016solving}
X.~Yi, C.~Caramanis, and S.~Sanghavi.
\newblock Solving a mixture of many random linear equations by tensor
  decomposition and alternating minimization.
\newblock {\em arXiv preprint arXiv:1608.05749}, 2016.

\end{thebibliography}

\begin{appendix}

\section{Additional Notations}
\label{Appendix:additional_notation}
We sometimes use the transformed coordinate where the first two coordinate spans $\theta$ and $\theta^*$. That is, let $\{v_1, ..., v_d\}$ be standard basis in the transformed coordinate such that $v_1 = \theta / \enorm{\theta}$, and $\textbf{span}(v_1, v_2) = \textbf{span}(\theta, \theta^*)$. Since Gaussian distribution is invariant to rotation, we often work on the transformed space in the proofs. Let $\alpha = \angle (\theta, \theta^*)$, $\eta = \enorm{\theta^*}/\sigma^*$, and $\sigma_2^2 = 1 + \enorm{\theta^*}^2 \sin^2 \alpha$.

We define a few more quantities to simplify the notations throughout the proofs. Let $x_1, x_2$ be $X^\top v_1, X^\top v_2$ respectively. Following the notation in \cite{kwon2019global}, we denote $b_1^* = {\theta^*}^\top v_1 = \enorm{\theta^*} \cos \angle(\theta, \theta^*)$, and $b_2^* = {\theta^*}^\top v_2 = \enorm{\theta^*} \sin \angle(\theta, \theta^*)$. Note that in this transformed coordinate, due to the symmetry of the distribution, $M_{mlr}(\theta)^\top v_j = 0$ for all $j \ge 3$. Hence we focus on bounding the values in first two coordinates.

Using the coordinate transformation and new notations defined here, we can write the population operator in new coordinate as:
\begin{align}
    \label{eq:pop_mlr_transformed}
    M_{mlr}(\theta) &= \Exs_{X, Y} \brackets{\tanh(Y X^\top \theta) Y X} \nonumber \\
    &= \Exs_{x_1, x_2, y} \brackets{\tanh(y x_1 \enorm{\theta}) x_1 y} v_1 + \Exs_{x_1, x_2, y} \brackets{\tanh(y x_1 \enorm{\theta}) x_2 y} v_2,
\end{align}
where $y|(x_1, x_2) \sim \NORMAL(x_1 b_1^* + x_2 b_2^*, 1)$. Note that we simplify $y$ as a single Gaussian due to the symmetry in the signs of $y$ and Gaussian noise.

\section{Proof of Theorem~\ref{theorem:EM_mlr_upper_bound}}
\label{subsec:proof_main_theorem}
We first consider middle-to-high SNR regimes and then we consider low SNR regimes. In middle-to-high SNR regimes, we assume that we start from the initialization where $\cos \alpha \ge 0.95$. We note that the additional requirement $\enorm{\theta_n^0} \ge 0.9 \enorm{\theta^*}$ is to prevent the analysis to become over-complicated (see Appendix \ref{subsec:proof:global_stability_convergence} for the arguments for starting from well-aligned small estimators).

We will frequently use the fact that $\enorm{\theta^*} \sin \alpha \le \enorm{\theta - \theta^*}$. We can check that $\theta$ remains in this good initialization region using the convergence property of angles (see the arguments for sine values in Appendix \ref{subsec:proof:global_stability_convergence}). Before getting into the detailed proof, we state some useful lemmas from previous work. We need the following lemma for the contraction rate of the population EM operator~\eqref{eq:population_operator_mlr}:
\begin{lemma}[Theorem 4 in \cite{kwon2019global}]
    \label{lemma:theorem4_global}
    Assume $\alpha < \pi / 8$. Then, we have
    \begin{align}
        \label{eq:population_contract_mlr}
        \enorm{M_{mlr}(\theta) - \theta^*} \le \max \{\kappa, 0.6\} \enorm{\theta - \theta^*} + \kappa (16 \sin^3 {\alpha}) \enorm{\theta^*} \frac{ \eta^2}{1 + \eta^2},
    \end{align}
    where $\kappa = \parenth{\sqrt{1 + \min \{\sigma_2^2 \enorm{\theta}, \enorm{\theta^*} \cos\alpha \}^2 / \sigma_2^2}}^{-1}$.
\end{lemma}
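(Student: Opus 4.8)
The plan is to reduce the bound to a two–dimensional computation in the coordinates $(v_1, v_2)$ already introduced, and then control the two resulting scalar integrals. By rotation invariance of the law of $X$ together with the oddness of the model in $y$, the operator $\popopmlr(\theta)$ carries no mass outside $\textbf{span}(v_1, v_2)$, so I may write $\popopmlr(\theta) = M_1 v_1 + M_2 v_2$ with $M_1 = \Exs[\tanh(y x_1 \enorm{\theta})\, x_1 y]$ and $M_2 = \Exs[\tanh(y x_1 \enorm{\theta})\, x_2 y]$, where $(x_1,x_2)\sim\NORMAL(0,I_2)$ and $y\mid(x_1,x_2)\sim\NORMAL(x_1 b_1^* + x_2 b_2^*,1)$. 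Since $\theta = \enorm{\theta}\, v_1$ and $\theta^* = b_1^* v_1 + b_2^* v_2$, the quantities to compare split orthogonally as $\enorm{\popopmlr(\theta)-\theta^*}^2 = (M_1 - b_1^*)^2 + (M_2 - b_2^*)^2$ and $\enorm{\theta-\theta^*}^2 = (\enorm{\theta} - b_1^*)^2 + (b_2^*)^2$. The lemma is thus a statement comparing these two pairs of scalars.

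Next I would integrate out the conditionally Gaussian variable $y$ using Gaussian integration by parts: for $g(y)=\tanh(y x_1 \enorm{\theta})$ one has $\Exs_y[y\,g(y)] = (x_1 b_1^* + x_2 b_2^*)\,\Exs_y[g(y)] + \Exs_y[g'(y)]$ with $g'(y) = x_1 \enorm{\theta}\,\sech^2(y x_1 \enorm{\theta})$. This rewrites each of $M_1, M_2$ as a signal part proportional to $b_1^*$ or $b_2^*$ plus a gain part carrying the factor $\enorm{\theta}\,\sech^2$, and the remaining integrals over $(x_1,x_2)$ are smooth and absolutely convergent, hence differentiable in $(\enorm{\theta}, b_1^*, b_2^*)$ under the expectation.

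The analytic core is to bound each coordinate and then recombine. For the radial coordinate I would show $|M_1 - b_1^*| \le \kappa\,|\enorm{\theta} - b_1^*|$, and for the transverse coordinate $|M_2 - b_2^*| \le \kappa\,|b_2^*| + (\text{correction})$; here $\kappa = (1 + \min\{\sigma_2^2 \enorm{\theta}, b_1^*\}^2/\sigma_2^2)^{-1/2}$ is exactly the bound on the effective gain of the $\tanh$ nonlinearity, the $\min$ distinguishing the small-norm regime ($\sigma_2^2 \enorm{\theta}$ binding) from the aligned-signal regime ($b_1^*=\enorm{\theta^*}\cos\alpha$ binding). Replacing $\kappa$ by $\max\{\kappa,0.6\}$ absorbs the parameter ranges where the raw gain estimate is weaker than a universal constant. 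With both contraction factors at most $\max\{\kappa,0.6\}$, Minkowski's inequality on the orthogonal decomposition (using $\sqrt{a^2 + (b+c)^2} \le \sqrt{a^2 + b^2} + c$ for $c \ge 0$) converts the two scalar bounds into $\enorm{\popopmlr(\theta)-\theta^*} \le \max\{\kappa,0.6\}\,\enorm{\theta-\theta^*} + (\text{correction})$, which is the desired form.

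The hard part will be the correction $\kappa\,(16 \sin^3\alpha)\,\enorm{\theta^*}\,\eta^2/(1+\eta^2)$, extracted from the transverse coordinate $M_2$. The delicate feature is the cubic power of $\sin\alpha$: writing the correction in terms of $b_2^* = \enorm{\theta^*}\sin\alpha$, the linear and quadratic contributions must cancel so that only an odd, cubic remainder survives. I would obtain this by Taylor–expanding $M_2$ in $b_2^*$ about $b_2^* = 0$ and using the oddness of the integrand in $x_2$ (hence in $b_2^*$) to annihilate the even–order terms, then bounding the third–order remainder uniformly. The hypothesis $\alpha < \pi/8$ is precisely what keeps $\sin\alpha$ small enough both for this remainder estimate and for the gain bounds to hold with the stated constants, and the saturating factor $\eta^2/(1+\eta^2)$ emerges from the $\sech^2$ integrals when the signal is large. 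Tracking the explicit constant $16$ and this saturation through the cubic remainder is the most error–prone bookkeeping, and is where I expect the bulk of the effort to go.
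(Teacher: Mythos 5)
This lemma is imported verbatim from Theorem~4 of \cite{kwon2019global}: the present paper gives no proof of its own, so your attempt has to be measured against the original argument, whose ingredients do surface elsewhere in this paper (the rotated coordinates of Appendix~\ref{Appendix:additional_notation}, the Stein identity~\eqref{eq:v2_stein_lemma}, and the appeal to ``equation (50) in \cite{kwon2019global}'' in Appendix~\ref{subsec:proof:lemma_norm_lower_bound}). Your overall framework is the right one, and your transverse analysis is essentially sound: by Stein's lemma $M_2$ is exactly $b_2^*$ times a scalar functional (this is~\eqref{eq:v2_stein_lemma}), $M_2$ is odd in $b_2^*$, so the error beyond a rate-$\kappa$ linear part is cubic in $b_2^* = \enorm{\theta^*}\sin\alpha$ --- which is indeed where the $\sin^3\alpha$ lives. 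The orthogonal recombination via $\sqrt{a^2+(b+c)^2} \le \sqrt{a^2+b^2}+c$ is also fine.

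The genuine gap is your radial bound $|M_1 - b_1^*| \le \kappa\,\abss{\enorm{\theta}-b_1^*}$, which is false whenever $\alpha \neq 0$. Marginalizing $x_2$, the first coordinate sees $y \mid x_1 \sim \NORMAL(x_1 b_1^*, \sigma_2^2)$ with $\sigma_2^2 = 1+(b_2^*)^2 > 1$, while the $\tanh$ weight is computed with unit variance: the first-coordinate map is a \emph{variance-misspecified} scalar EM map, and $\enorm{\theta} = b_1^*$ is not its fixed point. Concretely, for small parameters $M_1 \approx \parenth{3(b_1^*)^2+\sigma_2^2}\enorm{\theta}$, so at $\enorm{\theta} = b_1^*$ the left side of your claimed bound is of order $(b_2^*)^2 b_1^* \asymp \sin^2\alpha$ while the right side vanishes. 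Your oddness cancellation does nothing here, because $M_1$ is \emph{even} in $b_2^*$ and the $O(\sin^2\alpha)$ bias survives in the radial coordinate; it cannot be shoved directly into the cubic additive term. The original proof copes with precisely this: it contracts $b_1^* - M_1$ against $b_1^* - \sigma_2^2\enorm{\theta}$ rather than $b_1^* - \enorm{\theta}$ (with rate $\kappa^3$; this is the ``equation (50)'' quoted in Appendix~\ref{subsec:proof:lemma_norm_lower_bound}), and the excess $(\sigma_2^2-1)\enorm{\theta} = (b_2^*)^2\enorm{\theta}$ is then split so that one factor of $\sin\alpha$ is paid against $\enorm{\theta-\theta^*} \ge \enorm{\theta^*}\sin\alpha$ and absorbed \emph{multiplicatively} --- a source of the otherwise mysterious $\max\{\kappa,0.6\}$ and of the hypothesis $\alpha < \pi/8$ --- leaving only the genuinely cubic remainder, with its saturation factor $\eta^2/(1+\eta^2)$, as the additive term. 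Without this mechanism, your per-coordinate scheme cannot yield the stated inequality.
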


\subsection{High SNR Regime}
\label{subsec:full_proof_high_SNR}
First, we arrange the sample operator as the following:
\begin{align}
    &M_{n, mlr} (\theta) - \theta^* = \parenth{\frac{1}{n} \sum_i X_i X_i^\top}^{-1} \parenth{\frac{1}{n} \sum_i X_i Y_i \tanh(Y_i X_i^\top {\theta})} - \theta^* \nonumber \\
    &= \parenth{\frac{1}{n} \sum_i X_i X_i^\top}^{-1} \biggr( \frac{1}{n} \sum_i X_i Y_i \tanh(Y_i X_i^\top {\theta}) - \frac{1}{n} \sum_i X_i Y_i \tanh(Y_i X_i^\top {\theta^*}) \nonumber \\
    &\qquad \qquad \qquad \qquad + \frac{1}{n} \sum_i X_i Y_i \tanh(Y_i X_i^\top {\theta^*}) - \frac{1}{n} \sum_i X_i X_i^\top \theta^* \biggr) \nonumber \\
    &= \parenth{\frac{1}{n} \sum_i X_i X_i^\top}^{-1} \Biggr( \underbrace{\Exs_{X, Y} [XY \Delta_{(X,Y)}(\theta) ]}_{:= A_{1}} + \underbrace{\frac{1}{n} \sum_i X_i Y_i \Delta_{(X_i, Y_i)}(\theta) - \Exs_{X, Y} [XY \Delta_{(X, Y)} (\theta)]}_{:=A_{2}}  \nonumber \\
    &\qquad \qquad \qquad + \underbrace{\frac{1}{n} \sum_i X_i Y_i \tanh(Y_i X_i^\top {\theta^*}) - \Exs_{Y_i | X_i} \brackets{\frac{1}{n} \sum_i X_i Y_i \tanh(Y_i X_i^\top {\theta^*})}}_{:= A_{3}} \Biggr), \label{eq:key_high_SNR}
\end{align}
where $\Delta_{(X,Y)}(\theta) : = \tanh(YX^\top \theta) - \tanh(YX^\top \theta^*)$. In the term $A_{3}$, the expectation is taken over $Y_i|X_i \sim \frac{1}{2} \NORMAL(X_i^\top \theta^*, 1) + \frac{1}{2} \NORMAL(-X_i^\top \theta^*, 1)$, letting $X_i$ fixed. Note that the true parameters are fixed points of the EM operators, and it is easy to check that the expectation in $A_{3}$ is equivalent to $\frac{1}{n} \sum_i X_i X_i^\top \theta^*$.

Now, we claim the following bounds with $A_{1}, A_{2}$, and $A_{3}$ in equation~\eqref{eq:key_high_SNR}:
\begin{align}
    A_{1} & < 0.9 \enorm{\theta - \theta^*}, \label{eq:claim_one_high_SNR} \\
    A_{2} & \leq (\enorm{\theta - \theta^*} + 1)\sqrt{d\log^2 (n \enorm{\theta^*} /\delta) /n}, \label{eq:claim_two_high_SNR} \\
    A_{3} & \leq C \sqrt{d \log(1/ \delta)/n}, \label{eq:claim_three_high_SNR}
\end{align}
with probability at least $1 - 5 \delta$. Here, $C$ is some universal constant.

Assume that the above claims are given at the moment, we proceed to finish the proof of the convergence of EM algorithm under high SNR regime. In fact, plugging the results from equations~\eqref{eq:claim_one_high_SNR},~\eqref{eq:claim_two_high_SNR}, and~\eqref{eq:claim_three_high_SNR} into equation~\eqref{eq:key_high_SNR}, we find that
\begin{align*}
    \enorm{M_{n,mlr}(\theta) - \theta^*} &\leq \parenth{0.9 + \sqrt{d \log^2 (n \enorm{\theta^*} /\delta) /n})} \enorm{\theta - \theta^*} + C_{1} \sqrt{d \log^2 (n \enorm{\theta^*} /\delta) /n} \\
    &\le \gamma \enorm{\theta - \theta^*} + C_{1} \sqrt{d \log^2 (n \enorm{\theta^*} /\delta) /n},
\end{align*}
for some $\gamma < 1$. From here, let $\epsilon_n := C_{1} \sqrt{d \log^2 (n \enorm{\theta^*} /\delta) /n}$ and we iterate over $t$ to bound the estimation error in $t^{th}$ step:
\begin{align*}
    \enorm{\theta_n^{t+1} - \theta^*} &\le \gamma \enorm{\theta_n^t - \theta^*} + \epsilon_n \le \gamma^2 \enorm{\theta_n^{t-1} - \theta^*} + (1 + \gamma) \epsilon_n \\
    &\le ... \le \gamma^t \enorm{\theta_n^0 - \theta^*} + \frac{1}{1-\gamma} \epsilon_n.
\end{align*}
After $t \geq c_{1} \log (n \enorm{\theta^*} /d)$ iterations, we have $\enorm{\theta_n^t - \theta^*} \leq c_{2} \sqrt{d/n}$ where $c_{1}$ and $c_{2}$ are universal constants. As a consequence, we reach the conclusion of the theorem for high SNR regime.

\paragraph{Proof of claim~\eqref{eq:claim_one_high_SNR}:} In order to bound $A_{1}$, we can use the result of Corollary~\ref{corollary:high_SNR_contraction} in Appendix~\ref{subsec:full_proof_middle_SNR}. Observe that
\begin{align*}
    \Exs \brackets{XY \tanh(YX^\top \theta^*)} = \theta^*, \\
    \Exs \brackets{XY \tanh(YX^\top \theta)} = M_{mlr} (\theta).
\end{align*}
From Corollary~\ref{corollary:high_SNR_contraction}, we conclude that 
\begin{align*}
    A_{1} = \Exs_{X,Y} [XY\Delta_{(X,Y)}(\theta)] < 0.9 \enorm{\theta - \theta^*}.
\end{align*}
Therefore, we reach the conclusion of claim~\eqref{eq:claim_one_high_SNR}.

\paragraph{Proof of claim~\eqref{eq:claim_two_high_SNR}:} Next, we bound $A_{2}$. We first discretize the parameter space for $\theta$ as the following:
\begin{align*}
    \Prob \biggr( \sup_{\theta \in \ball(\theta^*, r)} & \enorm{\frac{1}{n} \sum_{i=1}^n X_iY_i \Delta_i(\theta) - \Exs[XY\Delta(\theta)]} \ge t \biggr) \\
    &= \underbrace{\Prob \parenth{\sup_{j \in [\mathcal{N}_\epsilon]} \enorm{\frac{1}{n} \sum_{i=1}^n X_iY_i \Delta_i(\theta_j) - \Exs[X_iY_i \Delta_i(\theta_j)]} \ge t/2 }}_{\text{finite-sample error}} \\
    &+ \underbrace{\Prob \parenth{\sup_{\enorm{\theta-\theta'} \le \epsilon} \enorm{\frac{1}{n} \sum_{i=1}^n X_i Y_i (\Delta_i(\theta) - \Delta_i(\theta')} + \enorm{\Exs \brackets{XY(\Delta(\theta) - \Delta(\theta'))}} \ge t/2}}_{\text{discretization error}},
\end{align*}
where $\Delta_i(\theta)$ is a shorthand for $\Delta_i(\theta) := \tanh(Y_i X_i^\top \theta) - \tanh(Y_i X_i^\top \theta^*)$, $\Delta(\theta)$ is a shorthand for $\Delta(\theta) = \tanh(Y X^\top \theta) - \tanh(Y X^\top \theta^*)$, $\mathcal{N}_\epsilon$ is $\epsilon$-covering number of $\ball(\theta^*, r)$, and $\{\theta_j, j \in [\mathcal{N}_\epsilon]\}$ is the corresponding $\epsilon$-covering set. 

The discretization error can be bounded by the Lipschitz continuity of the function $\Delta_{i}$, namely, $|\Delta_i(\theta) - \Delta_i(\theta')| \le |Y_i| |X^\top \theta - X^\top \theta'|$ for all $\theta, \theta'$. It follows that 
\begin{align*}
    \enorm{\frac{1}{n} \sum_{i=1}^n X_i Y_i (\Delta_i(\theta) - \Delta_i(\theta')} &\le \enorm{\frac{1}{n} \sum_{i=1}^n Y_i^2 X_i X_i^\top (\theta - \theta')} \\
    &\le \epsilon \opnorm{\frac{1}{n} \sum_{i=1}^n Y_i^2 X_i X_i^\top}.
\end{align*}
Note that $\Exs[Y^2 XX^\top] = I + 2 \theta^* {\theta^*}^\top$, hence $\opnorm{\Exs[Y^2 XX^\top]} \le 2 \enorm{\theta^*}^2 + 1$. 
Furthermore, from Lemma~\ref{lemma:concentration_higher_order}, we have $\opnorm{\frac{1}{n} \sum_{i=1}^n Y_i^2 X_i X_i^\top} \le 3 \enorm{\theta^*}^2$ with probability at least $1 - \delta$. We conclude that 
\begin{align*}
    \text{discretization error} \leq 6\epsilon \enorm{\theta^*}^2
\end{align*}
with probability at least $1 - \delta$.

In order to bound the finite-sample error for each fixed $\theta_j$, we adopt the per-sample decomposition argument used in the previous works~\cite{kwon2020converges} and~\cite{kwon2020algorithm}. In order to simplify the notation, let $Z_i$ be the noise such that $Y_i = \nu_i X_i^\top \theta^* + Z_i$ where $\nu_i$ is an independent Rademacher variable. We define good events as follows:
\begin{align*}
    \event_{1} &= \{2 |X^\top (\theta^* - \theta)| \le |X^\top \theta^*|\}, \\
    \event_{2} &= \{|X^\top \theta^*| \ge 2\tau \}, \\
    \event_{3} &= \{|Z| \le \tau \}, 
\end{align*}
where we decide $\tau$ later. Let the good event $\event_{good} : = \event_{1} \cap \event_{2} \cap \event_{3}$. Then we have a following lemma:
\begin{lemma}
    \label{lemma:good_event_small_diff}
    Under the event $\event_{good}$, we have
    \begin{align*}
        |\Delta_{(X,Y)} (\theta)| \le \exp(-\tau^2).
    \end{align*}
\end{lemma}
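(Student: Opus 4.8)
The plan is to exploit the fact that under $\event_{good}$ both arguments $YX^\top\theta$ and $YX^\top\theta^*$ of the hyperbolic tangent are forced to be large in magnitude and to share a common sign; since $\tanh$ saturates exponentially fast, its values at two such points must be exponentially close, and this is cleaner than a mean-value argument (which would leave an extra $|YX^\top(\theta-\theta^*)|$ factor to control). Concretely, writing $Y = \nu X^\top\theta^* + Z$ with $\nu \in \{\pm 1\}$, I would first lower bound $|Y|$: by the reverse triangle inequality $|Y| \ge |X^\top\theta^*| - |Z|$, and combining $\event_2$ (which gives $|X^\top\theta^*| \ge 2\tau$) with $\event_3$ (which gives $|Z| \le \tau$) yields $|Y| \ge \tau$.

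Next I would control $X^\top\theta$ using $\event_1$. That event gives $|X^\top(\theta - \theta^*)| \le \tfrac12 |X^\top\theta^*|$, so $X^\top\theta$ lies in an interval centered at $X^\top\theta^*$ of radius strictly smaller than $|X^\top\theta^*|$; hence $X^\top\theta$ and $X^\top\theta^*$ have the same sign and $|X^\top\theta| \ge \tfrac12 |X^\top\theta^*| \ge \tau$ (again invoking $\event_2$). Multiplying through by $Y$, the two arguments $a := YX^\top\theta$ and $b := YX^\top\theta^*$ therefore satisfy $|a| \ge \tau^2$, $|b| \ge 2\tau^2$, and $\sign(a) = \sign(b)$.

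To finish, I would combine monotonicity of $\tanh$ with its tail estimate. Assuming without loss of generality that $a, b > 0$ (the negative case follows from the oddness of $\tanh$), both $\tanh(a)$ and $\tanh(b)$ lie in the interval $[\tanh(\tau^2), 1)$, so their difference is at most $1 - \tanh(\tau^2) = 2/(e^{2\tau^2}+1) \le 2e^{-2\tau^2} \le e^{-\tau^2}$, where the last inequality uses that $\tau$ is taken large enough that $e^{\tau^2} \ge 2$. This gives exactly $|\Delta_{(X,Y)}(\theta)| \le e^{-\tau^2}$.

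I expect the only delicate point to be the sign bookkeeping: one must verify that the perturbation $X^\top(\theta-\theta^*)$ cannot flip the sign of $X^\top\theta$ relative to $X^\top\theta^*$, since otherwise $a$ and $b$ could straddle zero and the saturation argument would collapse. This is precisely what the factor $2$ in the definition of $\event_1$ buys. The exponential tail bound and the reconciliation of the leading constant ($2e^{-2\tau^2}$ against the claimed $e^{-\tau^2}$) are otherwise routine, relying only on $\tau$ being chosen sufficiently large in the surrounding concentration argument.
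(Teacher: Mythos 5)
Your proposal is correct and takes essentially the same route as the paper's proof: under $\event_{good}$ both arguments $YX^\top\theta$ and $YX^\top\theta^*$ have magnitude at least $\tau^2$ and a common sign, so both $\tanh$ values saturate to within $e^{-\tau^2}$ of $\pm 1$ and their difference is bounded accordingly. The only cosmetic difference is the tail estimate: the paper uses $\tanh(x) \ge 1 - e^{-x}$ for $x \ge 0$, which gives the bound for every $\tau \ge 0$, whereas your chain $1-\tanh(\tau^2) \le 2e^{-2\tau^2} \le e^{-\tau^2}$ needs $\tau^2 \ge \ln 2$ --- harmless here since $\tau = \Theta\bigl(\sqrt{\log(n\enorm{\theta^*}/\delta)}\bigr)$, and in fact avoidable, because $1 - \tanh(\tau^2) = 2/(e^{2\tau^2}+1) \le e^{-\tau^2}$ holds unconditionally (it is equivalent to $(e^{\tau^2}-1)^2 \ge 0$).
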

\begin{proof}
    Without loss of generality, let $\nu = +1$. We can check that 
    \begin{align*}
        Y X^\top \theta &= (\nu X^\top \theta^* + Z) (X^\top \theta^*) + (\nu X^\top \theta^* + Z) (X^\top (\theta - \theta^*)) \\
        &= (\nu X^\top \theta^* + Z) (X^\top \theta^* + X^\top (\theta - \theta^*)) \\
        &\ge \tau \cdot \tau = \tau^2.
    \end{align*}
    Since $\tanh(x) = \frac{\exp(x) - \exp(-x)}{\exp(x) + \exp(-x)} \ge 1 - \exp(-x)$ for $x \ge 0$, we have $\tanh(YX^\top \theta) \ge 1 - \exp(-\tau^2)$. Similarly, $\tanh(YX^\top \theta^*) \ge 1 - \exp(-\tau^2)$. On the other hand, $\tanh(x) \le 1$ for all $x$. We can conclude that $\Delta_{(X,Y)} (\theta) \le \exp(-\tau^2)$. For the other sign $\nu = -1$, we can show it similarly.
\end{proof}
To simplify the notation, we denote $W_i : = \nu_i X_i X_i^\top \theta^* \Delta_i(\theta)$. Then, we can decompose $A_{2}$ as follows:
\begin{align}
    A_{2} = \underbrace{\parenth{\frac{1}{n} \sum_{i = 1}^{n} X_i Z_i \Delta_i(\theta) - \Exs[X Z \Delta(\theta)]}}_{:= T_{1}} + \underbrace{\parenth{\frac{1}{n} \sum_{i = 1}^{n} W_i - \Exs[W]}}_{:= T_{2}}. \label{eq:key_A2}
\end{align}
We first claim the following high probability bound with $T_{1}$:
\begin{align}
    \label{eq:high_SNR_concentration_noise_term}
    \Prob \parenth{\enorm{T_{1}} \ge t} \le \exp \parenth{-\frac{n t^2}{K_0} + K_0' d},
\end{align}
for some universal constants $K_0, K_0' > 0$, where we assumed $n \gg d$ to ignore sub-exponential tail part. The proof of claim~\eqref{eq:high_SNR_concentration_noise_term} is deferred to the end of the proof of high SNR regime.

For the term $T_{2}$ in equation~\eqref{eq:key_A2}, we apply per-sample decomposition.
\begin{align*}
    \frac{1}{n} \sum_i W_i - \Exs[W] &= \frac{1}{n} \sum_i (W_i 1_{\event_{good}} - \Exs[W 1_{\event_{good}}]) + \frac{1}{n} \sum_i (W_i 1_{\event_{1}^c} - \Exs[W 1_{\event_1^c}]) \\ 
    &+ \frac{1}{n} \sum_i (W_i 1_{\event_{1} \cap \event_2^c} - \Exs[W 1_{\event_{1} \cap \event_{2}^c}]) + \frac{1}{n} \sum_i (W_i 1_{\event_{1} \cap \event_2 \cap \event_3^c} - \Exs[W 1_{\event_{1} \cap \event_2 \cap \event_3^c}]).
\end{align*}
In the sequel, we will show that
\begin{align}
    &\Prob \parenth{\enorm{ \frac{1}{n} \sum_i (W_i 1_{\event_{good}} - \Exs[W 1_{\event_{good}}])} \ge t} \le \exp \parenth{- \frac{n t^2}{K_1 \enorm{\theta^*}^2 \exp(-2 \tau^2)} + K_1' d}, \label{eq:uniform_mlr_high_case1} \\
    &\Prob \parenth{\enorm{\frac{1}{n} \sum_i (W_i 1_{\event_{1}^c} - \Exs[W 1_{\event_1^c}])} \ge t} \le \exp \parenth{- \frac{nt^2}{K_2 \enorm{\theta - \theta^*}^2} + K_2' d}, \label{eq:uniform_mlr_high_case2} \\
    &\Prob \parenth{\enorm{\frac{1}{n} \sum_i (W_i 1_{\event_{1} \cap \event_2^c} - \Exs[W 1_{\event_{1} \cap \event_{2}^c}])} \ge t} \le \exp \parenth{-\frac{nt^2}{K_3 \tau^2} + K_3' d}, \label{eq:uniform_mlr_high_case3} \\
    &\Prob \biggr(\sup_{\theta \in \ball(\theta^*, r)} \enorm{\frac{1}{n} \sum_i (W_i 1_{\event_{1} \cap \event_2 \cap \event_3^c} - \Exs[W 1_{\event_{1} \cap \event_2 \cap \event_3^c}])} = 0 \biggr) \ge 1 - \delta, \label{eq:uniform_mlr_high_case4}
\end{align}
where $K_{(\cdot)}$ are all some universal constants. The last probability is due to our choice $\tau = \Theta(\sqrt{\log (n\|\theta^*\|/\delta)})$ such that no sample fall in the event $\event_3^c$ with probability at least $1 - \delta$. We set $t$ and $\epsilon$ as follows:
\begin{align*}
    t &= O\parenth{(\enorm{\theta - \theta^*} + 1) \sqrt{d \log^2 (n \enorm{\theta^*} /\delta) /n}}, \\
    \epsilon &= O \parenth{\enorm{\theta^*}^{-2} \sqrt{d \log^2 (n\|\theta^*\|/\delta) / n}}. 
\end{align*}
The overall finite-sample error term is bounded by taking union bound over $\epsilon$-covering set. Note that $\log (\mathcal{N}_\epsilon) \leq c \cdot d\log(\enorm{\theta^*})$ for some universal constant $c$. Hence the total probability of $\enorm{T_2} \ge t$ is dominated by
\begin{align*}
    \exp \parenth{-\frac{nt^2}{K_2 \enorm{\theta-\theta^*}^2} + K_2' d \log(n\enorm{\theta^*}/d)} + \exp \parenth{-\frac{nt^2}{K_3 \tau^2} + K_3' d \log(n\enorm{\theta^*}/d)},
\end{align*}
for some (new) constants $K_2, K_2', K_3, K_3' > 0$. Our choice of $t$ gives $5\delta$ total probability bound for the finite-sample error. We can conclude that $A_{2} \le t \le (\enorm{\theta - \theta^*} + 1)\sqrt{d\log^2 (n \enorm{\theta^*} /\delta) /n}$ with probability at least $1 - 5\delta$. Hence, we reach the conclusion of claim~\eqref{eq:claim_two_high_SNR}. 
\paragraph{Proof of claim~\eqref{eq:claim_three_high_SNR}:} Finally, for bounding $A_{3}$, we use Proposition 11 in~\cite{kwon2019global} that exactly targets to bound this quantity.
\begin{lemma}[Proposition 11 in \cite{kwon2019global}]
    For each fixed $\theta$, with probability at least $1 - \exp(-cn) - 6^d \exp(-nt^2/72)$, 
    \begin{align}
        \enorm{\frac{1}{n} \sum_i X_i Y_i \tanh(Y_i X_i^\top \theta) - \frac{1}{n} \sum_i \Exs_{Y_i|X_i} \brackets{Y_i X_i \tanh(Y_i X_i^\top \theta)}} \le t,
    \end{align}
    for some absolute constant $c > 0$.
\end{lemma}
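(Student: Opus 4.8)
The plan is to recognize the left-hand quantity as a normalized sum of vectors that, conditionally on the design $\{X_i\}$, are independent and have mean zero, and then to control its Euclidean norm by a covering argument over the sphere. Writing $V := \frac{1}{n}\sum_i X_i\bigl(Y_i\tanh(Y_iX_i^\top\theta) - \Exs_{Y_i|X_i}[Y_i\tanh(Y_iX_i^\top\theta)]\bigr)$, I would first pass to a $1/2$-net $\mathcal{N}$ of the unit sphere $\sphere^{d-1}$ of cardinality $|\mathcal{N}|\le 6^d$, for which $\enorm{V}\le 2\max_{u\in\mathcal{N}}\inprod{u}{V}$; this is the source of the $6^d$ factor in the stated bound. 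It then suffices to bound the scalar deviation $\inprod{u}{V} = \frac{1}{n}\sum_i (X_i^\top u)\bigl(h_i(Y_i)-\Exs[h_i(Y_i)|X_i]\bigr)$ for each fixed $u$, where $h_i(y):=y\tanh(yX_i^\top\theta)$, and to take a union bound over $\mathcal{N}$.

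The key per-direction estimate rests on two observations. First, each map $h_i(y)=y\tanh(yX_i^\top\theta)$ is even; combined with the symmetry of $Y_i|X_i$ about zero, this makes $\Exs[h_i(Y_i)|X_i]$ independent of the latent mixture sign, so conditionally the centering is that of a Lipschitz function of a single Gaussian. Since $h_i'(y)=\tanh(yX_i^\top\theta)+yX_i^\top\theta\,\sech^2(yX_i^\top\theta)$ is uniformly bounded, $h_i$ is globally Lipschitz, and the Lipschitz concentration of Gaussian measure gives that $h_i(Y_i)-\Exs[h_i(Y_i)|X_i]$ is sub-Gaussian with an absolute variance proxy; hence $(X_i^\top u)\bigl(h_i(Y_i)-\Exs[h_i(Y_i)|X_i]\bigr)$ is conditionally sub-Gaussian with proxy proportional to $(X_i^\top u)^2$. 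Second, on the good event $A:=\{\opnorm{\frac{1}{n}\sum_i X_iX_i^\top}\le 2\}$, which holds with probability at least $1-\exp(-cn)$ for $n\gg d$, one has $\frac{1}{n}\sum_i (X_i^\top u)^2\le 2$ simultaneously for every $u\in\sphere^{d-1}$. Combining the two, conditionally on $\{X_i\}$ and on $A$, the average $\inprod{u}{V}$ is a sum of independent sub-Gaussian increments with total variance proxy of order $1/n$, so a standard sub-Gaussian tail bound yields $\Prob\bigl(\inprod{u}{V}\ge t/2 \mid \{X_i\}\bigr)\le \exp(-nt^2/72)$ on $A$, with the absolute constant arranged by tracking the Lipschitz bound and the factor $2$ from the net.

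Finally I would assemble these pieces: union bounding the per-direction tail over the net $\mathcal{N}$ contributes the factor $6^d\exp(-nt^2/72)$, while discarding the complement of $A$ contributes the additive $\exp(-cn)$, giving the claimed probability $1-\exp(-cn)-6^d\exp(-nt^2/72)$. The hard part will be that the summands $Y_i\tanh(Y_iX_i^\top\theta)$ are unbounded in $Y_i$, so a direct Hoeffding or bounded-differences argument is unavailable; the resolution is precisely to exploit that $y\mapsto y\tanh(cy)$ is Lipschitz rather than bounded and that $Y_i$ is conditionally Gaussian, which converts the estimate into sub-Gaussian concentration, while the cost of controlling the otherwise unbounded quadratic form $\sum_i (X_i^\top u)^2$ uniformly in $u$ is absorbed into the $\exp(-cn)$ term through the operator-norm bound on the sample covariance.
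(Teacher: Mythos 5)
Your proposal is correct, but note that the paper itself contains no proof of this statement to compare against: the lemma is imported verbatim as Proposition 11 of \cite{kwon2019global}, and the paper simply invokes it to bound the term $A_3$, so your argument is a valid self-contained reconstruction of what the citation leaves implicit. Each ingredient is sound and accounts exactly for the structure of the stated bound --- the $1/2$-net of size at most $6^d$ with $\enorm{V} \le 2\max_{u}\inprod{u}{V}$, the sample-covariance event contributing $\exp(-cn)$, and the per-direction tail $\exp(-nt^2/72)$ from Gaussian Lipschitz concentration --- with the one genuinely delicate step (reducing the symmetric two-component mixture $Y_i|X_i$ to a single Gaussian via the evenness of $y \mapsto y\tanh(yX_i^\top\theta)$, without which Borell--TIS would not apply) handled correctly.
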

Applying the above lemma for $\theta = \theta^*$, we can show that $A_{3} \leq C \sqrt{d \log(1/\delta)/n}$ with probability at least $1 - \delta$. As a consequence, we obtain claim~\eqref{eq:claim_three_high_SNR}.

\paragraph{Proof of Equation~\eqref{eq:high_SNR_concentration_noise_term}.}
We use the notion of sub-exponential Orcliz norm to bound \eqref{eq:high_SNR_concentration_noise_term}. It is easy to see that $X_i Z_i \Delta_i$ is a sub-exponential random vector with Orcliz norm $O(1)$. Using the standard concentration result in \cite{Vershynin_2011}, we get the result.

\paragraph{Proof of Equation~\eqref{eq:uniform_mlr_high_case1}.} Similarly to the previous case, we need to bound the sub-exponential norm of the quantity:
\begin{align*}
    \vecnorm{W_i 1_{\event_{good}}}{\psi_1} &= \sup_{u \in \sphere^{d-1}} \sup_{p \ge 1} p^{-1}
    \Exs \brackets{ |(X_i^\top u) (X_i^\top \theta^*) \Delta_i 1_{\event_{good}}|^{p} }^{1/p} \\
    &\le \exp(-\tau^2) \sup_{u \in \sphere^{d-1}} \sup_{p \ge 1} p^{-1}
    \Exs \brackets{ |(X_i^\top u) (X_i^\top \theta^*) |^{p} }^{1/p} \\
    &\le \exp(-\tau^2) \sup_{u \in \sphere^{d-1}} \sup_{p \ge 1} p^{-1}
    \sqrt{\Exs[(X^\top u)^{2p}] \Exs \brackets{(X_i^\top \theta^*)^{2p}} }^{1/p} \\
    &\le K_0 \enorm{\theta^*} \exp(-\tau^2).
\end{align*}
We use the fact that $|\Delta_i(\theta)| \le \exp(-\tau^2)$ under the good event, Cauchy-Schwartz inequality, and $p^{th}$-order moments of Gaussian is $O((2p)^{p/2})$. Similarly using the result in \cite{Vershynin_2011}, we have the equation \eqref{eq:uniform_mlr_high_case1}.

\paragraph{Proof of Equation~\eqref{eq:uniform_mlr_high_case2}.} We check the sub-exponential $\psi_1$-Orcliz norm again.
\begin{align*}
    \vecnorm{W_i 1_{\event_1^c}}{\psi_1} &= \sup_{u \in \sphere^{d-1}} \sup_{p \ge 1} p^{-1}
    \Exs \brackets{ |(X_i^\top u) (X_i^\top \theta^*) \Delta_i 1_{\event_1^c}|^{p} }^{1/p} \\
    &\le \sup_{u \in \sphere^{d-1}} \sup_{p \ge 1} p^{-1}
    \Exs \brackets{ |(X_i^\top u) (X_i^\top (\theta^* - \theta))|^{p} }^{1/p} \\
    &\le K_1 \enorm{\theta^* - \theta},
\end{align*}
from which we again use the standard result to get \eqref{eq:uniform_mlr_high_case2}.

\paragraph{Proof of Equation~\eqref{eq:uniform_mlr_high_case3}.} 
\begin{align*}
    \vecnorm{W_i 1_{\event_1 \cap \event_2^c}}{\psi_1} &= \sup_{u \in \sphere^{d-1}} \sup_{p \ge 1} p^{-1}
    \Exs \brackets{ |(X_i^\top u) (X_i^\top \theta^*) \Delta_i 1_{\event_1 \cap \event_2^c}|^{p} }^{1/p} \\
    &\le \sup_{u \in \sphere^{d-1}} \sup_{p \ge 1} p^{-1}
    \Exs \tau \brackets{ |(X_i^\top u)|^{p} }^{1/p} \\
    &\le K_2 \tau,
\end{align*}
getting the desired result.

\paragraph{Proof of Equation~\eqref{eq:uniform_mlr_high_case4}.} 
For this quantity, note that 
$$P(\forall i \in [n], |Z_i| \lesssim \log(n/\delta)) \ge 1 - n \exp(-\tau^2).$$
Hence it is very likely that no sample falls into this category. Meanwhile, we can bound the expectation term:
\begin{align*}
    \sup_{u \in \sphere^{d-1}} \Exs[W^\top u 1_{\event_1 \cap \event_2 \cap \event_3^c}] &\le \sup_{u \in \sphere^{d-1}} \Exs[(W^\top u) 1_{\event_1 \cap \event_2} | \event_3^c] P(\cap \event_3^c) \\
    &\le \sup_{u \in \sphere^{d-1}} \Exs[|(X_i^\top u)(X_i^\top \theta^*) 1_{\event_1 \cap \event_2}| | \event_3^c] P(\event_3^c) \\
    &\le \sup_{u \in \sphere^{d-1}} \Exs[|(X_i^\top u)(X_i^\top \theta^*)|] P(\event_3^c) \\
    &\le K_4 \enorm{\theta^*} \exp(-\tau^2).
\end{align*}
Since $\tau = \Theta(\log (n \enorm{\theta^*}/\delta))$, we have the result.
\subsection{Middle SNR Regime}
\label{subsec:full_proof_middle_SNR}
We consider two cases, when $\enorm{\theta^*} \ge 1$ and $\enorm{\theta^*} \le 1$. 
    
    {\bf Case (i) $1 \le \enorm{\theta^*} \le C$:} Given the initialization conditions in Theorem \ref{theorem:EM_mlr_upper_bound}, we can get the following corollary of Lemma \ref{lemma:theorem4_global}.
    \begin{corollary}
        \label{corollary:high_SNR_contraction}
        When $\enorm{\theta^*} \ge 1$ and $\sin \alpha < 0.1$, we have
        \begin{align*}
            \enorm{\popopmlr (\theta) - \theta^*} < 0.9 \enorm{\theta - \theta^*}.
        \end{align*}
    \end{corollary}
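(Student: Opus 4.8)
The plan is to invoke Lemma~\ref{lemma:theorem4_global} directly and then control the two terms on the right-hand side of~\eqref{eq:population_contract_mlr} under the stated hypotheses, together with the initialization condition $\enorm{\theta} \ge 0.9\enorm{\theta^*}$ carried over from Theorem~\ref{theorem:EM_mlr_upper_bound} (which is maintained because $\theta$ stays in the good region, via the angle-convergence argument). First I would check that the lemma applies: since $\sin\alpha < 0.1 < \sin(\pi/8) \approx 0.38$, we have $\alpha < \pi/8$, so~\eqref{eq:population_contract_mlr} is in force. I would also record the two elementary consequences $\cos\alpha \ge \sqrt{1 - 0.01} \ge 0.99$ and the perpendicular-distance bound $\enorm{\theta - \theta^*} \ge \enorm{\theta^*}\sin\alpha$ already emphasized in the text.

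The first main step is to bound the contraction factor $\kappa$ strictly away from $1$. Writing $m := \min\{\sigma_2^2\enorm{\theta},\, \enorm{\theta^*}\cos\alpha\}$, I would lower-bound $m^2/\sigma_2^2$ by a constant in each of the two cases of the minimum. If $m = \sigma_2^2\enorm{\theta}$, then $m^2/\sigma_2^2 = \sigma_2^2\enorm{\theta}^2 \ge \enorm{\theta}^2 \ge (0.9\enorm{\theta^*})^2 \ge 0.81$, using $\sigma_2^2 \ge 1$, the norm bound, and $\enorm{\theta^*} \ge 1$. If instead $m = \enorm{\theta^*}\cos\alpha$, then $m^2/\sigma_2^2 = \enorm{\theta^*}^2\cos^2\alpha/(1 + \enorm{\theta^*}^2\sin^2\alpha)$; since this is increasing in $\enorm{\theta^*}^2 \ge 1$ and $\cos^2\alpha \ge 0.99$, it is at least $0.99/1.01 \ge 0.98$. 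Hence $m^2/\sigma_2^2 \ge 0.81$ in all cases, giving $\kappa \le (1 + 0.81)^{-1/2} < 0.75$, and in particular $\max\{\kappa, 0.6\} < 0.75$.

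The second step handles the bias term $\kappa\,(16\sin^3\alpha)\,\enorm{\theta^*}\,\eta^2/(1+\eta^2)$. Using $\eta^2/(1+\eta^2) \le 1$ and converting one factor of $\sin\alpha$ through $\enorm{\theta^*}\sin\alpha \le \enorm{\theta - \theta^*}$, this term is at most $\kappa\,(16\sin^2\alpha)\,\enorm{\theta - \theta^*}$; with $\kappa < 0.75$ and $\sin^2\alpha < 0.01$ it is below $0.75 \cdot 16 \cdot 0.01\,\enorm{\theta - \theta^*} = 0.12\,\enorm{\theta - \theta^*}$. Combining the two steps yields $\enorm{\popopmlr(\theta) - \theta^*} < (0.75 + 0.12)\,\enorm{\theta - \theta^*} = 0.87\,\enorm{\theta - \theta^*} < 0.9\,\enorm{\theta - \theta^*}$, which is the claim.

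The step I expect to be delicate is the first one: pinning $\kappa$ below roughly $0.75$ with enough slack. This is where the floor $\enorm{\theta} \ge 0.9\enorm{\theta^*}$ is essential, since in the branch $m = \sigma_2^2\enorm{\theta}$ a vanishing $\enorm{\theta}$ would drive $\kappa \to 1$ and destroy the contraction; thus the corollary implicitly relies on the good initialization region being preserved. The only other care needed is routine bookkeeping of the constants so that the contraction factor and the bias contribution together remain strictly below $0.9$, which the margins above ($0.87 < 0.9$) comfortably provide.
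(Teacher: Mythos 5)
Your proposal is correct and follows essentially the same route as the paper's proof: invoke Lemma~\ref{lemma:theorem4_global}, use the hypotheses $\enorm{\theta^*}\ge 1$, $\sin\alpha<0.1$, and the carried-over condition $\enorm{\theta}\ge 0.9\enorm{\theta^*}$ to pin $\kappa\le 0.75$, convert the cubic sine term via $\enorm{\theta^*}\sin\alpha\le\enorm{\theta-\theta^*}$, and land at $0.87<0.9$. The only (cosmetic) difference is that you bound $\kappa$ directly from its definition by a case analysis on the minimum, whereas the paper quotes the intermediate bound $\kappa\le 1-\tfrac{1}{2}\min\{\enorm{\theta}^2,\enorm{\theta^*}^2/(\enorm{\theta^*}^2+1)\}$; both yield the same constant.
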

    The proof of Corollary~\ref{corollary:high_SNR_contraction} is in Appendix~\ref{subsec:proof:corollary:high_SNR_contraction}. Furthermore, from the uniform concentration Lemma \ref{lemma:uniform_concentration_mlr} in Appendix~\ref{sec:concentration_sample_EM}, for all $\theta: \enorm{\theta - \theta^*} \le O(\enorm{\theta^*})$, we have
    \begin{align*}
        \enorm{M_{n, mlr}(\theta) - M_{mlr}(\theta)} &\leq C \sqrt{d \log^2 (n/\delta)/n}
    \end{align*}
    with probability $1 - \delta$. From here, we can check that
    \begin{align*}
        \enorm{\theta_n^t - \theta^*} \lesssim \parenth{0.9}^t \enorm{\theta - \theta^*} + O \parenth{\sqrt{d \log^2(n/\delta) /n}}.
    \end{align*}
    
    {\bf Case (ii) $C_{0}(d \log^2(n/\delta) /n)^{1/4} \leq \enorm{\theta^*} \le 1$:} In this case, the result of Lemma~\ref{lemma:theorem4_global} shows that:
    \begin{corollary}
        \label{corollary:middle_SNR_contraction}
        When $\enorm{\theta^*} \le 1$ and $\sin \alpha < 0.1$, we have
        \begin{align}
            \enorm{M_{mlr}(\theta) - \theta^*} &\le \parenth{1 - \frac{1}{8} \enorm{\theta^*}^2} \enorm{\theta - \theta^*}.
        \end{align}
    \end{corollary}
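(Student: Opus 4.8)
The plan is to read off the corollary directly from Lemma~\ref{lemma:theorem4_global}, which bounds $\enorm{M_{mlr}(\theta)-\theta^*}$ by $\max\{\kappa,0.6\}\enorm{\theta-\theta^*}$ plus the additive ``angular bias'' term $\kappa(16\sin^3\alpha)\enorm{\theta^*}\frac{\eta^2}{1+\eta^2}$, and then to control the multiplicative factor $\kappa$ and the bias term separately before recombining. Throughout I write $\eta=\enorm{\theta^*}\le 1$ and invoke the standing invariant $\enorm{\theta}\ge 0.9\eta$ recorded at the start of Appendix~\ref{subsec:proof_main_theorem} (from the initialization $\enorm{\theta_n^0}\ge 0.9\enorm{\theta^*}$ and its preservation). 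This lower bound is genuinely needed: at $\theta=0$ one has $M_{mlr}(0)=0$ and no contraction at all, so the factor $1-\frac18\eta^2$ cannot hold without pinning $\enorm{\theta}$ away from zero.

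First I would bound $\kappa$. From $\sin\alpha<0.1$ we get $\cos\alpha>0.99$ and $\sigma_2^2=1+\eta^2\sin^2\alpha\in[1,1.01]$. Hence $\min\{\sigma_2^2\enorm{\theta},\,\eta\cos\alpha\}\ge \min\{0.9\eta,\,0.99\eta\}=0.9\eta$, so $\kappa\le\parenth{1+(0.9\eta)^2/\sigma_2^2}^{-1/2}\le\parenth{1+0.8\eta^2}^{-1/2}$. Moreover $\min\le\eta\cos\alpha\le 1$ forces $\kappa\ge 1/\sqrt 2>0.6$, so $\max\{\kappa,0.6\}=\kappa$ and the $0.6$ branch can be discarded.

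Next I would absorb the bias term into the contraction using the inequality $\eta\sin\alpha\le\enorm{\theta-\theta^*}$ stated at the start of the appendix. Writing $16\sin^3\alpha\cdot\eta=16\sin^2\alpha\,(\eta\sin\alpha)\le 16\sin^2\alpha\,\enorm{\theta-\theta^*}$ and using $\sin^2\alpha<0.01$ together with $\kappa\le 1$, the bias term is at most $0.16\frac{\eta^2}{1+\eta^2}\enorm{\theta-\theta^*}$. It is essential to keep the factor $\frac{\eta^2}{1+\eta^2}$ rather than the cruder $\eta^2$: near the top of the regime $\eta\approx 1$ this factor is only $\frac12$, and that saturation is exactly what keeps the bias below the gain from $\kappa$ where $\kappa$ is weakest. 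Combining the two estimates yields $\enorm{M_{mlr}(\theta)-\theta^*}\le\bigparenth{(1+0.8\eta^2)^{-1/2}+0.16\frac{\eta^2}{1+\eta^2}}\enorm{\theta-\theta^*}$.

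The final step is the scalar inequality $(1+0.8\eta^2)^{-1/2}+0.16\frac{\eta^2}{1+\eta^2}\le 1-\frac18\eta^2$ for $\eta\in(0,1]$, which I would verify as a routine one-variable check: setting $s=\eta^2\in(0,1]$, the gap $F(s):=(1+0.8s)^{-1/2}+0.16\frac{s}{1+s}-1+\frac{s}{8}$ satisfies $F(0)=0$, $F'(0)=-0.4+0.16+0.125<0$, and $F(1)=(1.8)^{-1/2}+0.08-1+0.125\approx-0.05<0$, with monotonicity of $F$ on $[0,1]$ closing the intermediate range. The main obstacle is precisely this uniform comparison over all of $\eta\in(0,1]$: the contraction gain supplied by $\kappa$ is only of order $\eta^2$ and competes head-on with an angular-bias term of the same order, so the argument closes only because $\sin^2\alpha<0.01$ shrinks the bias coefficient and the $\frac{\eta^2}{1+\eta^2}$ saturation offsets the weakening of $\kappa$ as $\eta\to 1$. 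The secondary delicate point is justifying the norm invariant $\enorm{\theta}\gtrsim\eta$ that keeps the relevant branch of the $\min$ bounded below, without which neither $\kappa$ nor the whole estimate would be meaningful.
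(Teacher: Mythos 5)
Your proof is correct and follows essentially the same route as the paper's: both read the corollary off Lemma~\ref{lemma:theorem4_global}, lower-bound the $\min$ inside $\kappa$ via $\enorm{\theta}\ge 0.9\enorm{\theta^*}$ and $\cos\alpha > 0.99$, and absorb the $16\sin^3\alpha$ bias term using $\enorm{\theta^*}\sin\alpha\le\enorm{\theta-\theta^*}$ together with $\sin^2\alpha<0.01$. If anything, your version is tighter than the paper's: the paper linearizes $\kappa\le 1-\frac{1}{4}\enorm{\theta^*}^2$ and claims the bias is at most $\frac{1}{8}\enorm{\theta^*}^2\enorm{\theta-\theta^*}$ (its own displayed inequalities only yield $0.16\enorm{\theta^*}^2\enorm{\theta-\theta^*}$), whereas keeping the exact factor $\parenth{1+0.8\eta^2}^{-1/2}$, the saturation $\frac{\eta^2}{1+\eta^2}$, and closing with the one-variable check (which indeed holds, e.g.\ since $F''>0$ makes $F$ convex with $F(0)=0$, $F(1)<0$) genuinely delivers the stated $\frac{1}{8}$ constant.
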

    In order to analyze the convergence of finite-sample EM operator, we first divide the iterations into several epochs. Let $\bar{C}_0 = \enorm{\theta_n^0 - \theta^*}$. We consider that in each $l^{th}$ epoch, $\theta$ satisfies $\bar{C}_0 2^{-l-1} \le \enorm{\theta - \theta^*} \le \bar{C}_0 2^{-l}$. Note that such consideration of dividing into several epochs is only conceptual, and does not affect the implementation of the EM algorithm. 
    
    Consider we are in $l^{th}$ epoch such that $\bar{C}_0 2^{-l-1} \le \enorm{\theta - \theta^*} \le \bar{C}_0 2^{-l}$. The key idea is that in each epoch, EM makes a progress toward the ground truth as long as the improvement in population operator overcomes the statistical error, {\it i.e.,}
    $$\frac{1}{8}  \enorm{\theta^*}^2 \enorm{\theta - \theta^*} \geq 2 c r \sqrt{d \log^2(n/\delta) /n},$$ 
    where $c$ is a constant in Lemma \ref{lemma:deviation_bound_mlr}. Here, since $\enorm{\theta} \le \enorm{\theta^*} + \enorm{\theta - \theta^*}$, we can set $r = \enorm{\theta^*} + \bar{C}_0 2^{-l}$. This in turn implies that in $l^{th}$ epoch, if the following is true:
    $$\frac{1}{8} \enorm{\theta^*}^2 \bar{C}_0 2^{-l-1} \ge 2 c r \sqrt{d\log^2(n/\delta) /n} \ge 4 c (\enorm{\theta^*} + \bar{C}_0 2^{-l}) \sqrt{d \log^2(n/\delta) /n},$$ 
    then we have
    \begin{align*}
        \enorm{\samopmlr(\theta) - \theta^*} \le \parenth{1 - \frac{1}{16} \enorm{\theta^*}^2} \enorm{\theta - \theta^*}.
    \end{align*}
    Arranging the terms, we require that
    \begin{align*}
        \bar{C}_0 2^{-l} \parenth{\enorm{\theta^*}^2 - c_1 \sqrt{d \log^2(n/\delta) /n}} \ge c_2 \enorm{\theta^*} \sqrt{d \log^2 (n/\delta) /n},
    \end{align*}
    for some universal constants $c_1, c_2 > 0$. Recall that we are in middle SNR regime where (with appropriately set constants) $$\enorm{\theta^*}^2 \geq (c_1 + 1) \sqrt{d \log^2(n/\delta)/n}.$$
    Therefore, $\theta$ is guaranteed to move closer to $\theta^*$ as long as $\bar{C}_0 2^{-l} \leq c_2 \enorm{\theta^*}^{-1} \sqrt{d \log^2 (n/\delta)/n}$. Note that each epoch takes $O(\enorm{\theta^*}^{-2})$ iterations to enter the next epoch. We can conclude that after $l = O(\log(n/d))$ epochs, we enter the region where $\enorm{\theta - \theta^*} \leq c_{2} \enorm{\theta^*}^{-1} \sqrt{d \log^2 (n/\delta)/n}$ for some absolute constant $c_2 > 0$. 
    
    For $\delta$ probability bound, we can replace $\delta$ with $\delta / \log (n/d)$ and take a union bound of the uniform deviation of finite-sample EM operators given in Lemma \ref{lemma:uniform_concentration_mlr} for all epochs. This does not change the complexity in the final statistical error. 
    
    Finally, the required number of iterations in each epoch is $O(\enorm{\theta^*}^{-2})$ to make $\enorm{\theta - \thetastar}$ a half. Since the total number of epoch we require is $O(\log (n/d))$, the total number of iterations is at most $O(\enorm{\theta^*}^{-2} \log (n/d))$, concluding the proof in middle-high SNR regime.
    
    \begin{remark}
        After $O(\log(n/d))$ epochs, studying on the property of the Hessian in a very close neighborhood of $\enorm{\theta^*}$ may lead to a guarantee that EM indeed converges to the empirical MLE, see Section 6 in \cite{wu2019randomly} for example.
    \end{remark}

\subsection{Low SNR Regime}
\label{subsec:full_proof_low_SNR}
As mentioned in the main text, the core idea of the low SNR regime is that EM essentially cannot distinguish the cases between $\theta^* = 0$ and $\theta^* \neq 0$. Therefore, instead of studying the contraction of population EM operator to $\thetastar$, we study its contraction to 0. Given that insight, we have the following result with the norm of population EM operator: 
\begin{lemma}
    \label{lemma:contraction_mlr_low_SNR}
    There exists some universal constants $c_u > 0$ such that,
    \begin{align*}
        \enorm{\theta} (1 - 4 \enorm{\theta}^2 - c_u \enorm{\theta^*}^2) \le \enorm{\popopmlr(\theta)} \le \enorm{\theta} (1 - \enorm{\theta}^2 + c_u \enorm{\theta^*}^2).
    \end{align*}
\end{lemma}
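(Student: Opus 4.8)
The plan is to work in the transformed coordinate system of Appendix~\ref{Appendix:additional_notation}, writing $\popopmlr(\theta) = m_1 v_1 + m_2 v_2$ where, with $s \defn \enorm{\theta}$ and $y = x_1 b_1^* + x_2 b_2^* + Z$ for $Z \sim \NORMAL(0,1)$ independent of $(x_1, x_2)$ (using the symmetry reduction to a single Gaussian), we have
\begin{align*}
    m_1 = \Exs[\tanh(s x_1 y)\, x_1 y], \qquad m_2 = \Exs[\tanh(s x_1 y)\, x_2 y].
\end{align*}
Since $\enorm{\popopmlr(\theta)} = \sqrt{m_1^2 + m_2^2}$, the two-sided bound will follow from (i) a sharp two-sided estimate on $m_1$ and (ii) showing that $m_2$ is of order $s\enorm{\theta^*}^2$, hence negligible. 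Throughout I would use $\enorm{\theta} \le 0.2$ together with the fact that $\enorm{\theta^*}$ is small in this regime.

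For the main term, set $\phi(s) \defn m_1$. Since $\tanh$ is odd, $\phi$ is an odd, smooth function of $s$ with $\phi(0) = 0$, and the core is a third-order Taylor expansion with remainder. Differentiating under the expectation and using that every derivative of $\tanh$ is bounded gives $\phi'(0) = \Exs[(x_1 y)^2]$ and cubic coefficient $-\tfrac13\Exs[(x_1 y)^4]$, while the remainder is controlled by $\Exs[|x_1 y|^5]\,\sup_u |\tanh^{(5)}(u)| = O(s^5)$ with an absolute constant (this is where boundedness of $\tanh$, rather than naive term-by-term expansion, is needed). The two moments are computed directly: $\Exs[(x_1 y)^2] = 1 + 2 b_1^{*2} + \enorm{\theta^*}^2 = 1 + O(\enorm{\theta^*}^2)$ and $\Exs[(x_1 y)^4] = 9 + O(\enorm{\theta^*}^2)$, the leading values $1$ and $9$ arising from the $\theta^* = 0$ case $y = Z$. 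Hence
\begin{align*}
    \phi(s) = s\bigl(1 - 3 s^2 + O(\enorm{\theta^*}^2)\bigr) + O(s^5),
\end{align*}
and because $s \le 0.2$ the $O(s^5)$ remainder has magnitude at most $s\cdot s^2$, so the effective coefficient of $s^2$ inside the parentheses stays within $[-4,-1]$. This yields $s(1 - 4s^2 - c\enorm{\theta^*}^2) \le \phi(s) \le s(1 - s^2 + c\enorm{\theta^*}^2)$ and in particular $\phi(s) > 0$.

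To handle $m_2$, the crucial observation is that $m_2 = 0$ whenever $b_2^* = 0$ (that is, $\theta \parallel \theta^*$), since then both $y$ and $\tanh(s x_1 y)$ are independent of $x_2$ and $\Exs[x_2] = 0$. Quantitatively, Gaussian integration by parts in $x_2$ gives $m_2 = b_2^*\,\Exs\bigl[s x_1 y\,\sech^2(s x_1 y) + \tanh(s x_1 y)\bigr]$, and the bracketed factor vanishes at $s=0$ with leading term $2 s\,\Exs[x_1 y] = 2 s b_1^*$, hence is $O(s\enorm{\theta^*})$; since $|b_2^*| \le \enorm{\theta^*}$ this gives $|m_2| \le C s \enorm{\theta^*}^2$. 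Combining the pieces, the lower bound follows from $\enorm{\popopmlr(\theta)} \ge m_1 = \phi(s) \ge s(1 - 4s^2 - c\enorm{\theta^*}^2)$, and the upper bound from $\enorm{\popopmlr(\theta)} \le m_1 + |m_2| \le s(1 - s^2 + c\enorm{\theta^*}^2) + C s\enorm{\theta^*}^2$, after enlarging the constant to $c_u$.

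I expect the main obstacle to be the rigorous control of the Taylor remainder for $\phi$: one must check that the sum of all terms of order $s^5$ and higher in $\phi(s)/s$ stays inside the $\pm s^2$ buffer afforded by the gap between the true cubic coefficient $-3$ and the window endpoints $-1$ and $-4$, which is precisely what pins down the threshold $\enorm{\theta}\le 0.2$. A secondary bookkeeping burden is tracking the $\enorm{\theta^*}$-dependence of every Gaussian moment carefully enough to confirm that all corrections are genuinely $O(\enorm{\theta^*}^2)$ rather than merely $O(\enorm{\theta^*})$, so that they can be absorbed into $c_u\enorm{\theta^*}^2$.
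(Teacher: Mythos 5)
Your proposal follows the same architecture as the paper's proof in Appendix~\ref{subsec:proof:contraction_mlr_low}: the transformed coordinates of Appendix~\ref{Appendix:additional_notation}, a degree-six Taylor estimate of $x\tanh(x)$ against Gaussian moments for the $v_1$-component, and Stein's lemma for the $v_2$-component. However, two execution steps need repair, and the first one genuinely fails on part of the stated domain. For $m_1$, the symmetric Lagrange remainder is exactly too weak: since $\sup_u|\tanh^{(5)}(u)| = \tanh^{(5)}(0) = 16$, your remainder after taking expectations is $\tfrac{16}{120}\Exs[(x_1y)^6]\,s^5 = \bigl(30 + O(\enorm{\theta^*}^2)\bigr)s^5$ (this $30$ is the same constant that appears in the paper's display~\eqref{eq:pop_mlr_firstvalue_taylor}). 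Treating it as a two-sided $\pm 30 s^5$ error forces the lower-bound requirement $3s^2 + 30s^4 \le 4s^2$, i.e.\ $s \le 1/\sqrt{30} \approx 0.18$, which fails for $s \in (0.18, 0.2]$ --- and $\enorm{\theta} \le 0.2$ is precisely the range on which the lemma is invoked in the low-SNR analysis. The paper avoids this by using the \emph{one-sided} global envelopes $x^2 - x^4/3 \le x\tanh(x) \le x^2 - x^4/3 + \tfrac{2}{15}x^6$ of equation~\eqref{eq:lemma_contraction_pop_ind_2}, which follow by monotonicity (e.g.\ $\tfrac{d}{dx}(\tanh x - x + x^3/3) = x^2 - \tanh^2 x \ge 0$). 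With the signed bounds, the $+30s^4$ term enters only the upper bound, where the available buffer is $2s^2$ (fine up to $s \le 1/\sqrt{15} \approx 0.26$), while the lower bound keeps the clean coefficient $-3 > -4$ with no remainder at all. Swap your $\pm O(s^5)$ remainder for these one-sided polynomial bounds and the $m_1$ step closes on all of $\enorm{\theta} \le 0.2$.

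For $m_2$, your Stein identity $m_2 = b_2^*\,\Exs[h(s x_1 y)]$ with $h(u) = u\,\sech^2(u) + \tanh(u)$ is correct, but the justification ``vanishes at $s=0$ with leading term $2sb_1^*$, hence is $O(s\enorm{\theta^*})$'' is a non sequitur: Taylor in $s$ gives $g(s) = 2sb_1^* + O(s^2)$, where the $O(s^2)$ term is controlled by $g''$, whose natural bound $O(\Exs[(x_1y)^2]) = O(1)$ carries no factor of $\enorm{\theta^*}$; as written you only obtain $|m_2| \lesssim s\enorm{\theta^*}^2 + s^2\enorm{\theta^*}$, not $|m_2| \lesssim s\enorm{\theta^*}^2$. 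The fix stays inside your computation: $h$ is odd with $\sup_u|h'(u)| \le 2$, and conditionally on $x_1$ one has $y \sim \NORMAL(x_1 b_1^*, 1 + (b_2^*)^2)$, so $g(s) = \Exs[h(sx_1y)]$ vanishes identically when $b_1^* = 0$, while $|\partial g/\partial b_1^*| = |s\,\Exs[h'(sx_1y)\,x_1^2]| \le 2s$; hence $|g(s)| \le 2s|b_1^*|$ uniformly, giving $|m_2| \le 2s|b_1^* b_2^*| \le s\enorm{\theta^*}^2$. (The paper reaches the same conclusion by a second application of Stein's lemma together with the symmetrization inequality $\tanh(a+x) + \tanh(a-x) \le 2\tanh(a)$ for $a>0$.) With these two repairs, your argument coincides with the paper's proof.
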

The proof of the Lemma \ref{lemma:contraction_mlr_low_SNR} is in Section \ref{subsec:proof:contraction_mlr_low}. The result of Lemma~\ref{lemma:contraction_mlr_low_SNR} shows that the contraction coefficient of the population operator $\popopmlr$ consists of two terms: the non-expansive term, which is at the order of $1 - \mathcal{O}(\|\theta\|^2)$, and the quadratic term $\|\thetastar\|^2$ (up to some constant). Since we are in low SNR regime, the contraction coefficient gets close to 1. It demonstrates that the updates from population EM operator suffers from sub-linear convergence rate, instead of geometric convergence rate as that in high SNR regime.

From Lemma \ref{lemma:deviation_bound_mlr}, we immediately have that
$$\sup_{\enorm{\theta} \le r} \enorm{\samopmlr(\theta ) - \popopmlr(\theta)} \le cr\sqrt{d \log^2(n/\delta) /n},$$ 
for some universal constant $c > 0$. 

Given the contraction of population EM operator and the deviation bound between the sample and population EM operators, we are ready to study the convergence behaviors of EM algorithm under the low SNR regime. Our proof argument follows the localization argument used in Case (ii) of middle SNR regime. In particular, let the target error be $\epsilon_n := C \sqrt{d \log^2(n/\delta)/n}$ with some absolute constant $C > 0$. We assume that we start from the initialization region where $\enorm{\theta} \le \epsilon_n^{\alpha_0}$ for some $\alpha_0 \in [0, 1/2)$.

The localization argument proceeds as the following: suppose that $\epsilon_n^{\alpha_{l+1}} \le \enorm{\theta} \le \epsilon_n^{\alpha_l}$ at the $l^{th}$ epoch for $l \ge 0$. We let $C > 0$ sufficiently large such that $$\epsilon_n \ge 4 c_u \enorm{\theta^*}^2 + 4 \sup \limits_{\theta \in \ball(\theta^{*}, r_l)} \enorm{ \samopind( \theta) - \popopind ( \theta)} / r_l,$$ with $r_l = \epsilon_n^{\alpha_l}$. During this period, from Lemma \ref{lemma:contraction_mlr_low_SNR} on contraction of population EM, and Lemma \ref{lemma:deviation_bound_mlr} concentration of finite sample EM, we can check that
\begin{align*}
    \enorm{\samopind(\theta)} &\le \enorm{\theta} - 0.5 \enorm{\theta}^3 + c_u \enorm{\theta} \enorm{\theta^*}^2 + \sup \limits_{\theta \in \ball(\theta^{*}, r)} \enorm{\samopind( \theta) - \popopind ( \theta)} \\
    &\le \enorm{\theta} - \frac{1}{2} \epsilon_n^{3\alpha_{l+1}} + \frac{1}{4} \epsilon_n^{\alpha_l + 1}.
\end{align*}
Note that this inequality is valid as long as $\epsilon_n^{\alpha_{l+1}} \le \enorm{\theta} \le \epsilon_n^{\alpha_{l}}$. Now we define a sequence $\alpha_l$ using the following recursion:
\begin{align}
    \label{eq:alpha_sequence_indep}
    \alpha_{l+1} = \frac{1}{3} (\alpha_l + 1).
\end{align}
The limit point of this recursion is $1/2$, which will give $\epsilon_n^{\alpha_\infty} \approx (d/n)^{1/4}$ as argued in the main text. Hence during the $l^{th}$ epoch, we have
\begin{align*}
    \enorm{\samopind(\theta)} &\le \enorm{\theta} - \frac{1}{4} \epsilon_n^{\alpha_l + 1}.
\end{align*}
Furthermore, the number of iterations required in $l^{th}$ epoch is 
\begin{align*}
    t_l := (\epsilon_n^{\alpha_l} - \epsilon_n^{\alpha_{l+1}}) / \epsilon_n^{\alpha_l + 1} \le \epsilon_n^{-1}.
\end{align*}
After getting out of $l^{th}$ epoch, it gets into $(l+1)^{th}$ epoch which can be analyzed in the same way. From this, we can conclude that after going through $l$ epochs in total, we have $\enorm{\theta} \le \epsilon_n^{\alpha_{l+1}}$. Note that the number of EM iterations taken up to this point is $l \epsilon_n^{-1}$. 

It is easy to check $\alpha_{l} = (1/3)^l (\alpha_0 - 1/2) + 1/2$ from \eqref{eq:alpha_sequence_indep}. We can set $l = C \log (1/\beta)$ for some universal constant $C$ such that $\alpha_l$ is $1/2 - \beta$ for arbitrarily small $\beta > 0$. In conclusion, $\enorm{\theta_n^t} \le \epsilon_n^{1/2 - \beta} \leq c \cdot (d \ln^2(n/\delta)/n)^{1/4 - \beta/2}$ with high probability as long as $t \ge \epsilon_n^{-1} l \gtrsim \sqrt{d/n} \log (1/\beta)$ where $c$ is some universal constant. Hence we can set $\beta = C/ \log(d/n)$ to get a desired result $\enorm{\theta_n^t} \leq c \cdot (d \ln^2(n/\delta)/n)^{1/4}$. Since $\enorm{\theta^*} \le C_0 (d \ln^2(n/\delta) /n)^{1/4}$, it implies $\enorm{\theta_n^t - \theta^*} \leq c_{1} (d \ln^2(n/\delta)/n)^{1/4}$ where $c_{1}$ is some universal constant.

Note that we need the union bound of the concentration of sample EM operators for all $l = 1, ..., C \log (1/\beta)$, such that the argument holds for all epochs. For this purpose, we can replace $\delta$ by $\delta / \log(1/\beta)$. This does not change the order of $\epsilon_n$, hence the proof is complete.


\section{Global Convergence of the (Easy) EM}
\label{Appendix:2MLR_init_global}
This appendix gives a full proof of Theorem \ref{theorem:EM_mlr_global}. We prove the result for bounded instances with $\{\theta^*: \enorm{\theta^*} \le C\}$ for some universal constant $C > 0$. The global convergence property of the (Easy)-EM algorithm will be used for the initialization for Theorem \ref{theorem:EM_mlr_upper_bound}, hence we will focus on the iterations that the estimator stays outside of the initialization region. While we start with Easy-EM when $\cos \angle(\theta_n^0, \theta^*)$ is in order $O(1/\sqrt{d})$, note that we can safely go back to the standard EM algorithm as soon as $\cos \angle(\theta_n^t, \theta^*)$ becomes $\Theta(1)$ (see Section 4 in~\cite{kwon2019global} for more details).

\subsection{Decreasing Norm with Large Initialization in Low SNR Regime} 
\label{subsec:decrease_norm}
In low SNR regime, we require that $\enorm{\theta_n^0} \le 0.2$. Here, when we initialize with large norm such that $\enorm{\theta_n^0} \ge 0.2$, we show that in a finite number of steps it becomes that $\enorm{\theta_n^0} \le 0.2$. We remark that in low SNR regime we consider when $\enorm{\theta^*} \ll 1$.

First, suppose $\enorm{\theta} \ge 2/3$. Then,
\begin{align*}
    \enorm{\popopmlr(\theta)} &\le \sup_{u \in \sphere^{d-1}} \Exs[(X^\top \theta^*) (X^\top u) \tanh(YX^\top \theta)] + \Exs[Z (X^\top u) \tanh(YX^\top \theta)] \\
    &\le \sup_{u \in \sphere^{d-1}} \sqrt{\Exs[(X^\top \theta^*)^2] \Exs[(X^\top u)^2]} + \Exs[|Z (X^\top u)|], \\
    &\le \enorm{\theta^*} + \Exs[|Z (X^\top u)|] \le \enorm{\theta^*} + 2 / \pi.
\end{align*}
where $Z \sim \NORMAL(0,1)$ such that $Y = X^\top \theta^* + Z$. Since the uniform deviation in Easy-EM is given by Lemma \ref{lemma:uniform_concentration_mlr} as $\sqrt{d \log^2(n/\delta) / n}$, we can conclude that
\begin{align*}
    \enorm{\samopmlr(\theta)} &\le \enorm{\popopmlr(\theta)} + O \parenth{ \sqrt{d \log^2(n/\delta) / n} } \\
    &\le \enorm{\theta^*} + 2/\pi + O \parenth{\sqrt{d \log^2(n/\delta) / n}} \le 2/3.
\end{align*}

Next, suppose $0.2 \le \enorm{\theta} \le 2/3$. Following the notation in Appendix \ref{Appendix:additional_notation}, we recall equation~\eqref{eq:pop_mlr_transformed}, 
\begin{align*}
    \popopmlr(\theta) = \Exs[yx_1 \tanh(yx_1 \enorm{\theta})] v_1 + \Exs[yx_2 \tanh(yx_1 \enorm{\theta})] v_2,
\end{align*}
where $y = X^\top \theta^* + z$ where $z \sim \NORMAL(0,1)$, $x_1 = X^\top v_1$ and $x_2 = X^\top v_2$. We will see in Appendix \ref{subsec:proof:contraction_mlr_low} that $\popopmlr(\theta)^\top v_2 \le \frac{1}{2} \enorm{\theta} \enorm{\theta^*}^2 \le c_0 \sqrt{d\log^2(n/\delta) /n}$ for some absolute constant $c_0 > 0$. Therefore, we focus on bounding the first term.

Let $a = 4$, and define event $\event := \{ x_1^2 + z^2 \le a \}$. We expand $\popopmlr(\theta)$ as follows:
\begin{align*}
    \popopmlr(\theta)^\top v_1 &\le \enorm{\theta} \Exs[y^2 x_1^2 1_{\event}] +  \Exs[|yx_1| 1_{\event^c}] \\
    &\le \enorm{\theta} \Exs[z^2 x_1^2 1_{\event}] +  \Exs[|z x_1| 1_{\event^c}] + O(\enorm{\theta^*}).
\end{align*}
By converting the above expression to Rayleigh distribution with $x_1 = r \cos w, z = r \sin w$, we can more explicitly find the values of the expectations in the above equation. That is,
\begin{align*}
    \Exs[z^2 x_1^2 1_{\event}] = \frac{1}{2\pi} \int_0^{2\pi} \cos^2 w \sin^2 w dw \int_{0}^4 r^5 \exp(-r^2 / 2) dr \approx 1 - 0.013,
\end{align*}
and
\begin{align*}
    \Exs[|z x_1| 1_{\event^c}] = \frac{1}{2\pi} \int_0^{2\pi} |\cos w \sin w| dw \int_{4}^\infty r^3 \exp(-r^2 / 2) dr \le 0.002,
\end{align*}
Now using the condition that $\enorm{\theta} \le 0.2$, we have
\begin{align*}
    \popopmlr(\theta)^\top v_1 \le \enorm{\theta} (1 - 0.003) + O(\enorm{\theta^*}) \le \gamma \enorm{\theta} + O(\enorm{\theta^*}),
\end{align*}
where $\gamma = 0.997 < 1$. Since the deviation of finite-sample EM operator is in order $\sqrt{d\log^2(n/\delta)/n}$, we can conclude that
\begin{align*}
    \enorm{\popopmlr(\theta)} \le \gamma \enorm{\theta} + O \parenth{\sqrt{d \log^2(n/\delta) / n} + \enorm{\theta^*}}.
\end{align*}
Hence we can conclude that after $t = O(1)$ iterations, $\enorm{\theta_n^t} \le 0.2$.

\subsection{Angle Convergence in Middle-to-High SNR Regime}
\label{subsec:proof:global_angle}
Now we work in the regime where $\enorm{\theta^*} = \eta \ge c_\eta (d \log(n/\delta)^2 / n)^{1/4}$ for some sufficiently large constant $c_\eta > 0$. We first focus on the convergence of angle from random initialization.

Let us denote $\alpha_t := \angle (\theta_n^t, \theta^*)$. Note that since we initialize with a random vector sampled uniformly from the unit sphere, $\cos \alpha_0 = O(1/\sqrt{d})$. We bring the following lemma for the change in angles for a {\it fixed} estimator $\theta_n^t$ given in \cite{kwon2019global}:
\begin{lemma}[Theorem 8 in \cite{kwon2019global}]
    \label{lemma:theorem8_cosine_sin_global}
    Let $\epsilon_f := c_0 \max(1, \eta^{-1}) \sqrt{d / n}$ be the statistical fluctuation with some universal constant $c_0 > 0$ in one-step iteration of Easy-EM. Suppose the norm of the current estimator $\enorm{\theta_n^t}$ is larger than $\enorm{\theta^*} / 10$. Then we have,
    \begin{align}
        \cos \alpha_{t+1} &\ge \kappa_t (1 - 10 \epsilon_f) \cos \alpha_t - \frac{\epsilon_f}{\sqrt{d}}, \\
        \sin^2 \alpha_{t+1} &\le \kappa_t' \sin^2 \alpha_t +  \epsilon_f,
    \end{align}
    where $\kappa_t = \sqrt{1 + \frac{\sin^2 \alpha_t}{\cos^2 \alpha_t + \frac{1}{2} (1 + \eta^{-2})}} \ge 1$, and $\kappa_t' = \parenth{1 + \frac{2\eta^2}{1 + \eta^2} \cos^2 \alpha_t}^{-1} < 1$.
\end{lemma}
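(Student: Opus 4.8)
Throughout I would work in the rotated coordinates of Appendix~\ref{Appendix:additional_notation}, where $v_1 = \theta/\enorm{\theta}$ and $\textbf{span}(v_1,v_2) = \textbf{span}(\theta,\theta^*)$, so that $b_1^* = \enorm{\theta^*}\cos\alpha_t$ and $b_2^* = \enorm{\theta^*}\sin\alpha_t$. By the sign symmetry recorded there, the population operator lies entirely in $\textbf{span}(v_1,v_2)$, so I would write $\popopmlr(\theta) = M_1 v_1 + M_2 v_2$ with $M_1 = \Exs[y x_1 \tanh(y x_1 \enorm{\theta})]$ and $M_2 = \Exs[y x_2 \tanh(y x_1 \enorm{\theta})]$, where $y\mid(x_1,x_2)\sim\NORMAL(x_1 b_1^* + x_2 b_2^*,1)$. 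Then $\cos\angle(\popopmlr(\theta),\theta^*) = (M_1 b_1^* + M_2 b_2^*)/(\enorm{\popopmlr(\theta)}\,\enorm{\theta^*})$, and the corresponding $\sin^2$ is the squared orthogonal component, so the first task reduces to evaluating $M_1,M_2$ and then transferring the resulting population bound to the finite sample operator $M_{easy}$.

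First I would establish the purely population statements. Integrating out the out-of-plane part of $x_2$ and the Gaussian noise reduces $M_1,M_2$ to two-dimensional integrals in $(x_1,z)$ with effective variance $\sigma_2^2 = 1 + \enorm{\theta^*}^2\sin^2\alpha_t$; tracking the leading behavior of $\tanh(y x_1\enorm{\theta})$ shows the operator acquires a component along $v_2$ pointing toward $\theta^*$, so the angle strictly decreases. Making the leading-order dependence on $\cos^2\alpha_t$ and $\eta^2$ precise, I would obtain $\cos\angle(\popopmlr(\theta),\theta^*)\ge\kappa_t\cos\alpha_t$ with $\kappa_t = \sqrt{1 + \sin^2\alpha_t/(\cos^2\alpha_t + \tfrac12(1+\eta^{-2}))}\ge 1$, and $\sin^2\angle(\popopmlr(\theta),\theta^*)\le\kappa_t'\sin^2\alpha_t$ with $\kappa_t' = (1 + \tfrac{2\eta^2}{1+\eta^2}\cos^2\alpha_t)^{-1}<1$. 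The hypothesis $\enorm{\theta}\ge\enorm{\theta^*}/10$ is what guarantees the $\tanh$ argument is large enough for these expansions to be valid and for the rotation to be genuinely toward $\theta^*$.

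The more delicate half is passing to $M_{easy}(\theta) = \popopmlr(\theta) + E$. The key point is that two distinct noise scales govern $E$. For the \emph{single fixed} direction $\theta^*$, the projection $\langle E,\theta^*\rangle/\enorm{\theta^*}$ is an average of i.i.d.\ mean-zero terms and concentrates at the one-dimensional rate $O(\epsilon_f/\sqrt{d})$ (since $\epsilon_f = c_0\max(1,\eta^{-1})\sqrt{d/n}$, dividing by $\sqrt d$ recovers the genuine $n^{-1/2}$ per-direction fluctuation), whereas the full-vector deviation $\enorm{E}$ is controlled only at the uniform scale $\epsilon_f$ via Lemma~\ref{lemma:uniform_concentration_mlr}, which aggregates over all $d$ coordinates. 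Substituting $M_{easy}=\popopmlr + E$ into the cosine and expanding, the numerator error is the single-direction term $O(\epsilon_f/\sqrt{d})$, while the denominator inflation $\enorm{M_{easy}}\le\enorm{\popopmlr}(1+O(\epsilon_f))$ yields the multiplicative $(1-10\epsilon_f)$ factor; combining with the population bound gives $\cos\alpha_{t+1}\ge\kappa_t(1-10\epsilon_f)\cos\alpha_t - \epsilon_f/\sqrt d$. The $\sin^2$ inequality follows analogously, with the additive $\epsilon_f$ arising from the orthogonal component of $E$, whose squared norm sits at the uniform scale.

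The main obstacle is the population computation of the second paragraph: extracting the \emph{exact} constants $\kappa_t,\kappa_t'$, rather than merely showing the angle contracts, requires carefully evaluating the Gaussian integrals defining $M_1,M_2$ and isolating their dependence on $\cos^2\alpha_t$ and $\eta^2$. A secondary subtlety is making the separation of noise scales rigorous and uniform over the estimator — in particular, ensuring that in the initial regime $\cos\alpha_t = O(1/\sqrt d)$ the signal is not swamped by orthogonal statistical error. This is precisely why the single-direction rate $\epsilon_f/\sqrt d$, and not the uniform rate $\epsilon_f$, must be used in the numerator, and it is the step whose formalization the surrounding text flags as the involved part.
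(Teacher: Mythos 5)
Your architecture matches the proof this lemma actually has: the statement is imported verbatim from Theorem 8 of \cite{kwon2019global}, and the paper's own discussion surrounding it highlights exactly the two ingredients you identify --- a population angle-contraction bound (Theorem 2 of \cite{kwon2019global}, which is where $\kappa_t$ comes from), and the observation that the numerator error of the cosine only needs concentration in the single fixed direction $u=\theta^*/\enorm{\theta^*}$, at rate $\epsilon_f/\sqrt{d}$, while only the denominator pays the full-vector rate $\epsilon_f$. As a proof, however, your proposal has a genuine gap, and you name it yourself: the entire quantitative content of the lemma lies in the exact constants $\kappa_t=\sqrt{1+\sin^2\alpha_t/\bigl(\cos^2\alpha_t+\tfrac12(1+\eta^{-2})\bigr)}$ and $\kappa_t'=\bigl(1+\tfrac{2\eta^2}{1+\eta^2}\cos^2\alpha_t\bigr)^{-1}$, and your second paragraph simply asserts that ``tracking the leading behavior'' of the Gaussian integrals would produce them. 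Nothing in the sketch carries out that computation; it is not a leading-order Taylor expansion (those are only valid for small arguments, whereas this bound must hold for all $\alpha_t$, including the initial regime $\cos\alpha_t=O(1/\sqrt d)$, and for all $\enorm{\theta}\ge\enorm{\theta^*}/10$), and it requires Stein-type manipulations of $\Exs[yx_2\tanh(yx_1\enorm{\theta})]$ of the kind appearing in Appendix~\ref{subsec:proof:contraction_mlr_low}. Since the downstream argument in Appendix~\ref{subsec:proof:global_angle} needs precisely that $\kappa_t\ge 1+c\min(1,\eta^2)$ whenever $\sin\alpha_t$ is bounded below, an unquantified ``the angle strictly decreases'' does not suffice; the population computation is the lemma, not a preliminary to it.

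A secondary inaccuracy concerns the hypothesis $\enorm{\theta_n^t}\ge\enorm{\theta^*}/10$: you attribute it to the validity of the $\tanh$ expansions, but its actual role is in the finite-sample transfer. It lower-bounds $\enorm{\popopmlr(\theta_n^t)}$ (cf.\ Lemma~\ref{lemma:norm_lower_bound}, which refines Lemma 23 of \cite{kwon2019global}), and this is what converts the absolute fluctuation $\enorm{E}\lesssim\sqrt{d/n}$ into the \emph{relative} denominator perturbation $O(\epsilon_f)$ with $\epsilon_f=c_0\max(1,\eta^{-1})\sqrt{d/n}$ (note the $\eta^{-1}$: $\epsilon_f$ is a relative, not absolute, error scale); without that lower bound the multiplicative $(1-10\epsilon_f)$ factor does not follow. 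Finally, the lemma is stated for a \emph{fixed} estimator $\theta_n^t$ --- in \cite{kwon2019global} this is legitimate because of sample-splitting --- so fixed-$\theta$, fixed-direction concentration is what it needs; the uniform-over-$\theta$ bound of Lemma~\ref{lemma:uniform_concentration_mlr} that you invoke is the separate machinery this paper builds later precisely to remove sample-splitting, and conflating the two obscures which probabilistic statement the cited result actually rests on.
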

Here, the $\kappa_t$ comes from Theorem 2 in \cite{kwon2019global} for the convergence rate of the cosine values of the population EM operator. The key idea in the above lemma is that when we bound the statistical error of cosine value, we need to bound an error in one fixed direction $u := \theta^* / \enorm{\theta^*}$ instead of all directions in $\mathbb{R}^d$ to bound $l_2$ norm. More specifically, they show that
\begin{align*}
    \parenth{\frac{1}{n} \sum_i (X_i^\top u) Y_i \tanh(Y_i X_i^\top \theta) - \popopmlr(\theta)^\top u} \lesssim (1 + \enorm{\theta^*}) \sqrt{1/n} \lesssim (1 + \enorm{\theta^*}) \epsilon_f / \sqrt{d}.
\end{align*}
\begin{remark}
\label{Remark:global_angle_boundedness}
\cite{kwon2019global} requires the sample-splitting scheme in which we draw a new batch of samples at every step. The main challenge when we try to remove the sample-splitting is to show that the above argument holds for all $\theta: \enorm{\theta} \le r$ where $r = O(\max\{1, \enorm{\theta^*}\})$. For large $\enorm{\theta^*}$, getting a right order of uniform statistical error is challenging: discretization of $\theta$ results in extra $\sqrt{d}$ factor, while the Ledoux-Talagrand type approach as in Lemma \ref{lemma:uniform_concentration_mlr} results in extra $O(\enorm{\theta^*})$ factor. Therefore, here we show only for bounded instances with $\enorm{\theta^*} \le C$, and leave the analysis for arbitrarily large $\enorm{\theta^*}$ as future work.
\end{remark}

Now we adopt their approach to work {\it without} sample-splitting, and get a right order of sample complexity. First, when we work with bounded $\theta^*$, we follow the steps in Lemma \ref{lemma:uniform_concentration_mlr}, while we can skip the procedure in which we take a union bound over $1/2$-covering set of the unit sphere to bound $l_2$ norm of a random vector. This yields that
\begin{align}
    \label{eq:uniform_concentration_one_direction}
    \sup_{\enorm{\theta} \le r} \abss{\frac{1}{n} \sum_i (X_i^\top u) Y_i \tanh(Y_i X_i^\top \theta) - \popopmlr(\theta)^\top u} \le c r \sqrt{\log^2(n/\delta)/n},
\end{align}
for the absolute constant $c > 0$ given by Lemma \ref{lemma:uniform_concentration_mlr}. Let $\epsilon_f := c \sqrt{d \log^2(n/\delta)/n}$. The cosine value can be bounded as follows:
\begin{align*}
    \cos \alpha_{t+1} &= \frac{(\theta^*)^\top \theta_n^{t+1}}{\enorm{\theta_n^{t+1}}\enorm{\theta^*}} \\
    &= \frac{u^\top (\popopind(\theta_n^t) - \theta_n^{t+1})}{\enorm{\theta_n^{t+1}}} + \frac{u^\top \popopind(\theta_n^t)}{\enorm{\popopind(\theta_n^t)}} \frac{\enorm{\popopind(\theta_n^t)}}{\enorm{\theta_n^{t+1}}}, \\
    &\ge -\frac{\epsilon_f}{\sqrt{d}} \frac{r}{\enorm{\theta_n^{t+1}}} + \frac{u^\top \popopind(\theta_n^t)}{\enorm{\popopind(\theta_n^t)}} \frac{\enorm{\popopind(\theta_n^t)}}{\enorm{\popopind(\theta_n^t)} + r \epsilon_f} \\
    &\ge \kappa_t \cos \alpha_t \parenth{1 - \frac{r \epsilon_f}{\enorm{\popopind(\theta_n^t)}}} -  \frac{\epsilon_f}{\sqrt{d}} \frac{r}{\enorm{\popopmlr(\theta_n^t)} - r\epsilon_f},
\end{align*}
where the last inequality comes from Theorem 2 in \cite{kwon2019global}.

Finally, we need to show that $r/\enorm{\popopmlr(\theta_n^t)} = O(1)$ such that we can set $\epsilon_f$ as some sufficiently small absolute constant (that does not depend on $\eta$). We first need the following lemma on the norm of the next estimator:
\begin{lemma}
    \label{lemma:norm_lower_bound}
    If $\enorm{\theta} \le \enorm{\theta^*} / 10$, then 
    \begin{align*}
        \enorm{\popopmlr(\theta)} \ge \enorm{\theta} ( 1 + d_1 \cdot \min \{1,  \enorm{\theta}^2\} ).
    \end{align*}
    Otherwise, if $\enorm{\theta} \ge \enorm{\theta^*} / 10$, we have
    \begin{align*}
        \enorm{\popopmlr(\theta)} \ge \frac{\enorm{\theta^*}}{10} (1 + d_2 \cdot \min \{ 1, \enorm{\theta^*}^2 \} ). 
    \end{align*}
    for some universal constants $d_1, d_2 > 0$.
\end{lemma}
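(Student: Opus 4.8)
The plan is to reduce the vector bound to a scalar statement about the component of $\popopmlr(\theta)$ along $v_1=\theta/\enorm{\theta}$. Writing $s=\enorm{\theta}$ and using the transformed-coordinate formula~\eqref{eq:pop_mlr_transformed},
\begin{align*}
    \enorm{\popopmlr(\theta)} \ge \popopmlr(\theta)^\top v_1 = \Exs\brackets{x_1 y\,\tanh(s\,x_1 y)} =: g(s,\alpha),
\end{align*}
where $y=x_1 b_1^* + x_2 b_2^* + z$ with $z\sim\NORMAL(0,1)$ and $\alpha=\angle(\theta,\theta^*)$. Because $\tanh$ is odd and increasing, the integrand $x_1y\,\tanh(s\,x_1y)$ is pointwise nonnegative, so $g\ge0$; differentiating under the expectation gives two monotonicity facts at fixed $\alpha$. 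First, $g(\cdot,\alpha)$ is nondecreasing, since $\partial_s g=\Exs[(x_1y)^2\sech^2(s\,x_1y)]\ge0$. Second, $g(s,\alpha)/s$ is nonincreasing, since for every realization $w:=x_1y$ the derivative $\partial_s\big[s^{-1}w\tanh(sw)\big]=s^{-2}w\big(sw\,\sech^2(sw)-\tanh(sw)\big)$ is a product of $w$ with a factor of the opposite sign, hence $\le0$.

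These two facts collapse both cases to a single pointwise inequality at $s_\star:=\enorm{\theta^*}/10$. Suppose we can show, for a universal $c_0>0$ and all $\alpha$,
\begin{align*}
    g(s_\star,\alpha) \ge s_\star\,\big(1 + c_0\min\{1,\enorm{\theta^*}^2\}\big).
\end{align*}
In case 1 ($s\le s_\star$), monotonicity of $g(s,\alpha)/s$ gives $g(s,\alpha)\ge s\,g(s_\star,\alpha)/s_\star$, and since $\enorm{\theta^*}\ge\enorm{\theta}$ we obtain $\enorm{\popopmlr(\theta)}\ge\enorm{\theta}(1+c_0\min\{1,\enorm{\theta}^2\})$. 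In case 2 ($s\ge s_\star$), monotonicity of $g$ itself gives $g(s,\alpha)\ge g(s_\star,\alpha)\ge\tfrac{\enorm{\theta^*}}{10}(1+c_0\min\{1,\enorm{\theta^*}^2\})$. Thus $d_1=d_2=c_0$, and only the displayed inequality at $s_\star$ remains.

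For $\enorm{\theta^*}\le1$ I would prove it by a truncated Taylor expansion. The elementary bound $u\tanh(u)\ge u^2-u^4/3$ (from $\tanh u\ge u-u^3/3$ for $u\ge0$, extended by oddness) yields $g(s,\alpha)\ge s\,\Exs[(x_1y)^2]-\tfrac{s^3}{3}\Exs[(x_1y)^4]$. Direct Gaussian-moment computations give $\Exs[(x_1y)^2]=1+\enorm{\theta^*}^2+2(b_1^*)^2$ and $\Exs[(x_1y)^4]=105(b_1^*)^4+90(b_1^*)^2(1+(b_2^*)^2)+9(1+(b_2^*)^2)^2$. At $s=s_\star$ the quartic term carries the factor $s_\star^2=\enorm{\theta^*}^2/100$, so it is only a small fraction of the $s_\star\enorm{\theta^*}^2$ signal term; keeping the favorable $+2(b_1^*)^2$ and tracking the numerical constants shows $g(s_\star,\alpha)\ge s_\star(1+c_0\enorm{\theta^*}^2)$ uniformly in $\alpha$, which is the claim since $\min\{1,\enorm{\theta^*}^2\}=\enorm{\theta^*}^2$ in this range.

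The main obstacle is the regime $\enorm{\theta^*}\in(1,C]$, where the truncation bound fails: once $s_\star\enorm{\theta^*}\gtrsim1$ the argument $s_\star x_1 y$ is typically large, $\tanh$ saturates, and $u^2-u^4/3$ drives the bound negative. Here $\min\{1,\enorm{\theta^*}^2\}=1$, so it suffices to produce a constant multiplicative gain, and the intuition is that most of the mass has $|s_\star x_1 y|$ bounded away from $0$, whence $g(s_\star,\alpha)\approx\Exs|x_1y|=\Theta(\enorm{\theta^*})$, comfortably exceeding $s_\star=\enorm{\theta^*}/10$. I would make this rigorous either by the nonnegative bound $\tanh(u)/u\ge(1+u^2/3)^{-1}$, which gives $g(s_\star,\alpha)/s_\star\ge\tfrac12\Exs[(x_1y)^2\mathbf{1}\{|x_1y|\le\sqrt3/s_\star\}]$ followed by a direct estimate of this truncated second moment, or by a compactness argument anchored at the fixed point $g(\enorm{\theta^*},0)=\enorm{\theta^*}$, where strict monotonicity of $g(s,\alpha)/s$ forces $g(s_\star,0)/s_\star>1$ and the bound is then extended to all $\alpha$. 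The delicate point throughout is uniformity over the angle $\alpha$, since $b_1^*,b_2^*$ vary with $\alpha$; once both regimes are settled, the lemma follows by taking $c_0$ to be the smaller of the two constants.
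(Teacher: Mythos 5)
Your reduction skeleton is correct and is a genuinely cleaner organization than the paper's argument: both monotonicity facts — $g(s,\alpha)$ nondecreasing in $s$, and $g(s,\alpha)/s$ nonincreasing in $s$, each verified pointwise in the realization $w = x_1 y$ — are valid, and they legitimately collapse both cases of the lemma to the single anchor inequality $g(s_\star,\alpha) \ge s_\star\bigl(1 + c_0 \min\{1,\enorm{\theta^*}^2\}\bigr)$ at $s_\star = \enorm{\theta^*}/10$, uniformly in $\alpha$. (The paper also uses your first fact, asserting that $\popopmlr(\theta)^\top v_1$ increases in $\enorm{\theta}$ to pass between the two cases; the second fact is your addition, and it is what lets you avoid the paper's separate treatment of small $\enorm{\theta}$.) Your anchor proof for $\enorm{\theta^*}\le 1$ is also correct: keeping $(b_1^*)^2$ explicit, the quartic correction is at most $0.95\,(b_1^*)^2 + 0.12\,\enorm{\theta^*}^2$, so $g(s_\star,\alpha)/s_\star \ge 1 + 0.88\,\enorm{\theta^*}^2$ for every angle.

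The genuine gap is the regime $\enorm{\theta^*} > 1$, which you flag but do not close, and neither proposed repair works as stated. The truncation bound $g(s_\star,\alpha)/s_\star \ge \tfrac12 \Exs\bigl[(x_1y)^2 \mathbf{1}\{|x_1y|\le \sqrt3/s_\star\}\bigr]$ is provably vacuous for large $\enorm{\theta^*}$: its right-hand side is at most $\tfrac12(\sqrt3/s_\star)^2 = 150/\enorm{\theta^*}^2$, which falls below the required $1+c_0$ once $\enorm{\theta^*}\ge 13$ (and a direct computation with the product-normal density at $\alpha=\pi/2$ shows it already drops below $1$ near $\enorm{\theta^*}\approx 5$). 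The failure is structural: $|x_1 y|$ has typical size $\enorm{\theta^*}$ while your truncation window shrinks like $1/\enorm{\theta^*}$, so exactly where the gain must come from saturation of $\tanh$, your bound discards that mass. The compactness repair has two unresolved problems: (i) the fixed-point argument $g(\enorm{\theta^*},0)=\enorm{\theta^*}$ controls only $\alpha=0$, and strict positivity of $g(s_\star,\alpha)-s_\star$ for all $\alpha$ is precisely the anchor inequality you are trying to prove — the missing ingredient is that the orthogonal configuration $b_1^*=0$ is the worst case, i.e. $g(s,\alpha)\ge g(s,\pi/2)$, which is exactly the nontrivial comparison the paper imports from Lemma 23 of \cite{kwon2019global}; (ii) the lemma claims universal constants with no upper bound on $\enorm{\theta^*}$, so compactness over $1\le\enorm{\theta^*}\le C$ cannot yield it (the paper's proof treats $\enorm{\theta^*}\ge 5$ by a separate argument). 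To finish within your framework you would essentially have to follow the paper's route for this regime: reduce to the orthogonal case by the cited comparison, write $g(s_\star,\pi/2)=\sigma_2\,\Exs[p\tanh(s_\star\sigma_2 p)]$ with $p$ a product of independent standard Gaussians and $\sigma_2^2 = 1+\enorm{\theta^*}^2$, and control the one-dimensional quantity $\Exs[p\tanh(tp)]$ using saturation of $\tanh$ for large $t$ and numerically checked constants in the moderate range — which is exactly where the paper resorts to numerical integration.
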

We defer the proof of this lemma to Appendix \ref{subsec:proof:lemma_norm_lower_bound}. We need the uniform concentration \eqref{eq:uniform_concentration_one_direction} for several values of $r = C_0, C_0 2^{-1}, ..., C_0 2^{-l+1}, C_0 2^{-l}$ where $C_0 = 3 C$ and $l = O(\log(n/d))$. We can replace $\delta$ by $\delta / \log(n/d)$ for union bound, which does not change the order of statistical error. Pick $k$ such that $C_0 2^{-k} \le \enorm{\theta_n^t} \le C_0 2^{-k+1} = r$. 

When $\enorm{\theta_n^t} \le \enorm{\theta^*} / 10$, we can apply the Lemma \ref{lemma:norm_lower_bound} to see
\begin{align*}
    r / \enorm{\popopmlr(\theta_n^t)} \le C_0 2^{-k + 1} / (C_0 2^{-k}) = 2,
\end{align*}
where we used $r = 2^{-k+1}$. Therefore, $r / \popopmlr(\theta_n^t) = O(1)$. On the other hand, if $\enorm{\theta_n^t} \ge \enorm{\theta^*} / 10$, then we divide the cases when $\enorm{\theta^*} \ge 1/\max(3,c_2)$ where $c_2 > 0$ satisfies the lower bound given in equation~\eqref{eq:contraction_pop_lower}:
\begin{align*}
    \enorm{\popopmlr(\theta)} \ge \enorm{\theta}(1 - 3\enorm{\theta}^2) - c_2 \enorm{\theta} \enorm{\theta^*}^2.
\end{align*}

When $\enorm{\theta^*} \ge 1/\max(3,c_2)$ and $\enorm{\theta_n^t} \ge \enorm{\theta^*} / 10$, by Lemma \ref{lemma:norm_lower_bound} we have $r/\popopmlr(\theta) \le C_0 \max(3, c_2) = O(1)$ since all parameters here are universal constants. On the other hand, if $\enorm{\theta^*} \le 1/\max(3,c_2)$ and $\enorm{\theta_n^t} \ge \enorm{\theta^*} / 10$, then from equation~\eqref{eq:contraction_pop_lower} we have
\begin{align*}
    \enorm{\popopmlr(\theta)} \ge \enorm{\theta}(1 - 3\enorm{\theta}^2) - c_2 \enorm{\theta} \enorm{\theta^*}^2 \ge \enorm{\theta}/2.
\end{align*}
Therefore, $r / \enorm{\popopmlr(\theta_n^t)} \le C_0 2^{-k+1} / (C_0 2^{-k-1}) = 4 = O(1)$. 

From the above case study, we have that
\begin{align*}
    \cos \alpha_{t+1} \ge \kappa_t \cos \alpha_t (1 - c_4 \epsilon_f) - c_5 \frac{\epsilon_f}{\sqrt{d}},
\end{align*}
for some absolute constants $c_4, c_5 > 0$. Now observe that as long as $\sin \alpha_{t} > c_\alpha$, $\kappa_t = 1 + c_6 \min \{1, \eta^{2}\}$ for some sufficiently small constant $c_\alpha, c_6 > 0$. Also, recall that we are considering the middle-to-high SNR regime when $\eta^2 \ge c_\eta \sqrt{d \log^2(n/\delta) /n}$ for some sufficiently large constant $c_\eta > 0$, whereas $\epsilon_f \le c \sqrt{d \log^2(n/\delta)/n}$ for another fixed constant $c > 0$. Therefore, there exists a universal constant $c_7 > 0$ such that for all $\cos \alpha_t \ge 1/\sqrt{d}$, we have 
\begin{align*}
    \cos \alpha_{t+1} \ge (1 + c_7 \min(1, \eta^2)) \cos \alpha_t.
\end{align*}
After $t = O(\eta^{-2} \log(d))$ iterations starting from $\cos \alpha_0 = 1/\sqrt{d}$, we have $\cos \alpha_{t} \ge 0.95$ or $\sin \alpha_{t} \le 0.1$.

\subsection{Stability and Convergence in Middle-to-High SNR Regime after Alignment}
\label{subsec:proof:global_stability_convergence}
In this subsection, we see how the alignment is stabilized and the norm increases in case we start from small initialization.

\paragraph{Sine stays below some threshold.} Once $\theta_n^t$ and $\theta^*$ are well-aligned, using $\sin^2 \alpha_t = 1 - \cos^2\alpha_t$, similar arguments can be applied for $\sin$ values:
\begin{align*}
    \sin^2 \alpha_{t+1} &\le (1 - c_1 \min(1, \eta^2)) \sin^2 \alpha_t, \quad &\text{if} \quad \sin^2 \alpha_t \ge c_2 \\
    \sin^2 \alpha_{t+1} &\le c_2, \quad &\text{else} \quad \sin^2 \alpha_t \le c_2,
\end{align*}
for some absolute constants $c_1 > 0$ and sufficiently small $0 < c_2 < 0.01$ given that $\cos \alpha_t > 0.95$.

\paragraph{Initialization from small estimators after alignment.} After the angle is aligned such that $\sin \alpha_t \le c_2$. We see how fast $\enorm{\theta_n^t}$ enters the desired initialization region that Theorem \ref{theorem:EM_mlr_upper_bound} requires, when $\enorm{\theta_n^t} \le 0.9\enorm{\theta^*}$. 

Let us first consider the case $0.1 \enorm{\theta^*} \le \enorm{\theta_n^t} \le 0.9\enorm{\theta^*}$. We recall Lemma \ref{lemma:theorem4_global} such that
\begin{align*}
    \enorm{\theta^*-\popopmlr(\theta_n^t)} &\le \kappa \enorm{\theta_n^t - \theta^*} + \kappa 16 \sin^2\alpha \enorm{\theta_n^t - \theta^*} \frac{\eta^2}{1 + \eta^2} \\
    &\le \kappa (1 + (16 \sin^2 \alpha) \eta^2) \enorm{\theta_n^t - \theta^*},
\end{align*}
where $\kappa < 1 - c_3 \eta^2$ for some absolute constant $c_3$. By appropriately setting $c_2$ and $c_3$, we have  
\begin{align*}
    \enorm{\theta^*-\popopmlr(\theta_n^t)} \le (1 - c_4 \min(1, \eta^2)) \enorm{\theta - \theta^*}, 
\end{align*}
for some constant $c_4 > 0$. Since we are in the regime $\eta^2 \ge c_\eta \sqrt{d \log^2(n/\delta) / n}$ for sufficiently large $c_\eta$, by appropriately setting the constants we have $\enorm{\samopmlr(\theta_n^t) - \theta^*} \le (1 - c_5 \min(1, \eta^2)) \enorm{\theta - \theta^*}$ for some absolute constant $c_5 > 0$, as long as we are in the region $0.1 \enorm{\theta^*} \le \enorm{\theta_n^t} \le 0.9 \enorm{\theta^*}$. Hence after $O(\max(1, \eta^{-2}))$ iterations, we reach to the desired initialization region. 

Now we consider the case $\enorm{\theta} \le 0.1 \enorm{\theta^*}$. In this case, by Lemma \ref{lemma:norm_lower_bound}, we can show that
\begin{align*}
    \enorm{\popopmlr(\theta)} \ge \enorm{\theta} (1 + c_6 \min\{ 1, \enorm{\theta}^2, \enorm{\theta^*}^2 \}),
\end{align*}
for some universal constant $c_6 > 0$. After $O(\max \{ \enorm{\theta}^{-2}, \enorm{\theta^*}^{-2} \})$ iterations, we enter $\enorm{\theta} \ge \enorm{\theta^*} / 10$. Note that when we start with $\enorm{\theta_n^0} = \Omega(1)$, $\enorm{\theta_n^t}$ will stay above $\min \{\Omega(1), \enorm{\theta^*} / 10\}$ throughout all iterations due to Lemma \ref{lemma:norm_lower_bound} and Lemma \ref{eq:uniform_deviation_mlr}.


\section{Deferred Lemmas}
\label{appendix:convergence_population_EM}
In this appendix, we collect proofs for auxiliary lemmas which were postponed in the proof of main theorems: the contraction of population EM operators under both middle and low SNR regimes, uniform deviation of finite-sample EM operators, and the lower bounds on the norms of population EM operators.

\subsection{Contraction of the Population EM Operator under Low SNR Regime}
\label{subsec:contraction_population_low_SNR}
\subsubsection{Proof of Lemma \ref{lemma:contraction_mlr_low_SNR}}
\label{subsec:proof:contraction_mlr_low}

We use notations and definitions stated in \ref{Appendix:additional_notation}. 
    
    \paragraph{Upper Bound:} We first bound the first coordinate of the population operator from equation~\eqref{eq:pop_mlr_transformed}:
    \begin{align*}
        M_{mlr}(\theta)^\top v_1 &= \Exs_{x_1, x_2, y} \brackets{\tanh(y x_1 \enorm{\theta}) x_1 y},
    \end{align*}
    We will expand the above equation using Taylor series bound of $x \tanh(x)$:
    \begin{align}
        x^2 - \frac{x^4}{3} \leq x \text{tanh}(x) \leq x^2 - \frac{x^4}{3} + \frac{2 x^6}{15}. \label{eq:lemma_contraction_pop_ind_2}
    \end{align}
    Now we unfold the equation above, we have
    \begin{align*}
        M_{mlr}(\theta)^\top v_1 &= \frac{1}{\enorm{\theta}} \Exs_{x_1, x_2, y}  \brackets{\tanh(y x_1 \enorm{\theta}) y x_1 \enorm{\theta}} \\
        &\le \frac{1}{\enorm{\theta}} \Exs_{x_1, x_2, y}  \brackets{(y x_1 \enorm{\theta})^2 - \frac{(y x_1 \enorm{\theta})^4}{3} + \frac{2(y x_1 \enorm{\theta})^6}{15}} \\
        &\le \frac{1}{\enorm{\theta}} \Exs_{x_1, z}  \biggr[ (x_1 \enorm{\theta} (z + x_1 b_1^* + x_2 b_2^*))^2 - \frac{(x_1 \enorm{\theta} (z + x_1 b_1^* + x_2 b_2^*))^4}{3} \\
        & \qquad \qquad \qquad + \frac{2(x_1 \enorm{\theta} (z + x_1 b_1^* + x_2 b_2^*))^6}{15} \biggr],
    \end{align*}
    where $z \sim \NORMAL(0, 1)$. Note here that, any (constantly) higher order terms of Gaussian distribution is constant. Hence instead of computing all coefficients explicitly for all monomials, we can simplify the argument as 
    \begin{align}
        \label{eq:pop_mlr_firstvalue_taylor}
        M_{mlr}(\theta)^\top v_1 &\le \frac{1}{\enorm{\theta}} \Exs_{x_1, z}  \biggr[ (x_1 \enorm{\theta} z)^2 - \frac{(x_1 \enorm{\theta} z)^4}{3} + \frac{2(x_1 \enorm{\theta} z)^6}{15} \biggr] + c_1 \enorm{\theta} \enorm{\theta^*}^2, \nonumber \\
        &= \enorm{\theta} (1 - 3 \enorm{\theta}^2 + 30 \enorm{\theta}^4) + c_1 \enorm{\theta} \enorm{\theta^*}^2,
    \end{align}
    for some universal constant $c_1 > 0$. Since we assumed $\enorm{\theta} < 0.2$, we have $3\enorm{\theta}^2 - 30\enorm{\theta}^4 \ge \enorm{\theta}^2$. We conclude that 
    \begin{align*}
        M_{mlr}(\theta)^\top v_1 \le \enorm{\theta} (1 - \enorm{\theta}^2 + c_1 \enorm{\theta^*}^2).
    \end{align*}
    
    Then we bound the value in the second coordinate of the population operator:
    \begin{align*}
        M_{mlr}(\theta)^\top v_2 &= \Exs_{x_1, x_2, y} \brackets{\tanh(y x_1 \enorm{\theta}) y x_2},
    \end{align*}
    where $y|(x_1, x_2) \sim \NORMAL(x_1 b_1^* + x_2 b_2^*, 1)$. In order to derive an upper bound for the above equation, we rely on the following equation which we defer the proof to the end of this section:
    \begin{align}
        \label{eq:v2_stein_lemma}
        \Exs \brackets{\tanh(y x_1 \enorm{\theta}) y x_2} = b_2^* \ \Exs \brackets{x_1^2 \tanh(x_1 \enorm{\theta} (z + x_1 b_1^*)) - \enorm{\theta} b_1^* x_1^2 \tanh' (x_1 \enorm{\theta} (z + x_1 b_1^*))},
    \end{align}
    where $z \sim \NORMAL(0, 1 + {b_2^*}^2)$ with subsuming $x_2$ from the equation. From \eqref{eq:v2_stein_lemma}, we can check that
    \begin{align*}
        \Exs \brackets{\tanh(y x_1 \enorm{\theta}) y x_2} &\le b_2^* \ \Exs \brackets{x_1^2 \tanh(x_1 \enorm{\theta} (z + x_1 b_1^*))} \\
        &= \frac{b_2^*}{2} \Exs \brackets{x_1^2 \tanh(x_1 \enorm{\theta} (z + x_1 b_1^*)) + x_1^2 \tanh(x_1 \enorm{\theta} (-z + x_1 b_1^*))} \\
        &\le b_2^* \Exs \brackets{x_1^2 \tanh(x_1^2 \enorm{\theta} b_1^*)}, \\
        &\le \enorm{\theta} b_1^* b_2^* \Exs \brackets{x_1^4} \le \frac{1}{2} \enorm{\theta} \enorm{\theta^*}^2,
    \end{align*}
    where we used $\tanh(a + x) + \tanh(a - x) \le 2 \tanh(a)$ for any $a > 0$ and $x \in \mathbb{R}$. 
    
    From the above results, we have shown that 
    \begin{align}
        \enorm{M_{mlr} (\theta)} \le |\popopmlr(\theta)^\top v_1| + |\popopmlr(\theta)^\top v_2| \le \enorm{\theta} \parenth{1 - \enorm{\theta}^2 + c \enorm{\theta^*}^2},
    \end{align} 
    for some universal constant $c > 0$. 

    \paragraph{Lower Bound:} To prove the lower bound of the population EM operator, we again expand the equation using Taylor series \eqref{eq:lemma_contraction_pop_ind_2}:
    \begin{align}
        \label{eq:contraction_pop_lower}
        \enorm{\popopmlr(\theta)} \ge |\popopmlr(\theta)^\top v_1| \ge \enorm{\theta} (1 - 3\enorm{\theta}^2) - c_2 \enorm{\theta} \enorm{\theta^*}^2.
    \end{align}
    The result follows immediately with some absolute constant $c_2 > 0$.

    \paragraph{Proof of equation \eqref{eq:v2_stein_lemma}:} 
    For the left hand side, we apply the Stein's lemma with respect to $x_2$. It gives that
    \begin{align*}
        \Exs [\tanh(\enorm{\theta} x_1 y) y x_2] &= \Exs \brackets{\frac{d}{d x_2} \tanh(\enorm{\theta} x_1 y) y} \\
        &= \Exs \brackets{\frac{d}{d x_2} \tanh(\enorm{\theta} x_1 (\bar{z} + x_1 b_1^* + x_2 b_2^*)) (\bar{z} + x_1 b_1^* + x_2 b_2^*)} \\
        &= \Exs [ b_2^* \tanh(\enorm{\theta} x_1 (\bar{z} + x_1 b_1^* + x_2 b_2^*)) \\
        &\qquad + (\enorm{\theta} x_1 b_2^*) (\bar{z} + x_1 b_1^* + x_2 b_2^*) \tanh'(\enorm{\theta} x_1 (\bar{z} + x_1 b_1^* + x_2 b_2^*)] \\
        &= b_2^* \ \Exs [\tanh(\enorm{\theta} x_1 (z + x_1 b_1^*)) + \enorm{\theta} x_1 (z + x_1 b_1^*) \tanh'(\enorm{\theta} x_1 (z + x_1 b_1^*)))] 
    \end{align*}
    where $\bar{z} \sim \NORMAL(0, 1)$ and $z \sim \NORMAL(0, 1 + {b_2^*}^2)$. For the right hand side, we apply the Stein's lemma with respect to $x_1$. First, we check the first term in the right hand side that
    \begin{align*}
        & \hspace{-4 em} \Exs [x_1^2 \tanh(\enorm{\theta} x_1 (z + x_1 b_1^*))] \\
        &= \Exs \brackets{\frac{d}{d x_1} (x_1 \tanh(\enorm{\theta} x_1 (z + x_1 b_1^*)) )} \\
        &= \Exs \brackets{\tanh(\enorm{\theta} x_1 (z + x_1 b_1^*)) + x_1 \frac{d}{dx_1} \tanh(\enorm{\theta} x_1 (z + x_1 b_1^*)} \\
        &= \Exs \brackets{\tanh(\enorm{\theta} x_1 (z + x_1 b_1^*)) + \enorm{\theta} x_1 (z + 2 x_1 b_1^*) \tanh'(\enorm{\theta} x_1 (z + x_1 b_1^*)}. 
    \end{align*}
    Plugging this into \eqref{eq:v2_stein_lemma} and subtracting the remaining term gives the result that matches to the left hand side.
\subsection{Contraction of the Population EM Operator under Middle SNR Regime}
In this appendix, we provide the proofs for contraction of the population EM operator under middle SNR regime.
\subsubsection{Proof of Corollary~\ref{corollary:high_SNR_contraction}}
\label{subsec:proof:corollary:high_SNR_contraction}
In Lemma \ref{lemma:theorem4_global}, note that $\kappa \le 1 - \frac{1}{2} \min \{ \enorm{\theta}^2, \frac{\enorm{\theta^*}^2}{\enorm{\theta^*}^2 + 1} \}$ and $(\enorm{\theta^*} \sin \alpha) < \enorm{\theta - \theta^*}$ where $\sin \alpha < 1/10$. Therefore, whenever $\enorm{\theta^*} \ge 1$, with the initialization condition $\enorm{\theta} \ge 0.9 \enorm{\theta^*}$
\begin{align*}
    \enorm{\popopmlr (\theta) - \theta^*} &\le \parenth{1 - 1/4} \enorm{\theta - \theta^*} + \kappa 16 ( \sin^2 \alpha ) \enorm{\theta - \theta^*} \le 0.9\enorm{\theta - \theta^*},
\end{align*}
which completes the proof.

\subsubsection{Proof of Corollary~\ref{corollary:middle_SNR_contraction}}
\label{subsec:proof:corollary:middle_SNR_contraction}
From Lemma \ref{lemma:theorem4_global}, note that $\frac{\eta^2}{1 + \eta^2} \le \eta^2 = \enorm{\theta^*}^2$. Using $\kappa \le 1 - \frac{1}{2} \min \{ \enorm{\theta}^2, \frac{\enorm{\theta^*}^2}{\enorm{\theta^*}^2 + 1} \}$, $(\enorm{\theta^*} \sin \alpha) < \enorm{\theta - \theta^*}$ and $\sin \alpha < 1/10$. With the initialization condition $\enorm{\theta} \ge 0.9 \enorm{\theta^*}$, we have
    \begin{align*}
        \enorm{M_{mlr}(\theta) - \theta^*} &\le \parenth{1 - \frac{1}{4} \enorm{\theta^*}^2} \enorm{\theta - \theta^*} + \frac{1}{8} \enorm{\theta^*}^2 \enorm{\theta - \theta^*} \le \parenth{1 - \frac{1}{8} \enorm{\theta^*}^2} \enorm{\theta - \theta^*}.
    \end{align*}
\subsection{Uniform deviation of finite-sample EM operator: Proof of Lemma~\ref{lemma:deviation_bound_mlr}}
\label{subsec:proof:lemma:deviation_bound_mlr}

\begin{proof}
    Let us assume that $n \ge Cd$ for sufficiently large constant $C > 0$. To simplify the notation, we use $\hat{\Sigma}_n = \frac{1}{n}\sum_i X_iX_i^\top$. Observe that
    \begin{align*}
        \enorm{\samopmlr(\theta) - \popopmlr(\theta)} &\le \opnorm{\hat{\Sigma}_n^{-1}} \enorm{\frac{1}{n} \sum_{i=1}^n Y_i X_i \tanh(Y_i X_i^\top \theta) - \popopmlr(\theta)} \\
        &\qquad + \opnorm{\hat{\Sigma}_n^{-1} - I} \enorm{\popopmlr(\theta)}.
    \end{align*}
    The first term can be bounded by $c_1 r \sqrt{d\log^2(n/\delta)/n}$ with some absolute constant $c_1 > 0$ using the results of \eqref{eq:uniform_deviation_mlr} and Lemma \ref{lemma:concentration_subexp} in Appendix \ref{sec:concentration_sample_EM}.
    
    For the second term, we first know from Lemma \ref{lemma:concentration_subexp} that $\opnorm{\hat{\Sigma}_n^{-1} - I} = \opnorm{\hat{\Sigma}_n^{-1}} \opnorm{\hat{\Sigma}_n - I} \le c_2 \sqrt{d/n})$ for some universal constant $c_2 > 0$. If we can show that $\enorm{\popopmlr(\theta)} \le O(r)$, then we are done. To see this, first we check that
    \begin{align*}
        \enorm{\popopmlr(\theta)} = \enorm{\Exs[YX \tanh(YX^\top \theta)]} \le \enorm{\theta} \opnorm{\Exs[Y^2 XX^\top]}.
    \end{align*}
    It is easy to check that $\Exs[Y^2 XX^\top] = I + 2 \theta^* {\theta^*}^\top$, hence $\opnorm{\Exs[Y^2 XX^\top]} = 1 + 2\enorm{\theta^*}^2 \le 1 + 2C^2 = O(1)$. Therefore, $\enorm{\popopmlr(\theta)} \le c_3 \enorm{\theta} \le c_3 r$ with a constant $c_3 = (1 + 2C^2)$. This completes the proof of Lemma \ref{lemma:deviation_bound_mlr}.
\end{proof}

\subsection{Lower Bound on the Norm: Proof of Lemma \ref{lemma:norm_lower_bound}}
\label{subsec:proof:lemma_norm_lower_bound}
This Lemma is in fact a more refined statement of Lemma 23 in \cite{kwon2019global} where they give a lower bound on the norms for the same purpose. We give a more refined result here.

Let $\alpha = \angle(\theta, \theta^*)$. We use the notations defined in Appendix \ref{Appendix:additional_notation}. We recall here that $b_1^* = \theta^* \cos \alpha$, $b_2^* = \theta^* \sin \alpha$. We consider three cases as in \cite{kwon2019global}. 

{\it Case (i): } $\cos \alpha \le 0.2$. This case we essentially give a norm bound for $\cos \alpha = 0$. Suppose that $\enorm{\theta} \le \enorm{\theta^*} / 10$. We can first check that
\begin{align*}
    \enorm{\popopmlr(\theta)} &\ge |\popopmlr(\theta)^\top v_1| = \Exs_{x_1, x_2, y}[\tanh(yx_1 \enorm{\theta}) yx_1] \\
    &= \Exs_{x_1, x_2, z} [\tanh((x_1 b_1^* + x_2 b_2^* + z)x_1 \enorm{\theta}) (x_1 b_1^* + x_2 b_2^* + z) x_1],
\end{align*}
where $x_1, x_2, z \sim \NORMAL(0,1)$. From the argument in \cite{kwon2019global}, the above quantity is larger than the following $b_1^* = 0$ case (see Lemma 23 in \cite{kwon2019global} for details):
\begin{align*}
    \Exs_{x_1, x_2, z} [\tanh((x_2 b_2^* + z)x_1 \enorm{\theta}) (x_2 b_2^* + z) x_1] = \Exs_{x_1, \bar{z}} [\tanh(\bar{z} x_1 \enorm{\theta}) \bar{z} x_1],
\end{align*}
where $\bar{z} \sim \NORMAL(0, 1 + (b_2^*)^2) = \NORMAL(0, \sigma_2^2)$. We can lower bound the following quantity such that
\begin{align*}
    \Exs_{x_1, \bar{z}} [\tanh(\bar{z} x_1 \enorm{\theta}) \bar{z} x_1] &\ge \sigma_2 \Exs_{x_1, z} [\tanh(\sigma_2 z x_1 \enorm{\theta}) z x_1] \\
    &\ge \sigma_2 \Exs_{x_1, z} [\tanh(z x_1 \enorm{\theta}) z x_1].
\end{align*}
If $\enorm{\theta} > 0.5$, then through the numerical integration we can check that $\Exs_{x_1, z} [\tanh(0.5 z x_1) z x_1] > 1/\pi$. Hence, we immediately have that 
$$|\popopmlr(\theta)^\top v_1| \ge \frac{1}{\pi} \sigma_2 \ge \frac{\sin \alpha}{\pi} \enorm{\theta^*} \ge \frac{1}{5} \enorm{\theta^*},$$
since $\sin \alpha > 0.9$ in this case. Since we are considering the case when $\enorm{\theta} \le \enorm{\theta^*} / 10$, clearly we have
\begin{align*}
    \enorm{\popopmlr(\theta)} \ge \enorm{\theta} (1 + 1 \cdot \min(1, \enorm{\theta}^2)). 
\end{align*}

If $\enorm{\theta} < 0.5$, then we get a lower bound using Taylor expansion:
\begin{align*}
    \Exs_{x_1, \bar{z}} [\tanh(\bar{z} x_1 \enorm{\theta}) \bar{z} x_1] &\ge \sigma_2 \parenth{\Exs_{x_1, z} [\enorm{\theta} (z x_1)^2] - \frac{1}{3} \Exs_{x_1, z} [\enorm{\theta}^3 (z x_1)^4]} \\
    &= \sigma_2 \enorm{\theta} (1 - 3 \enorm{\theta}^2) = \enorm{\theta} \sqrt{1 + 0.96 \eta^2} (1 - 3 \enorm{\theta}^2),
\end{align*}
where $\enorm{\theta^*} = \eta$. Here, we consider three cases when $\eta \ge 5$, $5 \ge \eta \ge 1$, $1 \ge \eta$. When $\eta \ge 5$, then we immediately have $|\popopmlr(\theta)^\top v_1| \ge 1.25 \enorm{\theta}$. In case $5 \ge \eta \ge 1$, we first note that since $\enorm{\theta} \le \enorm{\theta^*} / 10$, we check the value of 
$$\enorm{\theta} \sqrt{1 + 0.96 \eta^2} (1 - 0.03 \eta^2).$$
We can again, numerically check that $\sqrt{1 + 0.96 \eta^2} (1 - 0.03 \eta^2) \le 1.25$ for $1 \le \eta \le 5$. Finally, when $\eta \le 1$, then a simple algebra shows that 
\begin{align*}
    \enorm{\theta} \sqrt{1 + 0.96\eta^2} (1 - 0.03\eta^2) \ge \enorm{\theta} ( 1 + 0.3 \eta^2 ). 
\end{align*}
Combining all, we can conclude that when $\enorm{\theta} \le \frac{\enorm{\theta^*}}{10}$
\begin{align*}
    \enorm{\popopmlr(\theta)} \ge \enorm{\theta}(1 + 0.25 \cdot \min(1, \enorm{\theta^*}^2)) \ge \enorm{\theta}(1 + 0.25 \cdot \min(1, \enorm{\theta}^2)).
\end{align*}

Now note that $\popopmlr(\theta)^\top v_1$ increases in $\enorm{\theta}$, hence for all $\enorm{\theta} \ge \enorm{\theta^*} / 10$, it holds that
\begin{align*}
    \enorm{\popopmlr(\theta)} \ge \frac{\enorm{\theta^*}}{10} (1 + 0.25 \cdot \min(1, \enorm{\theta^*}^2)).
\end{align*}

{\it Case (ii):} $\cos \alpha \ge 0.2$. Again, we can only consider when $\enorm{\theta} \le \enorm{\theta^*} / 10$ since the other case will immediately follow. Their claim in this case is that $|\popopmlr(\theta)^\top v_1| \ge \min \parenth{\sigma_2^2 \enorm{\theta}, b_1^*}$. Hence we consider two cases when $\sigma_2^2 \enorm{\theta} = (1 + \eta^2 \sin^2 \alpha) \enorm{\theta} \le b_1^* = \enorm{\theta^*} \cos \alpha$ and the other case. 

In the first case when $\sigma_2^2 \enorm{\theta} \le b_1^*$, it can be shown that (see equation (50) in \cite{kwon2019global} for details)
\begin{align*}
    b_1^* - \popopmlr(\theta)^\top v_1 \le \kappa^3 (b_1^* - \sigma_2^2 \enorm{\theta}),
\end{align*}
where $\kappa \le \sqrt{1 + b_1^2}^{-1}$. Rearranging this inequality, we have
\begin{align*}
    \popopmlr(\theta)^\top v_1 &\ge \enorm{\theta^*} (1 - \kappa^3) \cos\alpha + \kappa^3 (1 + \eta^2 \sin^2 \alpha) \enorm{\theta} \\
    &\ge \enorm{\theta} 2 (1 - \kappa^3) + \kappa^3 (1 + \eta^2 \sin^2 \alpha) \enorm{\theta} \\
    &\ge \enorm{\theta} + (1 - \kappa^3) \enorm{\theta}.
\end{align*}
Note that $1 - \kappa^3 \ge c_1 \min(1, b_1^2)$ for some constant $c_1 > 0$. On the other side, if $\sigma_2^2 \enorm{\theta} \ge b_1^*$, then we immediately have 
\begin{align*}
    \popopmlr(\theta)^\top v_1 \ge \enorm{\theta^*} / 5 \ge \frac{\enorm{\theta^*}}{10} ( 1 + 1 \cdot \min(1, \enorm{\theta^*}^2)) \ge \enorm{\theta} ( 1 + 1 \cdot \min(1, \enorm{\theta}^2)).
\end{align*}
Combining two cases, we have that
\begin{align*}
    \enorm{\popopmlr(\theta)} \ge \enorm{\theta} (1 + c_1 \cdot \min (1, \enorm{\theta}^2)). 
\end{align*}

Now similarly to {\it Case (i)}, since $\popopmlr(\theta)^\top v_1$ is increasing in $\enorm{\theta}$, when $\enorm{\theta} \ge \enorm{\theta^*} / 10$, we have
\begin{align*}
    \enorm{\popopmlr(\theta)} \ge \frac{\enorm{\theta^*}}{10} (1 + c_2 \cdot \min (1, \enorm{\theta^*}^2)), 
\end{align*}
where $c_2 = c_1 / 100$. 

Collecting all results in two cases, we have Lemma \ref{lemma:norm_lower_bound}.

\section{Concentration of Measures in Finite-Sample EM}
\label{sec:concentration_sample_EM}
In all lemmas that follow, we assume that $n \ge Cd$ for sufficiently large constant $C > 0$, such that the tail probability of the sum of $n$ independent sub-exponential random variables are in sub-Gaussian decaying rate.
\begin{lemma}
    \label{lemma:concentration_subexp}
    Suppose $X \sim \NORMAL(0, I)$ and $Y|X \sim \frac{1}{2}\NORMAL(X^\top \theta^*, 1) + \frac{1}{2}\NORMAL(-X^\top \theta^*, 1)$. Then, with probability at least $1 - \delta$,
    \begin{align}
        \frac{1}{n} \sum_{i=1}^n Y_i^2 - 1  = O \parenth{(\enorm{\theta^*} + 1)^2 \sqrt{\frac{\ln(1/\delta)}{n}}}, \\
        \opnorm{\frac{1}{n} \sum_{i=1}^n X_iX_i^\top - I} = O \parenth{\sqrt{\frac{d \ln(1/\delta)}{n}}}.
    \end{align}
\end{lemma}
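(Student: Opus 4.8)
The plan is to treat the two bounds separately, since each reduces to a standard Bernstein-type concentration inequality once the relevant tail behavior is identified.

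For the first bound, the key observation is that although $Y_i \mid X_i$ is a two-component Gaussian mixture, \emph{marginally} $Y_i$ is exactly a single Gaussian. Writing $Y_i = \nu_i X_i^\top \theta^* + Z_i$ with $\nu_i$ an independent Rademacher sign and $Z_i \sim \NORMAL(0,1)$ independent of $X_i$, the quantity $X_i^\top \theta^* \sim \NORMAL(0, \enorm{\theta^*}^2)$ is symmetric, so $\nu_i X_i^\top \theta^*$ has the same law as $X_i^\top \theta^*$; convolving with $Z_i$ shows $Y_i \sim \NORMAL(0, 1 + \enorm{\theta^*}^2)$. Hence $Y_i^2$ equals $(1 + \enorm{\theta^*}^2)$ times a $\chisq{1}$ variable, so it is sub-exponential with $\vecnorm{Y_i^2}{\psi_1} = O(1 + \enorm{\theta^*}^2) = O((\enorm{\theta^*}+1)^2)$ and mean $\Exs[Y_i^2] = 1 + \enorm{\theta^*}^2$. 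Applying Bernstein's inequality to the i.i.d.\ centered sub-exponential variables $Y_i^2 - \Exs[Y_i^2]$, and using the standing assumption $n \ge Cd$ so that the sub-Gaussian branch of the tail dominates for the target accuracy, yields
\begin{align*}
    \Prob\parenth{\abss{\frac{1}{n}\sum_{i=1}^n Y_i^2 - (1+\enorm{\theta^*}^2)} \ge t} \le 2\exp\parenth{-\frac{cnt^2}{(\enorm{\theta^*}+1)^4}}.
\end{align*}
Choosing $t = O\parenth{(\enorm{\theta^*}+1)^2\sqrt{\ln(1/\delta)/n}}$ gives the stated bound; since $\Exs[\tfrac{1}{n}\sum_i Y_i^2 - 1] = \enorm{\theta^*}^2$, this is precisely the claimed control on the fluctuation of the estimator $\tfrac{1}{n}\sum_i Y_i^2 - 1$ around $\enorm{\theta^*}^2$.

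For the second bound I would use the standard covariance-concentration argument. Because $\frac{1}{n}\sum_i X_iX_i^\top - I$ is symmetric, its operator norm is controlled by its action on a $1/4$-net $\mathcal{N}$ of the unit sphere $\sphere^{d-1}$, with $\abss{\mathcal{N}} \le 9^d$ and $\opnorm{A} \le 2\sup_{u \in \mathcal{N}} \abss{u^\top A u}$ for symmetric $A$. For each fixed $u$ we have $X_i^\top u \sim \NORMAL(0,1)$, so $(X_i^\top u)^2 - 1$ is a centered $\chisq{1}$-type variable with $O(1)$ sub-exponential norm, and Bernstein gives $\Prob\parenth{\abss{\frac{1}{n}\sum_i ((X_i^\top u)^2-1)} \ge t} \le 2\exp(-cn\min\{t^2,t\})$. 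A union bound over $\mathcal{N}$ costs a factor $9^d = e^{O(d)}$, so taking $t = O\parenth{\sqrt{(d+\ln(1/\delta))/n}}$ drives the total failure probability below $\delta$ in the regime $n \ge Cd$; bounding $d + \ln(1/\delta) \le d\ln(1/\delta)$ up to constants (valid once $\ln(1/\delta)\ge 1$) then yields the stated $O\parenth{\sqrt{d\ln(1/\delta)/n}}$ rate.

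The computations here are entirely routine, so I expect no serious obstacle; the only genuine subtlety is the first step of part one, namely recognizing that the marginal law of $Y_i$ collapses to a single Gaussian. This is what makes $\vecnorm{Y_i^2}{\psi_1}$ scale cleanly as $(\enorm{\theta^*}+1)^2$ and avoids a more delicate tail analysis of the conditional mixture. After that, both statements follow immediately from Bernstein's inequality together with the net argument, exactly in the spirit of the sub-exponential concentration results of \cite{Vershynin_2011} that are invoked elsewhere in the paper.
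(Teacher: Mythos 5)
Your proof is correct and is precisely the standard argument the paper invokes without writing out (it dismisses this lemma as ``standard concentration lemmas for standard Gaussian distributions''): Bernstein's inequality for sub-exponential variables for the first display, and a net argument over the sphere for the covariance, exactly in the spirit of the Vershynin-style results used elsewhere in the paper. Your key observation that $Y$ is marginally $\NORMAL(0, 1+\enorm{\theta^*}^2)$, and your reading of the first display as concentration of $\frac{1}{n}\sum_{i=1}^n Y_i^2 - 1$ around its mean $\enorm{\theta^*}^2$, are both correct and consistent with how the lemma is used in the paper (e.g., in estimating $\enorm{\theta^*}^2$ and in the unknown-variance analysis).
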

The above lemma is standard concentration lemmas for standard Gaussian distributions. 
\begin{lemma}
    \label{lemma:concentration_higher_order}
    Let $X, Y$ be the random variables as in Lemma \ref{lemma:concentration_subexp}. With probability at least $1 - \delta$, we have
    \begin{gather}
        \opnorm{\frac{1}{n} \sum_{i=1}^n Y_i^2 X_i X_i^\top - I} = O \parenth{(\enorm{\theta^*} + 1)^2 \sqrt{\frac{d \ln^2 (n/\delta)}{n}}},
    \end{gather}
\end{lemma}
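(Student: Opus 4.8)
The plan is to prove the stated operator-norm bound for $\hat{A}_n \defn \frac1n\sum_i Y_i^2 X_iX_i^\top$ by a covering argument combined with a truncated Bernstein inequality, since each summand $Y_i^2 X_iX_i^\top$ is a rank-one positive semidefinite matrix whose entries are degree-four polynomials in the independent Gaussians. Writing $Y_i = \nu_i X_i^\top\theta^* + Z_i$ with $\nu_i$ Rademacher and $Z_i \sim \NORMAL(0,1)$, I first reduce the operator norm to a scalar problem: for a $1/2$-net $\mathcal{N}$ of $\sphere^{d-1}$ with $\abss{\mathcal{N}} \le 6^d$ one has $\opnorm{\hat{A}_n - I} \le 2 \max_{u\in\mathcal{N}} \abss{\frac1n\sum_i Y_i^2 (X_i^\top u)^2 - 1}$, so it suffices to control the real-valued average $\frac1n\sum_i Y_i^2 (X_i^\top u)^2$ for each fixed $u$ and then pay a $6^d$ union bound. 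To make the summands usable I would first pass to the truncation event $\mathcal{E} \defn \{\max_i Y_i^2\enorm{X_i}^2 \le L\}$ with $L \asymp (\enorm{\theta^*}^2+1)\, d\log(n/\delta)$; since $Y_i$ is $\psi_2$ with norm $\lesssim \enorm{\theta^*}+1$ and $\enorm{X_i}^2 \lesssim d + \log(n/\delta)$, a union bound over the $n$ samples gives $\Prob(\mathcal{E}) \ge 1-\delta$, and on $\mathcal{E}$ we have $Y_i^2(X_i^\top u)^2 \le L$ simultaneously for \emph{all} $u$.

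For a fixed $u$, the variable $Y_i^2(X_i^\top u)^2 = (Y_i\, X_i^\top u)^2$ is the square of a $\psi_1$ random variable and therefore has only $\psi_{1/2}$ (sub-Weibull) tails, with $\psi_{1/2}$-norm of order $(\enorm{\theta^*}+1)^2$; this heavy-tailedness is exactly why a direct sub-exponential Bernstein bound does not apply, and it is the main obstacle. Having truncated at level $L$ on $\mathcal{E}$, I would apply Bernstein's inequality for bounded variables with range $L$ and variance proxy $\var\parenth{Y^2(X^\top u)^2} \lesssim (\enorm{\theta^*}+1)^4$, the latter following from Cauchy--Schwarz and the Gaussian moment bounds already used in the paper. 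This yields, for each fixed $u$, a tail of the form $\Prob\parenth{\abss{\frac1n\sum_i Y_i^2(X_i^\top u)^2 - \Exs[Y^2(X^\top u)^2]} \ge t} \le 2\exp\parenth{-c\,\min\{ nt^2/(\enorm{\theta^*}+1)^4,\ nt/L\}}$.

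Setting $t \asymp (\enorm{\theta^*}+1)^2\sqrt{d\log^2(n/\delta)/n}$ makes the variance exponent equal to $c\,d\log^2(n/\delta)$, which dominates $d\log 6 + \log(1/\delta)$ and hence survives the $6^d$ union over $\mathcal{N}$; under the standing assumption $n \ge Cd$ the range term $nt/L$ is of lower order and does not bind. The two logarithmic factors arise naturally: one power of $\log(n/\delta)$ enters through the truncation level $L$, and the second is the slack needed so that the variance-dominated exponent beats the union over the exponentially large net. Collecting the net bound gives $\opnorm{\hat{A}_n - \Exs[Y^2XX^\top]} = O\parenth{(\enorm{\theta^*}+1)^2\sqrt{d\log^2(n/\delta)/n}}$, and combining this with the explicit moment identity for $\Exs[Y^2XX^\top]$ established in the proof of~\eqref{eq:claim_two_high_SNR} yields the claimed bound on $\opnorm{\hat{A}_n - I}$. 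The delicate points to check are that the truncated mass is negligible uniformly over the net (handled by the single event $\mathcal{E}$) and that the truncated variance proxy retains the $(\enorm{\theta^*}+1)^4$ scaling rather than inflating with $L$, both of which reduce to the Gaussian fourth-moment estimates used elsewhere.
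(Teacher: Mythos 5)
Your overall architecture (decompose $Y_i = \nu_i X_i^\top\theta^* + Z_i$, truncate, apply a Bernstein-type bound, pay a $6^d$ net union bound, control the truncation bias separately) matches the paper's proof in spirit, but there is a genuine quantitative gap in your truncation step, and it is fatal under the paper's standing assumption $n \ge Cd$. Your event $\mathcal{E} = \{\max_i Y_i^2\enorm{X_i}^2 \le L\}$ is uniform over directions precisely because it pays for the worst direction: since $\enorm{X_i}^2 \approx d$, the smallest admissible truncation level is $L \asymp (\enorm{\theta^*}+1)^2\, d\log(n/\delta)$, i.e.\ $L$ \emph{must} scale linearly in $d$. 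Now track the linear (range) part of the bounded-Bernstein exponent at your target deviation $t \asymp (\enorm{\theta^*}+1)^2\sqrt{d\log^2(n/\delta)/n}$: one gets
\begin{align*}
\frac{nt}{L} \asymp \frac{(\enorm{\theta^*}+1)^2\sqrt{nd}\,\log(n/\delta)}{(\enorm{\theta^*}+1)^2\, d\log(n/\delta)} = \sqrt{n/d},
\end{align*}
and the Bernstein tail is $\exp\parenth{-c\min\{nt^2/\sigma^2,\; nt/L\}}$ with the minimum range-dominated in the regime of interest (the variance regime $t \le \sigma^2/L$ requires $n \gtrsim d^3\log^4(n/\delta)$). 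To survive the $6^d$ net you need the exponent to exceed $d\log 6 + \log(1/\delta)$, i.e.\ $\sqrt{n/d} \gtrsim d$, i.e.\ $n \gtrsim d^3$. When $n = Cd$ the exponent $\sqrt{n/d}$ is a \emph{constant}, so the union bound collapses entirely; your assertion that ``under the standing assumption $n \ge Cd$ the range term $nt/L$ is of lower order and does not bind'' is the false step.

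The paper's proof avoids this by never truncating the norm $\enorm{X_i}$. It truncates only the one-dimensional scalars, $\event := \{\forall i,\ |Z_i| \le \tau,\ |X_i^\top\theta^*| \le \tau_2\}$ with $\tau = \Theta(\sqrt{\log(n/\delta)})$ and $\tau_2 = \enorm{\theta^*}\tau$. Conditioned on $\event$, the vector $\widetilde{Y}_i\widetilde{X}_i$ is sub-Gaussian with \emph{dimension-free} Orlicz norm $O(\tau + \tau_2)$ (the directions of $X_i$ stay essentially Gaussian), so for each net direction $u$ the summand $(\widetilde{Y}_i\widetilde{X}_i^\top u)^2$ is sub-exponential with parameter $O((\tau+\tau_2)^2) = O((\enorm{\theta^*}+1)^2\log(n/\delta))$ — no factor of $d$. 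The sub-exponential Bernstein range term is then $nt/(\tau+\tau_2)^2 \asymp \sqrt{nd}$, which dominates $d$ already when $n \ge Cd$, and the matrix concentration goes through with exponent $-nt^2/(C(\tau+\tau_2)^4) + C'd$. Your variance-proxy computation and your bias control (the paper bounds $\opnorm{\Exs[\widetilde{Y}^2\widetilde{X}\widetilde{X}^\top] - I} \lesssim \sqrt{1/n}$ by Cauchy--Schwarz against $\Prob(\event^c)$) are fine as stated; the repair is purely to replace the norm-level truncation by scalar-level truncation so that the effective boundedness parameter entering Bernstein does not inflate with the dimension.
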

\begin{proof}
    Let $\nu_i$ be an independent Rademacher variable and $Z_i = \NORMAL(0,1)$. We can write $Y_i = \nu_i X_i^\top \theta^* + Z_i$. We use the truncation argument for the of concentration of higher order moments. First define the good event $\event := \{\forall i \in [n], |Z_i| \le \tau, |X_i^\top \theta^*| \le \tau_2| \}$. We will decide the order of $\tau$ later such that $P(\event) \ge 1 - \delta$. Let $\widetilde{Y} \sim Y|\event, \widetilde{X} \sim X | \event$ and $(\widetilde{Y}_i, \widetilde{X}_i)$ be independent samples of $(\widetilde{Y}, \widetilde{X})$. It is easy to check that $\widetilde{Y} \widetilde{X}$ is a sub-Gaussian vector with Orlicz norm $O(\tau + \tau_2)$ \cite{Vershynin_2011}. To see this,
    \begin{align}
        \label{eq:conditioned_orcliz_norm}
        \vecnorm{\widetilde{Y} \widetilde{X}}{\psi_2} &= \sup_{u \in \sphere^{d-1}} \sup_{p \ge 1} p^{-1/2} \Exs \brackets{|Y(X^\top u)|^p|\event}^{1/p} \\
        &\le (\tau + \tau_2) \sup_{u \in \sphere^{d-1}} \sup_{p \ge 1} p^{-1/2} \Exs \brackets{|X^\top u|^p 1_{\event} }^{1/p} / P(\event)^{1/p} \\
        &\le (\tau + \tau_2) K,
    \end{align}
    for some universal constant $K>0$ and the last inequality comes from the $p^{th}$ moments of Gaussian is $O((2p)^{p/2})$ and $P(\event) \ge 1 - \delta$.
    
    Now we decompose the probability as the following:
    \begin{align*}
        \Prob \parenth{\opnorm{\frac{1}{n}\sum_{i=1}^n Y_i^2 X_i X_i^\top - I} \ge t} &\le \Prob \parenth{\opnorm{\frac{1}{n}\sum_{i=1}^n Y_i^2 X_i X_i^\top - I} \ge t | \event} + \Prob(\event^c) \\
        &\le \underbrace{\Prob \parenth{ \opnorm{\frac{1}{n}\sum_{i=1}^n \widetilde{Y}_i^2 \widetilde{X}_i \widetilde{X}_i^\top - \Exs[\widetilde{Y}^2 \widetilde{X} \widetilde{X}^\top]} \ge t/2}}_{(a)} \\
        &\ \ + \underbrace{\Prob \parenth{ \opnorm{\Exs[\widetilde{Y}^2 \widetilde{X} \widetilde{X}^\top] - I} \ge t/2}}_{(b)} + \underbrace{\Prob(\event^c)}_{(c)}.
    \end{align*}
    We can use a measure of concentration for random matrices for (a) given that $n \ge Cd$ for sufficiently large $C > 0$ \cite{Vershynin_2011}, and bound by $\exp \parenth{-\frac{nt^2}{C (\tau+\tau_2)^4} + C' d}$ for some constants $C, C' > 0$. The bound for (c) is given by $n \exp(-\tau^2)$, hence we set
    $$\tau = \Theta \parenth{\sqrt{\log (n/\delta)}}, \tau_2 = \enorm{\theta^*} \tau.$$ 
    Finally, for (b), we first note that
    \begin{align*}
        \Exs[Y^2 X X^\top] = \Exs[\widetilde{Y}^2 \widetilde{X} \widetilde{X}^\top]P(\event) + \Exs[Y^2 X X^\top 1_{\event^c}].
    \end{align*}
    Rearranging the terms, 
    \begin{align*}
        \opnorm{\Exs[\widetilde{Y}^2 \widetilde{X} \widetilde{X}^\top] - I} &\le \opnorm{\Exs[\widetilde{Y}^2 \widetilde{X} \widetilde{X}^\top]} P(\event^c) + \sqrt{\sup_{u \in \sphere^d} \Exs[Y^4 (X^\top u)^4]} \sqrt{P(\event^c)} \\
        &\le (\tau + \tau_2)^2 n \exp(-\tau^2/2) + 3 (\tau + \tau_2)^2 \sqrt{n} \exp(-\tau^2/4) \le \sqrt{1/n}.
    \end{align*}
    We can set $t = O \parenth{(\enorm{\theta^*} + 1)^2 \sqrt{d \log^2 (n/\delta) / n}}$ and get the desired result. 
\end{proof}

\begin{lemma}
\label{lemma:uniform_concentration_mlr}
    Let $X, Y$ be the random variables as in Lemma \ref{lemma:concentration_subexp}. Suppose $\enorm{\theta^*} \le C$ for some universal constant $C > 0$. Then for any given $r > 0$, with probability at least $1 - \delta$, we have
    \begin{align}
        \sup_{\theta: \enorm{\theta} \le r} \left\| \frac{1}{n} \sum_{i=1}^n Y_i X_i \tanh \parenth{Y_i X_i^\top \theta} - \popopmlr(\theta) \right\| \le c r\sqrt{\frac{d \ln^2 (n/\delta)}{n}},
    \end{align}
    for some universal constant $c > 0$.
\end{lemma}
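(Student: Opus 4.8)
The plan is to bound the vector-valued empirical process $v(\theta) := \frac1n\sum_{i=1}^n Y_iX_i\tanh(Y_iX_i^\top\theta) - \popopmlr(\theta)$ uniformly over $\{\enorm{\theta}\le r\}$ by a truncation + symmetrization + Ledoux–Talagrand contraction argument, with the $\ell_2$ norm reduced to scalar processes through a net of the unit sphere. The linear-in-$r$ scaling will come entirely from the contraction step, since $\tanh(YX^\top\theta)$ vanishes at $\theta=0$ and is Lipschitz; the dimension factor $\sqrt d$ will enter both through the net of $\sphere^{d-1}$ and through the expected norm of a linear Rademacher average. Recall $\popopmlr(\theta)=\Exs[XY\tanh(YX^\top\theta)]$, so this is precisely the deviation of the Easy-EM numerator from its mean.

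First I would truncate. Since $\enorm{\theta^*}\le C$, both $Z_i$ and $X_i^\top\theta^*$ are $O(1)$-subgaussian, so with $\tau=\Theta(\sqrt{\log(n/\delta)})$ the good event $\event := \{|Z_i|\le\tau,\ |X_i^\top\theta^*|\le C\tau,\ \enorm{X_i}^2\le 2(d+\tau^2)\ \forall i\}$ satisfies $\Prob(\event^c)\le\delta$ and forces $|Y_i|\le B:=O(\sqrt{\log(n/\delta)})$. Working with the truncated summands $\tilde f_\theta$, the per-sample truncation error vanishes on $\event$, while the truncation bias $\enorm{\Exs\tilde f_\theta-\Exs f_\theta}$ is controlled by Cauchy–Schwarz together with Gaussian tails, exactly as in the proof of Lemma~\ref{lemma:concentration_higher_order}; this contributes a negligible lower-order term.

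Next, to pass from the $\ell_2$ norm to scalar processes, I would write $\enorm{v(\theta)}\le 2\max_{u\in N}\inprod{u}{v(\theta)}$ for a $1/2$-net $N$ of $\sphere^{d-1}$ with $|N|\le 6^d$, and union bound over $N$. For a fixed $u$, symmetrization replaces $\sup_\theta\inprod{u}{v(\theta)}$ in expectation (up to a factor $2$) by $\Exs_\varepsilon\sup_{\enorm{\theta}\le r}\frac1n\sum_i\varepsilon_i(X_i^\top u)Y_i\tanh(Y_iX_i^\top\theta)$. The crucial step is the contraction: for fixed data the map $s\mapsto (X_i^\top u)Y_i\tanh(Y_i s)$ is Lipschitz with the data-dependent constant $w_i:=Y_i^2|X_i^\top u|$ and vanishes at $s=0$, so by absorbing the weights $w_i$ into the Lipschitz functions (rescaling the index $X_i^\top\theta\mapsto w_iX_i^\top\theta$) and applying Ledoux–Talagrand, I peel off the $\tanh$ to obtain the linear bound $\frac rn\Exs_\varepsilon\enorm{\sum_i\varepsilon_i w_iX_i}\le \frac rn\sqrt{\sum_i w_i^2\enorm{X_i}^2}$. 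On $\event$ one has $w_i^2\le B^4(X_i^\top u)^2$, and $\frac1n\sum_i (X_i^\top u)^2\enorm{X_i}^2$ concentrates at $O(d)$, so this expectation is $\lesssim rB^2\sqrt{d/n}=r\sqrt{d\log^2(n/\delta)/n}$, exactly the target rate.

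Finally I would upgrade the per-$u$ expectation bound to high probability via a Talagrand-type concentration inequality for suprema of empirical processes (using that on $\event$ each projected summand has $O(1)$ subgaussian directional fluctuations), and take the union bound over the $6^d$ points of $N$; the resulting deviation term is of the same order as the mean, and replacing $\delta$ by $\delta/6^d$ leaves the order of the logarithms unchanged. I expect the main obstacle to be precisely this contraction step with a data-dependent multiplier $X_i^\top u$ and non-uniform Lipschitz constants $w_i$: extracting the $r$-scaling cleanly while holding the final rate at $\log^2(n/\delta)$ (rather than the $\log^3$ that a naive cover-in-$\theta$ plus per-point sub-exponential tail would produce) is the delicate part, and it is exactly where the assumption $\enorm{\theta^*}\le C$ is needed to keep all constants dimension-free (cf.\ Remark~\ref{Remark:global_angle_boundedness}).
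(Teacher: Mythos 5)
Your core strategy coincides with the paper's: truncation to a good event with $\tau=\Theta(\sqrt{\log(n/\delta)})$ so that $|Y_i|\lesssim\sqrt{\log(n/\delta)}$, reduction of the $\ell_2$ norm to a $1/2$-net of the sphere with $\log|N|=O(d)$, symmetrization, and Ledoux--Talagrand contraction with the data-dependent Lipschitz weights $w_i=Y_i^2|X_i^\top u|$ to linearize the $\tanh$ process; this is exactly how the paper extracts the factor $r$ and the rate $r\sqrt{d\log^2(n/\delta)/n}$. Incidentally, the weighted contraction step that you flag as the main obstacle is not the delicate part: the rescaling trick you describe is standard and is precisely what the paper does.

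The genuine gap is in your final step, the upgrade from the expectation bound to a tail bound. You apply contraction only at the level of $\Exs_\varepsilon\sup_\theta(\cdots)$ and then invoke a ``Talagrand-type concentration inequality'' per net point $u$, with failure probability $\delta/6^d$. Talagrand's inequality in its standard form requires a uniformly bounded class, and yours is not: the envelope of $(X^\top u)\,Y\tanh(YX^\top\theta)$ over $\enorm{\theta}\le r$ is of order $B|X^\top u|\min\{1,Br\enorm{X}\}$ with $B=\Theta(\sqrt{\log(n/\delta)})$, which even under your truncation $\enorm{X_i}^2\le 2(d+\tau^2)$ is only bounded by $O(B^2rd)$ for small $r$. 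Unbounded-class extensions (Adamczak-type) carry an additive envelope term of order $U\,t/n$, where your union bound over the net forces $t\gtrsim d$ and $U\gtrsim B^2 r\sqrt{d}$ (up to a $\log n$ factor, using the $\psi_1$-norm of $|X^\top u|\enorm{X}$); this term is of order $rB^2 d^{3/2}/n$, which exceeds the target $rB^2\sqrt{d/n}$ unless $n\gtrsim d^2$ (up to logarithms). The lemma, however, is invoked throughout the paper under the standing assumption $n\ge Cd$ only, and with small $r$ (e.g., in the low-SNR localization argument), so your concentration step fails exactly in the regime $Cd\le n\ll d^2$ where the result is needed. The paper avoids this by never leaving the exponential-moment world: the contraction principle is applied \emph{inside} $\Exs[\exp(\lambda\sup_\theta(\cdots))\mid\event]$ (it holds for any convex increasing function, not just the identity), after which each summand $\varepsilon_iY_i^2(X_i^\top v)(X_i^\top u_j)$ is centered sub-exponential with $\psi_1$-norm $O(\tau^2)$ conditionally on $\event$, and the MGF bound of Lemma~\ref{lemma:subexp_expmoment_versynin} plus a Chernoff bound yields the tail $\exp\bigl(C\lambda^2r^2\tau^4/n+C'd-\lambda t/2\bigr)$ directly; optimizing $\lambda$ under the constraint $|\lambda|\le n/(c\tau^2 r)$ requires only $n\gtrsim d+\log(1/\delta)$, with no envelope or max-over-$i$ term ever appearing. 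Replacing your step (ii) with this Chernoff/MGF device turns your argument into the paper's proof.
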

\begin{proof}
    We start with the standard discretization argument for bounding the concentration of measures in $l_2$ norm. Let $Z(\theta) := \frac{1}{n} \sum_{i=1}^n Y_i X_i \tanh \parenth{Y_i X_i^\top \theta} - \popopmlr(\theta)$. The standard symmetrization argument gives that \cite{vanderVaart-Wellner-96, wainwright2019high}.
    
    \begin{align}
        \label{eq:symmetrization_uniform}
        \Prob \biggr( \sup \limits_{\enorm{\theta} \le r} \enorm{Z(\theta)} \ge t \biggr) \le 2 \Prob \parenth{\sup \limits_{\enorm{\theta} \le r} \left\|\frac{1}{n} \sum_{i=1}^n \varepsilon_i Y_i X_i \tanh \parenth{Y_i X_i^\top \theta} \right\| \ge t/2},
    \end{align}
    where $\varepsilon_i$ are independent Rademacher random variables. We define a good event $\event := \{ \forall i \in [n], |Y_i| \le \tau, |X_i^\top \theta^*| \le C \tau \}$ as before, where $\tau = \Theta \parenth{\sqrt{\log (n/\delta)}}$. Then the probability defined in \eqref{eq:symmetrization_uniform} can be decomposed as
    \begin{align*}
        \Prob \parenth{\sup \limits_{\enorm{\theta} \le r} \left\|\frac{1}{n} \sum_{i=1}^n \varepsilon_i Y_i X_i \tanh \parenth{Y_i X_i^\top \theta} \right\| \ge t/2 \biggr| \event} + P(\event^c).
    \end{align*}
    
    We are interested in bounding the following quantity for Chernoff bound: 
    \begin{align*}
        \Exs \brackets{\exp\parenth{\sup \limits_{\enorm{\theta} \le r} \frac{\lambda}{n} \left\| \sum_{i=1}^n \varepsilon_i Y_i X_i \tanh \parenth{Y_i X_i^\top \theta} \right\| } \biggr| \event},
    \end{align*}
    where we used Chernoff-Bound with some $\lambda > 0$ for the last inequality. We first go some steps before we can apply the Ledoux-Talagrand contraction arguments \cite{Ledoux_Talagrand_1991}, with $f_i(\theta) := \tanh \parenth{|Y_i| X_i^\top \theta}$. First, we use discretization argument for removing $l_2$ norm inside the expectation. 
    \begin{align*}
        \Exs &\brackets{\exp\parenth{\sup \limits_{\enorm{\theta} \le r} \frac{\lambda}{n} \left\| \sum_{i=1}^n \varepsilon_i Y_i X_i \tanh \parenth{Y_i X_i^\top \theta} \right\| } \biggr| \event } \\
        &\le \Exs \brackets{\exp \parenth{\sup_{u \in \sphere^d} \sup \limits_{\enorm{\theta} \le r} \frac{\lambda}{n} \sum_{i=1}^n \varepsilon_i Y_i (X_i^\top u) \tanh \parenth{Y_i X_i^\top \theta} } \biggr| \event } \\
        &\le \Exs \brackets{\exp \parenth{\sup_{j \in [M]} \sup \limits_{\enorm{\theta} \le r} \frac{2\lambda}{n} \sum_{i=1}^n \varepsilon_i Y_i (X_i^\top u_j) \tanh \parenth{Y_i X_i^\top \theta} } \biggr| \event } \\
        &\le \sum_{j=1}^M \Exs \brackets{\exp \parenth{\sup \limits_{\enorm{\theta} \le r} \frac{2\lambda}{n} \sum_{i=1}^n \varepsilon_i Y_i (X_i^\top u_j) \tanh \parenth{Y_i X_i^\top \theta} } \biggr| \event },
    \end{align*}
    where $M$ is 1/2-covering number of the unit sphere and $\{u_1, ..., u_M\}$ is the corresponding covering set. Now for each $u_j$, we can apply the Ledoux-Talagrand contraction lemma since $|f_i(\theta_1) - f_i(\theta_2)| \le |Y_i| |X_i^\top \theta_1 - X_i^\top \theta_2|$ for $\theta \in \ball(0,r)$:
    \begin{align}
        \label{eq:uniform_mlr_ledoux_talagrand}
        \Exs &\brackets{\exp\parenth{\sup \limits_{\enorm{\theta} \le r} \frac{2\lambda}{n} \sum_{i=1}^n \varepsilon_i Y_i X_i^\top u_j \tanh \parenth{Y_i X_i^\top \theta} } \biggr| \event } \nonumber \\
        &= \Exs \brackets{\exp\parenth{\sup \limits_{\enorm{\theta} \le r} \frac{2\lambda}{n} \sum_{i=1}^n \varepsilon_i |Y_i| X_i^\top u_j \tanh \parenth{|Y_i| X_i^\top \theta} } \biggr| \event } \nonumber \\
        &\le \Exs \brackets{\exp \parenth{\sup \limits_{\enorm{\theta} \le r} \frac{2 \lambda}{n} \sum_{i=1}^n \varepsilon_i Y_i^2 (X_i^\top \theta)(X_i^\top u_j) } \biggr| \event } \nonumber \\
        &\le \Exs \brackets{\exp \parenth{\sup \limits_{\enorm{\theta} \le r} \frac{2 \lambda}{n} \sum_{i=1}^n \varepsilon_i Y_i^2 (X_i^\top v)(X_i^\top u_j) } \biggr| \event },
    \end{align}
    where we define $v := \theta / \enorm{\theta}$.
    
    We have already seen in \eqref{eq:conditioned_orcliz_norm} that $Y_i (X_i^\top u_j) | \event$ is sub-Gaussian with Orcliz norm $O(\tau (1 + \enorm{\theta^*})) = O(\tau)$. Since the multiplication of two sub-Gaussian variables is sub-exponential, it implies that $Y_i^2 (X_i^\top v) (X_i^\top u_1) | \event$ is sub-exponential with Orcliz norm $O(\tau^2)$ \cite{Vershynin_2011}. 
     Now we need the lemma for the exponential moment of sub-exponential random variables from \cite{Vershynin_2011}.
    \begin{lemma}[Lemma 5.15 in \cite{Vershynin_2011}]
        \label{lemma:subexp_expmoment_versynin}
        Let $X$ be a centered sub-exponential random variable. Then, for $t$ such that $t \le c / \vecnorm{X}{\psi_1}$, one has
        \begin{align*}
            \Exs[ \exp(tX) ] \le \exp(Ct^2 \vecnorm{X}{\psi_1}^2),
        \end{align*}
        for some universal constant $c, C > 0$. 
    \end{lemma}
    
    Finally, note that $\varepsilon_i Y_i^2 (X_i^\top v) (X_i^\top u_1)$ is a centered sub-exponential random variable with the same Orcliz norm. Equipped with the lemma, we can obtain that
    \begin{align*}
        \Exs \brackets{\exp \parenth{4 \lambda r \frac{1}{n} \sum_{i=1}^n \varepsilon_i Y_i^2 (X_i^\top v) (X_i^\top u_1)} \biggr| \event} \le \exp(C \lambda^2 r^2 \tau^4 / n), \qquad \forall |\lambda r / n| \le c / \tau^2,
    \end{align*}
    which yields
    \begin{align*}
        \Exs \brackets{\exp\parenth{\sup \limits_{\enorm{\theta} \le r} \frac{\lambda}{n} \left\| \sum_{i=1}^n \varepsilon_i Y_i X_i \tanh \parenth{Y_i X_i^\top \theta} \right\|} \biggr| \event } \le \exp \parenth{C \lambda^2 r^2 \tau^4 / n + C'd}, \ \forall |\lambda| \le n/c \tau^2 r,
    \end{align*}
    where we used $\log M = O(d)$ with some $C, C', c > 0$. Combining all the above, we have that
    \begin{align*}
        \Prob \biggr( \sup \limits_{\theta \in \ball(\theta^{*}, r)} \enorm{Z(\theta)} \ge t \biggr) \le \exp \parenth{C_0 \lambda^2 r^2 \tau^4 / n + C_1 d - \lambda t / 2} + \Prob(\event^c).
    \end{align*}
    From here, we can optimize for $\lambda = O(t / r^2 \tau^4)$ with setting $t = O \parenth{r \sqrt{d \tau^4 / n}}$. Since $t = O\parenth{r \sqrt{d \log^2(n/\delta)/n}}$, this concludes the proof. 
\end{proof}

\section{Supplementary Results}
\label{subsec:supplemnt_result}
In this appendix, we collect an additional result clarifying the initialization in Theorem~\ref{theorem:EM_mlr_upper_bound} and the proof for super-linear convergence of population EM operator in very high SNR regime. 
\subsection{Initialization with Spectral Methods}
\label{Appendix:2MLR_init_spectral}
\begin{lemma}
    \label{lemma:spectral_init}
    Let $M = \frac{1}{n} \sum_{i=1}^n Y_i^2 X_i X_i^\top - I$ where $X, Y$ are as given in Lemma \ref{lemma:concentration_subexp}. Let the largest eigenvalue and corresponding eigenvector of $M$ be $(\lambda_1, v_1)$. Then, there exists universal constants $c_0, c_1 > 0$ such that
    \begin{align*}
        |\lambda_1 - \enorm{\theta^*}^2| &\le c_0 (\enorm{\theta^*}^2 + 1) \sqrt{\frac{d \log^2 (n/\delta)}{n}}.
    \end{align*}
    Furthermore, if $\enorm{\theta^*} \ge c_1 (d \log^2(n/\delta) /n)^{1/4}$, then
    \begin{align*}
        \sin \angle(v_1, \theta^*) &\le c_0 \parenth{1 + \frac{1}{\enorm{\theta^*}^2}} \sqrt{\frac{d \log^2 (n/\delta)}{n}} \le \frac{1}{10}.
    \end{align*}
\end{lemma}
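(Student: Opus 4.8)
The plan is to treat this as a textbook matrix-perturbation problem: the empirical matrix $M$ concentrates in operator norm around its expectation $\Exs[M]$, whose leading eigenvector is exactly aligned with $\theta^*$, so that Weyl's inequality controls the eigenvalue and the Davis--Kahan $\sin\Theta$ theorem controls the eigenvector, with the perturbation radius supplied directly by Lemma~\ref{lemma:concentration_higher_order}.

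\textbf{Step 1: the population spectrum.} First I would compute $\Exs[M]$ in closed form. Writing $Y = \nu X^\top\theta^* + Z$ with $\nu$ an independent Rademacher variable and $Z\sim\NORMAL(0,1)$ independent of $X$, the cross term vanishes and $\Exs[Y^2\mid X] = (X^\top\theta^*)^2 + 1$. The Gaussian fourth-moment identity $\Exs[(X^\top a)^2 XX^\top] = \enorm{a}^2 I + 2aa^\top$ then shows that $\Exs[M]$ is a multiple of the identity plus a rank-one spike along $\theta^*$. Consequently the leading eigenvector of $\Exs[M]$ is exactly $\theta^*/\enorm{\theta^*}$, its leading eigenvalue is $\enorm{\theta^*}^2$ (under the centering used in the lemma), and, crucially, the gap separating the $\theta^*$-direction from its orthogonal complement is of order $\enorm{\theta^*}^2$.

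\textbf{Step 2: concentration and the eigenvalue bound.} Lemma~\ref{lemma:concentration_higher_order} gives, with probability at least $1-\delta$, the operator-norm bound $\opnorm{M - \Exs[M]} \le \varepsilon_n$ with $\varepsilon_n := c(\enorm{\theta^*}^2+1)\sqrt{d\log^2(n/\delta)/n}$. Since $\lambda_1(\Exs[M]) = \enorm{\theta^*}^2$, Weyl's inequality $|\lambda_1(M) - \lambda_1(\Exs[M])| \le \opnorm{M-\Exs[M]}$ immediately yields the first claim, $|\lambda_1 - \enorm{\theta^*}^2| \le c_0(\enorm{\theta^*}^2+1)\sqrt{d\log^2(n/\delta)/n}$.

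\textbf{Step 3: the eigenvector bound and the main obstacle.} For the direction I would invoke the Davis--Kahan $\sin\Theta$ theorem applied to the top eigenpair: with spectral gap $g \asymp \enorm{\theta^*}^2$ and perturbation $\opnorm{M-\Exs[M]} \le \varepsilon_n$ it yields $\sin\angle(v_1,\theta^*) \le \varepsilon_n/(g-\varepsilon_n)$. Here the genuine content of the concentration step is already isolated in Lemma~\ref{lemma:concentration_higher_order}, so what remains is assembly; the one point needing care is the matching of scales, since both the noise radius $\varepsilon_n$ and the gap carry the factor $\enorm{\theta^*}^2$ (up to an additive constant), Davis--Kahan produces a useful bound only once the gap dominates the noise. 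The SNR hypothesis $\enorm{\theta^*} \ge c_1 (d\log^2(n/\delta)/n)^{1/4}$ is precisely what forces $\enorm{\theta^*}^2 \gtrsim \sqrt{d\log^2(n/\delta)/n}$, hence $\varepsilon_n \le g/2$, so the denominator stays $\Theta(\enorm{\theta^*}^2)$ and $\sin\angle(v_1,\theta^*) \lesssim \varepsilon_n/\enorm{\theta^*}^2 = O\big((1+\enorm{\theta^*}^{-2})\sqrt{d\log^2(n/\delta)/n}\big)$, matching the stated bound; the final $\le 1/10$ then follows from $n\ge Cd$ with $C$ large and $c_1$ chosen suitably. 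Tracking this breakeven threshold, which is exactly the boundary $\enorm{\theta^*}\asymp (d\log^2(n/\delta)/n)^{1/4}$ separating the regimes of the main theorems, and verifying that $g-\varepsilon_n$ stays bounded below by a constant multiple of $\enorm{\theta^*}^2$, is the only nontrivial bookkeeping.
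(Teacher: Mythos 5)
You take the same route as the paper: its entire proof of this lemma is the single sentence ``direct consequence of Lemma~\ref{lemma:concentration_higher_order} and matrix perturbation theory,'' i.e.\ precisely your combination of operator-norm concentration, Weyl's inequality, and Davis--Kahan. Your eigenvector half is carried out correctly: the population matrix is an identity shift plus the rank-one spike $2\theta^*{\theta^*}^\top$, so its top eigenvector is exactly $\theta^*/\enorm{\theta^*}$, the spectral gap is $2\enorm{\theta^*}^2$, and the SNR hypothesis $\enorm{\theta^*} \ge c_1 (d\log^2(n/\delta)/n)^{1/4}$ (with $c_1$ large and $n \ge Cd$) makes the noise radius at most half the gap, yielding $\sin\angle(v_1,\theta^*) \lesssim (1+\enorm{\theta^*}^{-2})\sqrt{d\log^2(n/\delta)/n} \le 1/10$. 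One remark: you quietly read Lemma~\ref{lemma:concentration_higher_order} as a bound on $\opnorm{M - \Exs[M]}$, whereas it is stated as a bound on $\opnorm{\frac{1}{n}\sum_i Y_i^2X_iX_i^\top - I}$; your reading is the sensible one, since the sample matrix concentrates around its expectation, not around $I$.

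The genuine gap is in your eigenvalue step, hidden behind the phrase ``under the centering used in the lemma.'' Your own Step 1 gives
\begin{align*}
\Exs[M] \;=\; \Exs[Y^2XX^\top] - I \;=\; \bigl(1+\enorm{\theta^*}^2\bigr) I + 2\theta^*{\theta^*}^\top - I \;=\; \enorm{\theta^*}^2 I + 2\theta^*{\theta^*}^\top,
\end{align*}
whose largest eigenvalue is $3\enorm{\theta^*}^2$, not $\enorm{\theta^*}^2$. Weyl's inequality therefore yields $|\lambda_1 - 3\enorm{\theta^*}^2| \le \varepsilon_n$, hence $|\lambda_1 - \enorm{\theta^*}^2| \ge 2\enorm{\theta^*}^2 - \varepsilon_n$, so the bound displayed in the lemma \emph{fails} whenever $\enorm{\theta^*}$ is of constant order and $n/d$ is large; no assembly of concentration and perturbation theory can prove it as stated. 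This defect is inherited from the paper itself: its proof cites the identity $\Exs[Y^2XX^\top] = I + 2\theta^*{\theta^*}^\top$, which is missing the $\enorm{\theta^*}^2 I$ term, and even that identity would place the top population eigenvalue at $2\enorm{\theta^*}^2$, still inconsistent with the displayed bound. The statement becomes true --- and your argument goes through verbatim --- once $\enorm{\theta^*}^2$ is replaced by $3\enorm{\theta^*}^2$ in the eigenvalue claim (the eigenvector claim, and the downstream initialization $\theta_n^0 = \max\{0.2,\sqrt{\lambda_1}\}v_1$, which only needs $\sqrt{\lambda_1}\gtrsim \enorm{\theta^*}$ together with the angle bound, survive this constant-factor correction). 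But as written, your Step 2 asserts a population eigenvalue that contradicts your Step 1, and that assertion is exactly the point where the lemma as stated breaks; a correct write-up must flag and fix the constant rather than absorb it silently.
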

\begin{proof}
    The lemma is a direct consequence of Lemma \ref{lemma:concentration_higher_order} and matrix perturbation theory \cite{wainwright2019high}. Note that $\Exs[Y_i^2 X_i X_i^\top] = I + 2 \theta^* {\theta^*}^\top$ (e.g., see Lemma 1 in \cite{yi2016solving}).
\end{proof}
The above lemma states that when $\enorm{\theta^*}$ is not too small, we can always start from the well-initialized point where it is well aligned with ground truth $\theta^*$. In low SNR regime where $\enorm{\theta^*}^2 \lesssim (d/n)^{1/2}$, we cannot guarantee such a well-alignment with $\theta^*$ since the eigenvector is perturbed too much. However, the largest eigenvalue can still serve as an indicator that $\enorm{\theta^*}$ is small. Hence in all cases, we can initialize the estimator with $\theta_n^0 = \max \{ 0.2, \sqrt{\lambda_1} \} v_1$ to satisfy the initialization condition that we required in Theorem \ref{theorem:EM_mlr_upper_bound}. 


\subsection{Super-Linear Convergence of Population EM Operator in a Very High SNR Regime}
In this appendix, we prove Lemma \ref{lemma:super_linear_EM} on the super-linear convergence behavior of population EM operator in very high SNR regime.
\label{subsec:super_linear_EM}
\begin{proof}
    We start from the following equation:
    \begin{align*}
        \enorm{\popopmlr(\theta) - \theta^*} &= \Exs[XY (\tanh(YX^\top \theta)-\tanh(YX^\top \theta^*))] \\
        &= \Exs[XY \Delta_{(X,Y)} (\theta)],
    \end{align*}
    where $\Delta_{(X,Y)} (\theta) := \tanh(YX^\top \theta)-\tanh(YX^\top \theta^*)$. We define good events as follows:
    \begin{align}
        \label{eq:good_event}
        \event_{1} &= \{2 |X^\top (\theta^* - \theta)| \le |X^\top \theta^*|\}, \nonumber \\
        \event_{2} &= \{|X^\top \theta^*| \ge 2\tau \}, \nonumber \\
        \event_{3} &= \{|Z| \le \tau \}, 
    \end{align}
    where we set $\tau = \Theta \parenth{\sqrt{\log \enorm{\theta^*}}}$. 
    
    Let the good event $\event_{good} = \event_1 \cap \event_2 \cap \event_3$. From Lemma \ref{lemma:good_event_small_diff}, under the good event, we have $\Delta_{(X, Y)} (\theta) \le \exp(-\tau^2)$. 
    To simplify the notation, let $\Delta(\theta) = \Delta_{(X, Y)}(\theta)$ and $W = \nu X X^\top \theta^* \Delta(\theta)$. Then we can decompose the estimation error as the following:
    \begin{align*}
        \enorm{\popopmlr(\theta) - \theta^*} &= \enorm{\Exs[ X Z \Delta(\theta) ] + \Exs[W \Delta(\theta)]} \\
        &\le \sup_{u \in \sphere^{d-1}} |\Exs[(X^\top u) Z \Delta(\theta)]| + |\Exs[(W^\top u) \Delta(\theta)]| \\
        &\le \sup_{u \in \sphere^{d-1}} \sqrt{\Exs \brackets{(X^\top u)^2 |\Delta(\theta)|}} \sqrt{\Exs \brackets{Z^2 |\Delta(\theta)| }} \\
        &\quad + \sqrt{\Exs \brackets{(X^\top u)^2 |\Delta(\theta)|}} \sqrt{\Exs \brackets{(X^\top \theta^*)^2 |\Delta(\theta)|}}.
    \end{align*}
    We use again the event-wise decomposition strategy. For population EM, note that we set $\tau = \Theta(\sqrt{\log \enorm{\theta^*}})$ unlike in finite-sample EM case in Appendix \ref{subsec:full_proof_high_SNR}. We need to prove the following lemma:
    \begin{lemma}
        \label{lemma:simple_lemma_for_superlinear}
        For any $u \in \sphere^{d-1}$, we have
        \begin{align}
            \label{eq:event_decomp_unit_vec}
            \Exs \brackets{(X^\top u)^2 |\Delta(\theta)|} \le 4 \exp(-\tau^2 / 2) + 2(\tau + 2 \enorm{\theta - \theta^*}) / \enorm{\theta^*}.
        \end{align}
        Furthermore, we have
        \begin{align}
            \label{eq:event_decomp_theta}
            \Exs \brackets{(X^\top \theta^*)^2 |\Delta(\theta)|} \le 4 \enorm{\theta^*}^2 \exp(-\tau^2 / 2) + 8\tau^3 / \enorm{\theta^*} + 4 \enorm{\theta - \theta^*}^3 / \enorm{\theta^*}.
        \end{align}
        On the other hand, we have
        \begin{align}
            \label{eq:event_decomp_noise}
            \Exs \brackets{Z^2 |\Delta(\theta)|} \le 4 \exp(-\tau^2 / 4) + 2(\tau + \enorm{\theta - \theta^*}) / \enorm{\theta^*}.
        \end{align}
    \end{lemma}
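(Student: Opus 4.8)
The plan is to prove all three inequalities by the same event-wise decomposition of the expectation over the good event $\event_{good}=\event_1\cap\event_2\cap\event_3$ and the three disjoint bad events $\event_1^c$, $\event_1\cap\event_2^c$, and $\event_1\cap\event_2\cap\event_3^c$, writing for each weight $R\in\{(X^\top u)^2,\,(X^\top\theta^*)^2,\,Z^2\}$,
\begin{align*}
\Exs[R\,|\Delta(\theta)|] &= \Exs[R\,|\Delta|\,1_{\event_{good}}] + \Exs[R\,|\Delta|\,1_{\event_1^c}] \\
&\quad + \Exs[R\,|\Delta|\,1_{\event_1\cap\event_2^c}] + \Exs[R\,|\Delta|\,1_{\event_1\cap\event_2\cap\event_3^c}].
\end{align*}
On $\event_{good}$ I would invoke Lemma~\ref{lemma:good_event_small_diff}, giving $|\Delta|\le\exp(-\tau^2)$, so that this term is at most $\exp(-\tau^2)\,\Exs[R]$ (namely $\exp(-\tau^2)$, $\enorm{\theta^*}^2\exp(-\tau^2)$, or $\exp(-\tau^2)$), which is dominated by the advertised $\exp(-\tau^2/2)$- and $\exp(-\tau^2/4)$-terms. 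On every bad event I instead use the crude bound $|\Delta|\le 2$ together with the Lipschitz bound $|\Delta(\theta)|\le|Y|\,|X^\top(\theta-\theta^*)|$, whichever is sharper.

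For $\event_3^c=\{|Z|>\tau\}$ I would use $|\Delta|\le 2$ and the independence of the noise $Z$ from $X$: when $R=(X^\top u)^2$ or $(X^\top\theta^*)^2$ the expectation factorizes as $\Exs[R]\,\Prob(|Z|>\tau)\le \Exs[R]\cdot 2\exp(-\tau^2/2)$, producing the $4\exp(-\tau^2/2)$ and $4\enorm{\theta^*}^2\exp(-\tau^2/2)$ terms; for $R=Z^2$ the weight is correlated with the event, so I would apply Cauchy--Schwarz, $\Exs[Z^2 1_{\event_3^c}]\le\sqrt{\Exs[Z^4]}\,\sqrt{\Prob(|Z|>\tau)}$, which halves the exponent once more and yields the $\exp(-\tau^2/4)$ rate. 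For $\event_1\cap\event_2^c\subseteq\{|X^\top\theta^*|<2\tau\}$ I would use $|\Delta|\le 2$ and the anti-concentration of the one-dimensional Gaussian $X^\top\theta^*\sim\NORMAL(0,\enorm{\theta^*}^2)$, whose density is at most $(\sqrt{2\pi}\,\enorm{\theta^*})^{-1}$; decomposing $X^\top u$ along the $\theta^*$-direction and conditioning on $X^\top\theta^*$ converts $\Prob(|X^\top\theta^*|<2\tau)\lesssim\tau/\enorm{\theta^*}$ into the linear $\tau/\enorm{\theta^*}$ contribution, and into the $\tau^3/\enorm{\theta^*}$ contribution when $R=(X^\top\theta^*)^2$, since there $R\le 4\tau^2$ on this event.

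The main obstacle is the term on $\event_1^c=\{2|X^\top(\theta^*-\theta)|>|X^\top\theta^*|\}$, which carries the $\enorm{\theta-\theta^*}/\enorm{\theta^*}$ factor and, for the second bound, the delicate cubic $\enorm{\theta-\theta^*}^3/\enorm{\theta^*}$. Here Cauchy--Schwarz is too lossy—it would give a square-root $\enorm{\theta^*}^{-1/2}$ dependence instead of the linear one claimed—so I would pass to the plane spanned by $\theta$ and $\theta^*$, write $X^\top\theta^*=\enorm{\theta^*}\,g$ and $X^\top(\theta-\theta^*)=\enorm{\theta-\theta^*}\,s$ for jointly Gaussian $(g,s)$, and combine the Lipschitz bound $|\Delta|\le|Y|\,\enorm{\theta-\theta^*}\,|s|$ with the defining constraint $\enorm{\theta^*}|g|<2\enorm{\theta-\theta^*}|s|$ of $\event_1^c$. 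Integrating out $g$ first, this constraint confines $g$ to an interval of width $\propto\enorm{\theta-\theta^*}\,|s|/\enorm{\theta^*}$, extracting exactly one clean factor of $\enorm{\theta-\theta^*}/\enorm{\theta^*}$; the leftover Gaussian moments in $s$, in the noise $Z$ (through $|Y|$), and in the orthogonal component of $u$ are then bounded by absolute constants for the first and third inequalities, while for the $(X^\top\theta^*)^2$-weight the bound $(X^\top\theta^*)^2\le 4\enorm{\theta-\theta^*}^2 s^2$ valid on $\event_1^c$ supplies the extra $\enorm{\theta-\theta^*}^2$ producing the cubic term. Matching the precise constants ($4$, $2$, $8$) is the only remaining bookkeeping.
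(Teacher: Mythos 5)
Your overall architecture coincides with the paper's own proof: the same decomposition over $\event_{good}$ and the bad events, Lemma~\ref{lemma:good_event_small_diff} on the good event, independence of $Z$ from $X$ to factorize the $\event_3^c$ term for the first two weights, Cauchy--Schwarz for the $Z^2$ weight on $\event_3^c$ (which is exactly where the weaker $\exp(-\tau^2/4)$ rate comes from), and the bound $(X^\top\theta^*)^2\le 4\tau^2$ on $\event_2^c$. The one place you genuinely depart from the paper is the $\event_1^c$ term: the paper disposes of it by citing Lemma~1 of \cite{yi2014alternating} (restated as Lemma~\ref{lemma:super_linear_lemma}), which supplies \emph{both} ingredients at once --- the probability bound $\Prob(\event_1^c)\le 2\enorm{\theta-\theta^*}/\enorm{\theta^*}$ and the conditional second-moment bound $\opnorm{\Exs[XX^\top\mid\event_1^c]}\le 2$ needed to handle the quadratic weights on the event --- whereas you attempt a self-contained two-dimensional Gaussian computation.

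That computation, as described, has two genuine gaps. First, writing $X^\top\theta^*=\enorm{\theta^*}g$ and $X^\top(\theta-\theta^*)=\enorm{\theta-\theta^*}s$, the variables $g$ and $s$ are jointly Gaussian with correlation $\rho=\cos\angle(\theta^*,\theta-\theta^*)$, so the conditional density of $g$ given $s$ is bounded only by $(2\pi(1-\rho^2))^{-1/2}$. Your step ``integrating out $g$ first, the constraint confines $g$ to an interval of width $\propto\enorm{\theta-\theta^*}|s|/\enorm{\theta^*}$'' implicitly multiplies that width by an $O(1)$ density, which fails as $\theta-\theta^*$ becomes parallel to $\theta^*$ ($\rho\to\pm 1$); in that limit the event is actually empty (because $\enorm{\theta-\theta^*}\le\enorm{\theta^*}/10$), so the conclusion survives, but your argument does not establish it. The repair is to decouple before applying anti-concentration: on $\event_1^c$, $\enorm{\theta^*}|g|\le 2|a||g|+2|b||x_2|$, where $a,b$ are the components of $\theta-\theta^*$ parallel and orthogonal to $\theta^*$ and $x_2$ is a standard normal \emph{independent} of $g$; since $2|a|\le\enorm{\theta^*}/5$ this yields $|g|\le 2.5(|b|/\enorm{\theta^*})|x_2|$, and conditioning on $x_2$ now legitimately gives $\Prob(\event_1^c)\lesssim\enorm{\theta-\theta^*}/\enorm{\theta^*}$ and the truncated moments $\Exs[s^2 1_{\event_1^c}]$, $\Exs[(X^\top u)^2 1_{\event_1^c}]$ that your sketch glosses over. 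Second, on $\event_1^c$ you commit to the Lipschitz bound $|\Delta|\le |Y|\,\enorm{\theta-\theta^*}|s|$ \emph{in addition to} the event probability; this injects at least one extra factor of $\enorm{\theta-\theta^*}$ and produces a contribution of order $\enorm{\theta-\theta^*}^2/\enorm{\theta^*}$, which does \emph{not} imply the claimed linear bounds~\eqref{eq:event_decomp_unit_vec} and~\eqref{eq:event_decomp_noise} in the regime where the lemma is used ($\enorm{\theta-\theta^*}\ge C\sqrt{\log\enorm{\theta^*}}\gg 1$), and would degrade the final superlinear estimate of Lemma~\ref{lemma:super_linear_EM}. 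There you must instead use the crude bound $|\Delta|\le 2$ (as the paper does), reserving the Lipschitz/containment structure of $\event_1^c$ only for the weight replacement $(X^\top\theta^*)^2\le 4\enorm{\theta-\theta^*}^2 s^2$ that generates the cubic term in~\eqref{eq:event_decomp_theta}. With those two repairs your route becomes a valid, citation-free substitute for the paper's appeal to \cite{yi2014alternating}.
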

    Equipped with the above lemma, whenever $\enorm{\theta - \theta^*} \ge C \tau$ with $\tau = c_2 \sqrt{\log \enorm{\theta^*}}$ for sufficiently large constants $C, c_2 > 0$, we have 
    \begin{align*}
        \Exs[(X^\top u)^2 |\Delta(\theta)|] &\le 5 \enorm{\theta - \theta^*} / \enorm{\theta^*}, \\
        \Exs[(X^\top \theta^*)^2 |\Delta(\theta)|] &\le 5 \enorm{\theta - \theta^*}^3 / \enorm{\theta^*}, \\
        \Exs[Z^2 |\Delta(\theta)|] &\le 5 \enorm{\theta - \theta^*} / \enorm{\theta^*},
    \end{align*}
    which yields $\enorm{\popopmlr(\theta) - \theta^*} \le 6 \enorm{\theta - \theta^*}^2 / \enorm{\theta^*}$, given that $\enorm{\theta^*}$ is sufficiently large and $\enorm{\theta - \theta^*} \le \enorm{\theta^*} / 10$.
\end{proof}

\paragraph{Proof of Lemma \ref{lemma:simple_lemma_for_superlinear}}
For equation~\eqref{eq:event_decomp_unit_vec}, we can check that 
\begin{align*}
    \Exs[(X^\top u)^2 |\Delta(\theta)|] &\le \Exs[(X^\top u)^2 |\Delta(\theta)| | \event_{good}] P(\event_{good}) + \Exs[(X^\top u)^2 |\Delta(\theta)| | \event_{1}^c] P(\event_1^c) \\
    &\quad + \Exs[(X^\top u)^2 |\Delta(\theta)| | \event_{2}^c] P(\event_2^c) + \Exs[(X^\top u)^2 |\Delta(\theta)| | \event_{3}^c] P(\event_3^c) \\
    &\le \exp(-\tau^2) \Exs[(X^\top u)^2 1_{\event_{good}}] + \Exs[(X^\top u)^2 | \event_{1}^c] P(\event_1^c) + \\
    &\quad + \Exs[(X^\top u)^2 | \event_{2}^c] P(\event_2^c) + \Exs[(X^\top u)^2 |\event_{3}^c] P(\event_3^c).
\end{align*}

We now recall Lemma 1 in \cite{yi2014alternating}, which is given by:
\begin{lemma}[Lemma 1 in \cite{yi2014alternating}]
\label{lemma:super_linear_lemma}
    Given vectors $u, v \in \mathbb{R}^d$ and a Gaussian random vector $X \sim \NORMAL(0, I)$, the matrix $\Sigma = \Exs[XX^\top | (X^\top u)^2 > (X^\top v)^2]$ has singular values
    \begin{align*}
        \parenth{1 + \frac{\sin \alpha}{\alpha}, 1 - \frac{\sin \alpha}{\alpha}, 1, 1, ..., 1}, \qquad \qquad \text{where } \quad \alpha = \cos^{-1} \parenth{\frac{(u - v)^\top (u+v)}{\enorm{u-v}\enorm{u+v}}}.
    \end{align*}
    Furthermore, if $\enorm{v} \le \enorm{u}$, then we have
    \begin{align*}
        P((X^\top u)^2 > (X^\top v)^2) \le \frac{\enorm{v}}{\enorm{u}}.
    \end{align*}
\end{lemma}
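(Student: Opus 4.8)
The plan is to prove both assertions by exploiting the rotational invariance of the standard Gaussian, reducing everything to the two-dimensional subspace $\mathrm{span}(u,v)$. Write $X = (X_{\parallel}, X_{\perp})$, where $X_{\parallel}$ is the orthogonal projection of $X$ onto $\mathrm{span}(u,v)$ and $X_{\perp}$ is the component in the orthogonal complement. Both events $\{(X^\top u)^2 > (X^\top v)^2\}$ and its complement depend on $X$ only through $X_{\parallel}$, and $X_{\perp}\sim\NORMAL(0,I_{d-2})$ is independent of $X_{\parallel}$. Hence conditioning leaves the law of $X_{\perp}$ unchanged and makes $X_{\perp}$ and $X_{\parallel}$ independent, so the off-diagonal blocks of $\Sigma$ vanish, the $X_{\perp}$-block equals $I_{d-2}$ (this produces the $d-2$ singular values equal to $1$), and it remains to analyze the $2\times 2$ block $\Exs[X_{\parallel}X_{\parallel}^\top \mid (X^\top u)^2 > (X^\top v)^2]$.

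For the $2\times2$ block I would first rewrite the event using the factorization $(X^\top u)^2-(X^\top v)^2 = \bigl(X^\top(u-v)\bigr)\bigl(X^\top(u+v)\bigr)$, so that the conditioning event is exactly the sign condition that the projections of $X$ onto $u-v$ and $u+v$ agree. Introducing polar coordinates $X_{\parallel}=R(\cos\Phi,\sin\Phi)$ with $R,\Phi$ independent and $\Phi$ uniform, this sign condition depends only on $\Phi$ and describes a union of two opposite angular sectors, whose opening is governed by the angle $\alpha=\cos^{-1}\!\parenth{\tfrac{(u-v)^\top(u+v)}{\enorm{u-v}\,\enorm{u+v}}}$ between $u-v$ and $u+v$. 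Because $R$ is independent of $\Phi$, the conditional covariance factors as $\Exs[R^2]$ times the angular second-moment matrix averaged over these sectors; using $\Exs[R^2]=2$ in the plane, the symmetry of the two opposite sectors forces the eigenvectors to be the sector bisectors, and computing $\Exs[\cos^2\Phi]$ and $\Exs[\sin^2\Phi]$ over a sector of the appropriate half-width reduces to elementary integrals of the form $\int \tfrac{1\pm\cos2\Phi}{2}\,d\Phi$. These evaluate to $\tfrac12\parenth{1\pm\tfrac{\sin\alpha}{\alpha}}$, and multiplying by $\Exs[R^2]=2$ yields the claimed eigenvalues $1\pm\tfrac{\sin\alpha}{\alpha}$.

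For the probability bound, I would again pass to polar coordinates in the plane and observe that, because $R$ cancels, $\Prob\parenth{(X^\top u)^2 > (X^\top v)^2}$ equals the normalized angular measure of the sector on which the projection onto the winning vector dominates. The clean estimate comes from the smaller-norm vector: writing the projections as $\enorm{u}\,\cos(\Phi-\theta_u)$ and $\enorm{v}\,\cos(\Phi-\theta_v)$, the event that the projection onto the \emph{shorter} vector exceeds that onto the longer one forces $|\cos(\Phi-\theta_{\mathrm{long}})|$ to be at most the ratio of the shorter to the longer norm; the set of such $\Phi$ has measure $4\arcsin(\text{ratio})$, so this probability is at most $\tfrac{2}{\pi}\arcsin(\text{ratio})$, and the envelope $\arcsin(x)\le \tfrac{\pi}{2}x$ on $[0,1]$ collapses this to the stated ratio of norms $\enorm{v}/\enorm{u}$ (read as smaller over larger, with the shorter vector in the winning position, which is exactly the regime in which the bound is invoked in Lemma~\ref{lemma:simple_lemma_for_superlinear}). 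The main obstacle I anticipate is the bookkeeping of the sector geometry: correctly identifying which bisector direction carries the larger eigenvalue and tracking the half-widths so that the angle $\alpha$ enters the second-moment integrals through $\tfrac{\sin\alpha}{\alpha}$ rather than through the supplementary angle, and likewise ensuring that the norm ratio appearing in the $\arcsin$ envelope is the one that makes the winning-vector sector thin. Everything else is a routine Gaussian polar-coordinate computation once these geometric identifications are pinned down.
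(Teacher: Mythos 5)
Your overall route is the right one, and in fact it is essentially the only proof this statement has ever had: the paper itself gives no derivation (it imports the result verbatim as Lemma 1 of \cite{yi2014alternating}), so the comparison point is the cited source, which uses exactly your reduction — rotation invariance down to $\mathrm{span}(u,v)$, the factorization $(X^\top u)^2-(X^\top v)^2=\parenth{X^\top(u-v)}\parenth{X^\top(u+v)}$, polar coordinates with $R\perp\Phi$ and $\Exs[R^2]=2$, and the $\arcsin$ envelope. Your probability half is complete and correct, and your instinct to read the norm condition ``with the shorter vector in the winning position'' is the right one: as literally printed the bound is false (take $u=2v$: then $(X^\top u)^2>(X^\top v)^2$ almost surely, so the probability is $1$, yet $\enorm{v}/\enorm{u}=1/2$). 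The version you actually proved, $\Prob(\text{shorter beats longer})\le \enorm{\text{shorter}}/\enorm{\text{longer}}$, is the one the paper invokes when bounding $\Prob(\event_1^c)$ with the short vector $2(\theta^*-\theta)$ winning against $\theta^*$.

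The genuine gap is the one step you explicitly deferred, and it is not mere bookkeeping: for the event as written it \emph{cannot} come out as $\sin\alpha/\alpha$. The sign-agreement region $\{\cos(\Phi-\theta_{u-v})\cos(\Phi-\theta_{u+v})>0\}$ is a union of two opposite sectors of width $\pi-\alpha$, not $\alpha$ (the intersection of two half-circles whose normals differ by $\alpha$ has width $\pi-\alpha$), so the angular integral gives $\Exs[\cos 2\Phi\mid\cdot]=\sin(\pi-\alpha)/(\pi-\alpha)=\sin\alpha/(\pi-\alpha)$ and hence eigenvalues $1\pm\frac{\sin\alpha}{\pi-\alpha}$ for the event $\{(X^\top u)^2>(X^\top v)^2\}$ with $\alpha=\angle(u-v,u+v)$. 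A sanity check makes this decisive: for $v=0$ the conditioning event has probability one, so $\Sigma=I$, while $\alpha=0$ and the printed formula would give eigenvalues $(2,0,1,\dots,1)$. The resolution is that the statement as transcribed here is internally inconsistent — a $u\leftrightarrow v$ swap relative to the original. The clean formula $1\pm\sin\alpha/\alpha$ belongs to the \emph{complementary} event $\{(X^\top v)^2>(X^\top u)^2\}$, whose sectors have width exactly $\alpha$; equivalently, swapping $u$ and $v$ replaces $\alpha$ by $\pi-\alpha$ because $v-u=-(u-v)$, and that same swap fixes the norm condition in the probability bound, which is precisely the correction your proof of the second half silently made. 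So: either restate the lemma with the event reversed (matching Yi et al.), or keep the event and replace $\alpha$ by $\pi-\alpha$ in the eigenvalues and flip the norm condition; with that identification pinned down, your sector computation closes, including your observations that the eigenvectors are the sector bisectors and that the larger eigenvalue lies along the winning vector's bisector.
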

Based on the results of Lemma~\ref{lemma:super_linear_lemma}, we obtain 
\begin{align*}
    &\opnorm{\Exs[XX^\top | \event_1^c]} \le 2, \quad P(\event_1^c) \le 2 \enorm{\theta - \theta^*} / \enorm{\theta^*}.
\end{align*}

From standard property of Gaussian distribution, (see also Lemma 9 in \cite{Siva_2017}), we also have
\begin{align*}
    &\opnorm{\Exs[XX^\top | \event_2^c]} \le 1, \quad P(\event_2^c) \le 2 \tau / \enorm{\theta^*}. 
\end{align*}
Finally, from standard Gaussian tail bound, $P(\event_3^c) \le 2 \exp(-\tau^2/2)$. Plugging these relations, we get equation~\eqref{eq:event_decomp_unit_vec}. 

Similarly, we can check that
\begin{align*}
    \Exs[(X^\top u)^2 |\Delta(\theta)|] &\le \exp(-\tau^2) \Exs[(X^\top \theta^*)^2 1_{\event_{good}}] + \Exs[(X^\top \theta^*)^2 | \event_{1}^c] P(\event_1^c) + \\
    &\quad + \Exs[(X^\top \theta^*)^2 | \event_{2}^c] P(\event_2^c) + \Exs[(X^\top \theta^*)^2 |\event_{3}^c] P(\event_3^c) \\
    &\le \exp(-\tau^2) \Exs[(X^\top \theta^*)^2] + \Exs[(X^\top (\theta^* - \theta))^2 | \event_{1}^c] P(\event_1^c) + \\
    &\quad + 4 \Exs[\tau^2 | \event_{2}^c] P(\event_2^c) + \Exs[(X^\top \theta^*)^2 |\event_{3}^c] P(\event_3^c) \\
    &\le \exp(-\tau^2) \enorm{\theta^*}^2 + 4 \enorm{\theta^* - \theta}^3 / \enorm{\theta^*} + 8 \tau^3 / \enorm{\theta^*} + 2\enorm{\theta^*}^2 \exp (-\tau^2 / 2),
\end{align*}
which gives equation~\eqref{eq:event_decomp_theta}. 

Finally, for equation~\eqref{eq:event_decomp_noise}, 
\begin{align*}
    \Exs[Z^2 |\Delta(\theta)|] &\le \exp(-\tau^2) \Exs[Z^2 1_{\event_{good}}] + \Exs[Z^2 | \event_{1}^c] P(\event_1^c) + \Exs[Z^2 | \event_{2}^c] P(\event_2^c) + \Exs[Z^2 1_{\event_{3}^c}] \\
    &\le \exp(-\tau^2) + \Exs[Z^2] P(\event_1^c) + \Exs[Z^2] P(\event_2^c) + \sqrt{\Exs[Z^2]}\sqrt{P (\event_3^c)} \\
    &\le 4 \exp(-\tau^2/4) + 2 \tau / \enorm{\theta^*} + 2 \enorm{\theta - \theta^*} / \enorm{\theta^*},
\end{align*}
where we used the independence between $Z$ and $\event_1, \event_2$. This concludes the proof of Lemma \ref{lemma:simple_lemma_for_superlinear}. \hfill{$\square$}

\subsection{Proof of Theorem \ref{theorem:algebraic_independent}}
\label{appendix:proof:contraction_ind}

\subsubsection{Convergence in the Population Level}
\label{subsec:proof:contraction_pop_ind}

Given the EM updates of location and variance in equation~\eqref{eq:EM_updates_mixed_regression_unknown_noise}, the population version of the EM operation is given as follows:
\begin{align}
    \popopind(\theta) :=  \Exs_{(X, Y)} \brackets{XY \text{tanh} \parenth{\frac{Y X^{\top} \theta}{1 + \enorm{\theta^*}^2 - \enorm{ \theta}^2 }}}, \label{eq:pop_operator_indep}
\end{align}

    We recall some notations we defined in the beginning of the section. $\{v_1, ..., v_d\}$ is the standard basis in the transformed coordinate such that $v_1 = \theta / \enorm{\theta}$, and $\textbf{span}(v_1, v_2) = \textbf{span}(\theta, \theta^*)$. Let $x_1, x_2$ be $X^\top v_1, X^\top v_2$ respectively. Furthermore, denote $b_1^* = {\theta^*}^\top v_1 = \enorm{\theta^*} \cos \angle(\theta, \theta^*)$, and $b_2^* = {\theta^*}^\top v_2 = \enorm{\theta^*} \sin \angle(\theta, \theta^*)$. We will denote $\Delta = \|\theta^*\|^2 - \|\theta\|^2$. 
    
    We can rewrite the form of population operator in equation~\eqref{eq:pop_operator_indep} as follows:
    \begin{align*}
        \popopind ( \theta) & = \Exs_{(X, Y)} \brackets{X Y \text{tanh} \parenth{\frac{Y X^\top \enorm{\theta}}{1 + \Delta}}} \\
        &= \Exs_{(x_1, x_2, y)} \brackets{y x_1 \text{tanh} \parenth{\frac{y x_1 \enorm{\theta}}{1 + \Delta}}} v_1 + \Exs_{(x_1, x_2, y)} \brackets{y x_2 \text{tanh} \parenth{\frac{y x_1 \enorm{\theta}}{1 + \Delta}}} v_2.
    \end{align*}
    In fact, this expression is equivalent to \eqref{eq:pop_mlr_transformed} by replacing $\enorm{\theta} \leftarrow \frac{\enorm{\theta}}{1+\Delta}$. Therefore we can use the equation \eqref{eq:pop_mlr_firstvalue_taylor} with replacing $\enorm{\theta}$ such that,
    \begin{align*}
        \popopind (\theta)^\top v_1 &\le \frac{\enorm{\theta}}{1+\Delta} \parenth{1 - \frac{3\enorm{\theta}^2}{(1 + \Delta)^2} + \frac{30\enorm{\theta}^4}{(1+\Delta)^4}} + c_1 \frac{\enorm{\theta}}{1 + \Delta} \enorm{\theta^*}^2, \\
        \popopind (\theta)^\top v_2 &\le c_2 \frac{\enorm{\theta}}{1 + \Delta} \enorm{\theta^*}^2, 
    \end{align*}
    for some absolute constants $c_1, c_2 > 0$. We will show that $\frac{3}{(1 + \Delta)^2} - \frac{30 \enorm{\theta}^2}{(1+\Delta)^4} \ge 1.25$ whenever $\enorm{\theta} < 0.2$. Then we can conclude that $|\popopind (\theta)^\top v_1| \le \enorm{\theta}(1 - 0.25 \enorm{\theta}^2 + O(\enorm{\theta^*}^2))$.
    
    Now it is easy to check that
    \begin{align*}
        \frac{3}{(1 + \Delta)^2} - \frac{30 \enorm{\theta}^2}{(1 + \Delta)^4} &= \frac{3(1 + \Delta)^2 - 30\enorm{\theta}^2}{(1 + \Delta)^4} = \frac{3 - 36\enorm{\theta}^2 + 6\enorm{\theta^*}^2 + 3 \Delta^2}{(1 + \Delta)^4}.
    \end{align*}
    If $\enorm{\theta} < 0.2$, then $|\Delta| < 0.04$, $3 - 36\enorm{\theta}^2 \ge 1.5$ and $(1 + \Delta)^4 \le 1.16$, giving the desired bound for the first coordinate. Note that the second coordinate is already less than $O(\enorm{\theta} \enorm{\theta^*})$.

    We can also check that this is the best speed at which EM can converge. Observe that 
    \begin{align*}
        \enorm{\popopind(\theta)} \ge |\popopind(\theta)^\top v_1| &\ge \frac{\enorm{\theta}}{1+\Delta} \parenth{1 - \frac{3\enorm{\theta}^2}{(1+\Delta)^2}} - c_3 \frac{\enorm{\theta}}{1+\Delta} \enorm{\theta^*}^2 \\
        &\ge \enorm{\theta} \parenth{1 - 4\enorm{\theta}^2 - c_4 \enorm{\theta^*}^2},
    \end{align*}
    for some absolute constants $c_3, c_4 > 0$ where we simplify the coefficients using $\enorm{\theta} < 0.2$. Together with the upper bound we can conclude that
    \begin{align}
        \label{eq:em_converge_pop_ind}
        \enorm{\theta} (1 - 4\enorm{\theta}^2 - c_l \enorm{\theta^*}^2) \le \enorm{\popopind(\theta)} \le \enorm{\theta} (1 - 0.25 \enorm{\theta}^2 + c_u \enorm{\theta^*}^2), 
    \end{align}
    for some absolute constants $c_l, c_u > 0$, completing the proof.

\subsubsection{Uniform Deviations of Finite-Sample EM Operators}
\label{subsec:proof:lemma:deviation_bound_ind}

    Note that we assume $n \gtrsim d \ln^2 (n/\delta) / \epsilon^2$ for sufficiently small $\epsilon > 0$. To simplify the notation, we use $\hat{\Sigma}_n = \frac{1}{n}\sum_i X_iX_i^\top$ and $\bar{\sigma}_n^2 = \frac{1}{n}\sum_i Y_i^2 - \frac{1}{n} \sum_i (X_i^\top \theta)^2$. 
    We also define $$\widetilde{M}_{ind}(\theta) := (\sum_{i=1}^nX_i X_i^\top)^{-1} \sum_{i=1}^n Y_i X_i \tanh \parenth{\frac{Y_i X_i^\top \theta}{1+\Delta}}.$$ Then we can see that
    \begin{align*}
        \enorm{\samopind (\theta) - \popopind (\theta)} &\le \left\| \samopind (\theta) - \widetilde{M}_{ind}(\theta) \right\| + \left\| \widetilde{M}_{ind}(\theta) - \popopind (\theta)  \right\| \\
        &\le \|\hat{\Sigma}_n^{-1}\| \underbrace{\left\| \frac{1}{n} \sum_{i} X_i Y_i \parenth{\tanh \parenth{\frac{Y_iX_i^\top \theta}{\sigma_n^2}} - \tanh \parenth{\frac{Y_iX_i^\top \theta}{1 +\Delta} }} \right\|}_{(a)} \\
        &+ \|\hat{\Sigma}_n^{-1}\| \underbrace{\left\| {\frac{1}{n} \sum_{i} X_i Y_i \tanh \parenth{\frac{Y_iX_i^\top \theta}{1 +\Delta }} } - \Exs \brackets{XY \tanh \parenth{ \frac{Y X^\top \theta}{1 +\Delta} }} \right\|}_{(b)} \\
        &+ \underbrace{\|\hat{\Sigma}_n^{-1} - I\| \left\| \Exs \brackets{XY \tanh \parenth{ \frac{Y X^\top \theta}{1 +\Delta} }} \right\|}_{(c)}.
    \end{align*}
    For bounding (a), we first note that by the concentration lemmas, we have $\bar{\sigma}_n^2 \approx 1 + \Delta + O(\epsilon)$. It is also easy to verify that $|\tanh(a) - \tanh(b)| \le |a-b|$ for any $a,b \in \mathbb{R}$. Now for any unit vector $u \in \sphere^d$,  
    \begin{align*}
        \frac{1}{n} \sum_{i} (X_i^\top u) Y_i &\parenth{\tanh \parenth{\frac{Y_iX_i^\top \theta}{\bar{\sigma}_n^2}} - \tanh \parenth{\frac{Y_iX_i^\top \theta}{1 + \Delta}}} \\
        &\le \frac{1}{n} \sqrt{\sum_{i} (X_i^\top u)^2 Y_i^2} \sqrt{\sum_i \parenth{ \frac{Y_iX_i^\top \theta}{\bar{\sigma}_n^2} - \frac{Y_iX_i^\top \theta}{(1 + \Delta)}}^2} \\
        &\le \frac{1}{n} \sqrt{\opnorm{\sum_i Y_i^2 X_i X_i^\top}} \sqrt{\sum_{i} \epsilon^2 Y_i^2 \frac{(X_i^\top \theta)^2}{(1 + \Delta)^2}} \\
        &\le \frac{\epsilon \enorm{\theta}}{1 + \Delta} \opnorm{\frac{1}{n} \sum_i Y_i^2 X_i X_i^\top} \le 2\epsilon \enorm{\theta},
    \end{align*}
    Finally, we can use Lemma \ref{lemma:concentration_higher_order} to get $(a) \le O(\epsilon \enorm{\theta})$.
    
    For the left two terms, (b) is bounded with applying the Lemma \ref{lemma:uniform_concentration_mlr} by plugging $\theta \leftarrow \theta / (1 + \Delta)$. (c) is bounded by the concentration of $\hat{\Sigma}_n$ in Lemma \ref{lemma:concentration_subexp} and the fact $\enorm{\popopind(\theta)} \le \enorm{\theta}$ from \eqref{eq:em_converge_pop_ind}. The rest of the steps follow the same argument as in the case for known variances (see Appendix \ref{subsec:full_proof_low_SNR}). This conclude the Theorem \ref{theorem:algebraic_independent}.

\end{appendix}

\end{document}